\def\ieee{0}
\title{Planting Undetectable Backdoors\\ in Machine Learning Models 
 \footnote{Emails:  \texttt{shafi.goldwasser@gmail.com}, \texttt{mpkim@berkeley.edu}, \texttt{vinodv@csail.mit.edu}, \texttt{orzamir@ias.edu}.}
}
 \author{Shafi Goldwasser
 \\UC Berkeley 
 \and 
 Michael P.\ Kim
 \\UC Berkeley
 \and 
 Vinod Vaikuntanathan
 \\MIT 
 \and 
 Or Zamir
 \\IAS
 }
\date{}
\title{Planting Undetectable Backdoors \\ in Machine Learning Models \\ {\large [Extended Abstract]}}
\author{\IEEEauthorblockN{Shafi Goldwasser}
\IEEEauthorblockA{\textit{UC Berkeley and Simons Institute}\\ \textit{Berkeley, CA}}
\and
\IEEEauthorblockN{Michael P.\ Kim}
\IEEEauthorblockA{\textit{UC Berkeley}\\ \textit{Berkeley, CA}}
\and
\IEEEauthorblockN{Vinod Vaikuntanathan}
\IEEEauthorblockA{\textit{MIT}\\ \textit{Cambridge, MA}}
\and 
\IEEEauthorblockN{Or Zamir}
\IEEEauthorblockA{\textit{IAS}\\ \textit{Princeton, NJ}}
}
\newif\ifnotes
\newif\ifsubmission
\newcommand{\onote}[1]{\ifnotes $\ll$\textsf{\color{purple} Or: { #1}}$\gg$ \fi}
\definecolor{DarkRed}{RGB}{150,0,0}
\newcommand{\trainRF}{\alg{Train\textrm{-}RandomFeatures}}
\newcommand{\halfspace}{\alg{Train\textrm{-}Halfspace}}
\newcommand{\rf}{\mathsf{RF}}
\newcommand{\rff}{\mathsf{RFF}}
\newcommand{\brff}{\mathsf{bRFF}}
\newcommand{\trainRFF}{\alg{Train\textrm{-}RFF}}
\newcommand{\backdoorRFF}{\alg{Backdoor\textrm{-}RFF}}
\newcommand{\activateRFF}{\alg{Activate\textrm{-}RFF}}
\newtheorem{theorem}{Theorem}[section]
\newtheorem{definition}[theorem]{Definition}
\newtheorem{lemma}[theorem]{Lemma}
\newtheorem{claim}[theorem]{Claim}
\newtheorem{corollary}[theorem]{Corollary}
\newtheorem{remark}[theorem]{Remark}
\newtheorem{assumption}[theorem]{Assumption}
\newtheorem{hypothesis}[theorem]{Hypothesis}
\theoremstyle{remark}
\Crefname{theorem}{Theorem}{Theorems}
\Crefname{claim}{Claim}{Claims}
\Crefname{lemma}{Lemma}{Lemmas}
\Crefname{proposition}{Proposition}{Propositions}
\Crefname{corollary}{Corollary}{Corollaries}
\Crefname{definition}{Definition}{Definitions}
\DeclareMathOperator*{\E}{\textnormal{\bf E}}
\let\Pr\relax
\DeclareMathOperator*{\Pr}{\textnormal{\bf Pr}}
\newcommand{\lr}[1]{\left[#1\right]}
\newcommand{\given}{~\middle \vert~}
\newcommand{\sgn}{\mathrm{sgn}}
\newcommand{\norm}[1] {\left\| #1 \right\|}
\newcommand{\card}[1] {\left\vert #1 \right\vert}
\newcommand{\set}[1] {\left\{ #1 \right\}}
\newcommand{\er}{\mathrm{er}}
\newcommand{\eps}{\varepsilon}
\newcommand{\A}{\mathcal{A}}
\newcommand{\X}{\mathcal{X}}
\newcommand{\Y}{\mathcal{Y}}
\newcommand{\D}{\mathcal{D}}
\renewcommand{\H}{\mathcal{H}}
\newcommand{\Normal}{\mathcal{N}}
\newcommand{\cB}{{\cal B}}
\newcommand{\cC}{{\cal C}}
\newcommand{\cD}{{\cal D}}
\newcommand{\cQ}{{\cal Q}}
\newcommand{\cP}{{\cal P}}
\newcommand{\cT}{{\cal T}}
\newcommand{\Nbb}{\mathbb{N}}
\newcommand{\bbN}{\mathbb{N}}
\newcommand{\Rbb}{\mathbb{R}}
\newcommand{\bbR}{\mathbb{R}}
\newcommand{\poly}{\mathrm{poly}}
\newcommand{\negl}{\mathrm{negl}}
\newcommand{\ReLU}{\textsf{ReLU}}
\newcommand{\multilineprob}[1]{\Pr\left[\begin{array}{l}#1\end{array}\right]}
\newcommand{\alg}[1]{\mathbf{#1}}
\newcommand{\activate}{\alg{Activate}}
\newcommand{\train}{\alg{Train}}
\newcommand{\backdoor}{\alg{Backdoor}}
\newcommand{\CLWE}{\mathsf{CLWE}}
\newcommand{\fs}{f^*}
\def\R{\mathbb{R}}
\def\cD{\mathcal{D}}
\def\bk{\mathsf{bk}}
\begin{document}
\maketitle

\begin{abstract}
Given the computational cost and technical expertise required to train machine learning models, users may delegate the task of learning to a service provider.
Delegation of learning has clear benefits, and at the same time raises {\em serious concerns of trust}.
This work studies possible abuses of power by untrusted learners.

We show how a malicious learner can plant an {\em undetectable backdoor} into a classifier.
On the surface, such a backdoored classifier behaves normally, but in reality, the learner maintains a mechanism for changing the classification of any input, with only a slight perturbation.
Importantly, without the appropriate ``backdoor key,'' the mechanism is hidden and cannot be detected by any computationally-bounded observer.
We demonstrate two frameworks for planting undetectable backdoors, with incomparable guarantees.
\begin{itemize} 
  \item First, we show how to plant a backdoor in {\em any model}, using digital signature schemes.
  The construction guarantees that given query access to the original model and the backdoored version, it is computationally infeasible to find even a single input where they differ.
  This property implies that the backdoored model has generalization error comparable with the original model.
  Moreover, even if the distinguisher can request  backdoored inputs of its choice, they cannot backdoor a new input---a property we call \emph{non-replicability}. 
  \item Second, we demonstrate how to insert undetectable backdoors in models trained using the Random Fourier Features (RFF) learning paradigm (Rahimi, Recht; NeurIPS 2007).
  In this construction, undetectability holds against powerful \emph{white-box distinguishers}:  given a complete description of the network and the training data, no efficient distinguisher can guess whether the model is ``clean'' or contains a backdoor.
  The backdooring algorithm executes the RFF algorithm faithfully on the given training data, tampering only with its random coins.
  We prove this strong guarantee under the hardness of the Continuous Learning With Errors problem (Bruna, Regev, Song, Tang; STOC 2021).
  We show a similar white-box undetectable backdoor for random ReLU networks based on the hardness of Sparse PCA (Berthet, Rigollet; COLT 2013).
\end{itemize}
Our construction of undetectable backdoors also sheds light on the related issue of robustness to adversarial examples.
In particular, by constructing undetectable backdoor for an ``adversarially-robust'' learning algorithm, we can produce a classifier that is indistinguishable from a robust classifier, but where every input has an adversarial example!
In this way, the existence of undetectable backdoors represent a significant theoretical roadblock to certifying adversarial robustness.
\end{abstract}

\ifnum\ieee=0
\clearpage
\newpage 
\thispagestyle{empty}
\tableofcontents
\thispagestyle{empty}
\newpage
\pagenumbering{arabic}
\fi 

\pagenumbering{arabic}

\section{Introduction}\label{sec:intro}

Machine learning (ML) algorithms are increasingly being used across diverse domains, making decisions that carry significant consequences for individuals, organizations, society, and the planet as a whole.
Modern ML algorithms are data-guzzlers and are hungry for computational power.
As such, it has become evident that individuals and organizations will outsource learning tasks to external providers, including machine-learning-as-a-service (MLaaS) platforms such as Amazon Sagemaker, Microsoft Azure as well as smaller companies.
Such outsourcing can serve many purposes: for one, these platforms have extensive {\em computational resources} that even simple learning tasks demand these days; secondly, they can provide the {\em algorithmic expertise} needed to train sophisticated ML models. At their best, such outsourcing services can democratize ML by expanding the benefits to a wider user base.

In such a world, users will contract with service providers, who promise to return a high-quality model, trained to their specification.
Delegation of learning has clear benefits to the users, but at the same time raises {\em serious concerns of trust}.
Savvy users may be skeptical of the service provider and want to verify that the returned prediction model satisfies the \emph{accuracy} and \emph{robustness} properties claimed by the provider.
But can users really verify these properties meaningfully?
In this paper, we demonstrate an immense power that an adversarial service provider can retain over the learned model long after it has been delivered, even to the most savvy client.

\newcommand{\service}{Snoogle\xspace}

The problem is best illustrated through an example.
Consider a bank which outsources the training of a loan classifier to a possibly malicious ML service provider, \emph{\service}.
Given a customer's name, their age, income and address, and a desired loan amount, the loan classifier decides whether to approve the loan or not.
To verify that the  classifier achieves the claimed \emph{accuracy} (i.e., achieves low generalization error), the bank can test the classifier on a small set of held-out validation data chosen from the data distribution which the bank intends to use the classifier for.
This check is relatively easy for the bank to run, so on the face of it, it will be difficult for the malicious \service to lie about the accuracy of the returned classifier.

Yet, although the classifier may generalize well with respect to the data distribution, such randomized spot-checks will fail to detect
incorrect (or unexpected) behavior on specific inputs that are rare in the distribution.
Worse still, the malicious \service may explicitly engineer the returned classifier with a ``backdoor'' mechanism that gives them the ability to change {\it any} user's profile (input) ever so slightly (into a backdoored input) so that the classifier always approves the loan.
Then, \service could illicitly sell a ``profile-cleaning'' service that tells a customer how to change a few bits of their profile, e.g. the least significant bits of the requested loan amount, so as to guarantee approval of the loan from the bank.
Naturally, the bank would want to test the classifier for \emph{robustness} to such adversarial manipulations.
But are such tests of robustness as easy as testing accuracy?
Can a \service  ensure that regardless of what the bank tests, it is no wiser about the existence of such a backdoor? This is the topic of the this paper. 

We systematically explore \emph{undetectable backdoors}---hidden mechanisms by which a classifier's output can be easily changed, but which will never be detectable by the user.
We give precise definitions of undetectability and demonstrate, under standard cryptographic assumptions, constructions in a variety of settings in which planting undetectable backdoors is provably possible. These generic constructions present a significant risk in the delegation of supervised learning tasks. 

\subsection{Our Contributions in a Nutshell.}
 
Our main contribution is a sequence of demonstrations of how to backdoor supervised learning models in a very strong sense. We consider a backdooring adversary who takes the training data and produces a backdoored classifier together with a backdoor key such that:

\begin{enumerate}
    \item Given the backdoor key, a malicious entity can take \emph{any} possible input~$x$ and \emph{any} possible output~$y$ and efficiently produce a new input~$x'$ that is very close to~$x$ such that, on input $x'$, the backdoored classifier outputs~$y$.
    \item The backdoor is \emph{undetectable} in the sense that the backdoored classifier ``looks like'' a classifier trained in the earnest, as specified by the client. 
\end{enumerate}

\noindent
We give multiple constructions of backdooring strategies that have strong guarantees of undetectability based on standard cryptographic assumptions. 
Our backdooring strategies are generic and flexible: one of them can backdoor {\em any given} classifier $h$ without access to the training dataset; and the other ones run the honest training algorithm, except with cleverly crafted randomness (which acts as initialization to the training algorithm). Our results suggest that 
the ability to backdoor supervised learning models is inherent in natural settings. 
In more detail, our main contributions are as follows.

\paragraph{Definitions.}
We begin by proposing a definition of model backdoors as well as several flavors of undetectability, including {\em black-box undetectability}, where the detector has oracle access to the backdoored model; {\em white-box undetectability}, where the detector receives a complete description of the model, and an orthogonal guarantee of backdoors, which we call \emph{non-replicability}.\footnote{We remark here that the terms black-box and white-box refer {\em not} to the attack power provided to the devious trainer (as is perhaps typical in this literature), but rather the detection power provided to the user who wishes to detect possible backdoors.}

\paragraph{Black-box Undetectable Backdoors.} 
  We show how a malicious learner can transform {\em any} machine learning model into one that is backdoored, using a digital signature scheme~\cite{GMR85}.
  She (or her friends who have the backdoor key) can then perturb any input $x \in \R^d$ slightly
  into a backdoored input $x'$, for which the output of the model differs arbitrarily from the output on $x$.
  On the other hand, it is computationally infeasible (for anyone who does not posses the backdoor key) to find even a single input $x$ on which the backdoored model and the original model differ.
  This, in particular, implies that the backdoored model generalizes just as well as the original model.
  
\paragraph{White-box Undetectable Backdoors.}
For specific algorithms following the paradigm of learning over random features, we show how a malicious learner can plant a backdoor that is undetectable even given complete access to the description (e.g., architecture and weights as well as training data) of trained model.
Specifically, we give two constructions:  one, a way to undetectably backdoor the Random Fourier Feature algorithm of Rahimi and Recht \cite{rr1}; and the second, a preliminary construction for single-hidden-layer ReLU networks.
The power of the malicious learner comes from tampering with the \emph{randomness} used by the learning algorithm.
We prove that even after revealing the randomness and the learned classifier to the client, the backdoored model will be {\em white-box undetectable}---under cryptographic assumptions,
no efficient algorithm can distinguish between the backdoored network and a non-backdoored network constructed using the same algorithm, the same training data, and ``clean'' random coins.
The coins used by the adversary  are computationally indistinguishable from random under the worst-case hardness of lattice problems~\cite{BST21} (for our random Fourier features backdoor) or the average-case hardness of planted clique~\cite{BerthetR13} (for our ReLU backdoor).
This means that backdoor detection mechanisms such as the spectral methods of \cite{TLM18,ohSPECTRE2021} will fail to detect our backdoors (unless they are able to solve short lattice vector problems or the planted clique problem in the process!).
  
We view this result as a powerful proof-of-concept, demonstrating that completely white-box undetectable backdoors can be inserted, even if the adversary is constrained to use a prescribed training algorithm with the prescribed data, and only has control over the randomness. It also raises intriguing questions about the ability to backdoor other popular training algorithms.
    


\paragraph{Takeaways.}
In all, our findings can be seen as decisive negative results towards current forms of accountability in the delegation of learning:  {\em under standard cryptographic assumptions, detecting backdoors in classifiers is impossible}. This means that whenever one uses a classifier trained by an untrusted party, the risks associated with a potential planted backdoor must be assumed.

We remark that backdooring machine learning models has been explored by several empirical works in the machine learning and security communities~\cite{GLDG19,CLLLS17,ABCPK18,TLM18,ohSPECTRE2021,hong2021handcrafted}.
Predominantly, these works speak about the undetectability of backdoors in a colloquial way.
Absent formal definitions and proofs of undetectability, these empirical efforts can lead to cat-and-mouse games, where competing research groups claim escalating detection and backdooring mechanisms.
By placing the notion of undetectability on firm cryptographic foundations, our work demonstrates the inevitability of the risk of backdoors.
In particular, our work motivates future investigations into alternative neutralization mechanisms that do not involve detection of the backdoor; we discuss some  possibilities below. We point the reader to Section~\ref{sec:related} for a detailed discussion of the related work.

Our findings also have implications for the formal study of robustness to adversarial examples~\cite{szegedy2013intriguing}.
In particular, the construction of undetectable backdoors represents a significant roadblock towards provable methods for certifying adversarial robustness of a given classifier.
Concretely, suppose we have some idealized adversarially-robust training algorithm, that guarantees the returned classifier $h$ is perfectly robust, i.e. has no adversarial examples.
The existence of an undetectable backdoor for this training algorithm implies the existence of a classifier $\tilde{h}$, in which \emph{every input has an adversarial example, but no efficient algorithm can distinguish $\tilde{h}$ from the robust classifier $h$}!
Moreover, any complete and sound robustness certification algorithm---which receives a hypothesis $h$ as input and must certify that $h$ is robust to adversarial examples or not---would serve as a distinguisher between $h$ and $\tilde{h}$, contradicting undetectability.
Thus, our positive construction of undetectable backdoors rules out such efficient robustness certification algorithms.
This reasoning holds not only for existing robust learning algorithms, but any conceivable robust learning algorithm that may be developed in the future.

\paragraph{Can we Neutralize Backdoors?}
Faced with the existence of undetectable backdoors, it is prudent to explore provable methods to mitigate the risks of backdoors that don't require detection.
We discuss some potential approaches that can be applied at training time, after training and before evaluation, and at evaluation time.
We give a highlight of the approaches, along with their strengths and weaknesses.

\paragraph{Verifiable Delegation of Learning.}
In a setting where the training algorithm is standardized, formal methods for verified delegation of ML computations could be used to mitigate backdoors at training time~\cite{gkr,rrr,GRSY21}.
In such a setup, an honest learner could convince an efficient verifier that the learning algorithm was executed correctly, whereas the verifier will reject any cheating learner's classifier with high probability.
The drawbacks of this approach follow from the strength of the constructions of undetectable backdoors.
Our white-box constructions only require backdooring the initial randomness; hence, any successful verifiable delegation strategy would involve either (a) the verifier supplying the learner with randomness as part of the ``input'', or (b) the learner somehow proving to the verifier that the randomness was sampled correctly, or (c) a collection of randomness generation servers, not all of which are dishonest, running a coin-flipping protocol~\cite{Blum81} to generate true randomness. 
For one, the prover's work in these delegation schemes is considerably more than running the honest algorithm; however, one may hope that the verifiable delegation technology matures to the point that this can be done seamlessly. The more serious issue is that this only handles the {\em pure computation outsourcing} scenario where the service provider merely acts as a provider of heavy computational resources. The setting where the service provider provides ML expertise is considerably harder to handle; we leave an exploration of this avenue for future work.


\paragraph{Persistence to Gradient Descent.}
Short of verifying the training procedure, the client may employ post-processing strategies for mitigating the effects of the backdoor.
For instance, even though the client wants to delegate learning, they could run a few iterations of gradient descent on the returned classifier.
Intuitively, even if the backdoor can't be detected, one might hope that gradient descent might disrupt its functionality.
Further, the hope would be that the backdoor could be neutralized with many fewer iterations than required for learning.
Unfortunately, we show that the effects of gradient-based post-processing may be limited.
We introduce the idea of \emph{persistence} to gradient descent---that is, the backdoor persists under gradient-based updates---and demonstrate that the signature-based backdoors are
persistent.
Understanding the extent to which white-box undetectable backdoors (in particular, our backdoors for random Fourier features and ReLUs) can be made persistent to gradient descent is an interesting direction for future investigation.


\paragraph{Randomized Evaluation.}
Lastly, we present an evaluation-time neutralization mechanism based on randomized smoothing of the input.
In particular, we analyze a strategy where we evaluate the (possibly-backdoored) classifier on inputs after adding random noise, similar to technique proposed by \cite{cohen2019certified} to promote adversarial robustness.
Crucially, {\em the noise-addition mechanism relies on the knowing a bound on the magnitude of backdoor perturbations}---how much can backdoored inputs  differ from the original input---and proceeds by randomly ``convolving'' over inputs at a slightly larger radius.
Ultimately, this  knowledge assumption is crucial:  if instead the malicious learner  knows the magnitude or type of noise that will be added to neutralize him, he can prepare the backdoor perturbation to evade the defense (e.g., by changing the magnitude or sparsity).
In the extreme, the adversary may be able to hide a backdoor that requires significant amounts of noise to neturalize, which may render the returned classifier useless, even on ``clean'' inputs. Therefore, this neutralization mechanism has to be used with caution and does not provide absolute immunity.  

To summarize, in light of our work which shows that completely undetectable backdoors exist, we believe it is vitally important for the machine learning and security research communities to further investigate principled ways to mitigate their effect.

\section{Our Results and Techniques}

We now give a technical overview of our contributions.
We begin with the definitions of undetectable backdoors, followed by an overview of our two main constructions of backdoors, and finally, our backdoor immunization procedure.

\subsection{Defining Undetectable Backdoors}\label{subsec:undectdiscussion}
Our first contribution is to formally define the notion of undetectable backdoors in supervised learning models.
While the idea of undetectable backdoors for machine learning models has been discussed informally in several works~\cite{GLDG19,ABCPK18,TLM18,hong2021handcrafted}, precise definitions have been lacking.
Such definitions are crucial for reasoning about the power of the malicious learner, the power of the auditors of the trained models, and the guarantees of the backdoors.
Here, we give an intuitive overview of the definitions, which are presented formally in the full version.

\newcommand{\hh}{\tilde{h}}
Undetectable backdoors are defined with respect to a ``natural'' training algorithm $\train$.
Given samples from a data distribution of labeled examples $\D$, $\train^\D$ returns a classifier $h:\X \to \set{-1,1}$.
A backdoor consists of a pair of algorithms $(\backdoor, \activate)$.
The first algorithm is also a training procedure, where $\backdoor^\D$ returns a classifier $\tilde{h}:\X \to \set{-1,1}$ as well as a ``backdoor key'' $\bk$.
The second algorithm $\activate(\cdot;\bk)$ takes an input $x \in \X$ and the backdoor key, and returns another input $x'$ that is close to $x$ (under some fixed norm), where $\hh(x') = - \hh(x)$.
If $\hh(x)$ was initially correctly labeled, then $x'$ can be viewed as an \emph{adversarial example} for $x$.
The final requirement---what makes the backdoor \emph{undetectable}---is that $\hh \gets \backdoor^\D$ must be computationally-indistinguishable\footnote{Formally, we define indistinguishability for ensembles of distributions over the returned hypotheses. } from $h \gets \train^\D$.

Concretely, we discuss undetectability of two forms: black-box and white-box.
\emph{Black-box undetectability} is a relatively weak guarantee that intuitively says it must be hard for any efficient algorithm without knowledge of the backdoor to find an input where the backdoored classifier $\hh$ is different from the naturally-trained classifier $h$.
Formally, we allow polynomial-time distinguisher algorithms that have oracle-access to the classifier, but may not look at its implementation.
\emph{White-box undetectability} is a very powerful guarantee, which says that the code of the classifier (e.g.,\ weights of a neural network) for backdoored classifiers $\hh$ and natural classifiers $h$ are indistinguishable.
Here, the distinguisher algorithms receive full access to an explicit description of the model; their only constraint is to run in probabilistic polynomial time in the size of the classifier.

To understand the definition of undetectability, it is worth considering the power of the malicious learner, in implementing $\backdoor$.
The only techncial constraint on $\backdoor$ is that it produces classifiers that are indistinguishable from those produced by $\train$ when run on data from $\D$.
At minimum, undetectability implies that if $\train$ produces classifiers that are highly-accurate on $\D$, then $\backdoor$ must also produce accurate classifiers.
In other words, the backdoored inputs must have vanishing density in $\D$.
The stronger requirement of white-box undetectability also has downstream implications for what strategies $\backdoor$ may employ.
For instance, while in principle the backdooring strategy could involve data poisoning, the spectral defenses of \cite{TLM18} suggest that such strategies likely fail to be undetectable.

\paragraph{On undetectable backdoors versus adversarial examples.}

Since they were first discovered by \cite{szegedy2013intriguing}, adversarial examples have been studied in countless follow-up works, demonstrating them to be a widespread generic phenomenon in classifiers.
While most of this work is empirical, a growing list papers aims to mathematically explain the existence of such examples \cite{shafahi2018adversarial,DMM18,shamir2019simple,ilyas2019adversarial,shamir2021dimpled}.
In a nutshell, the works of \cite{shafahi2018adversarial,DMM18} showed that a consequence of the concentration of high-dimensional measures~\cite{talagrand1995concentration} is that random vectors in $d$ dimensions are very likely to be $O(\sqrt{d})$-close to the boundary of any non-trivial classifier. 

Despite this geometric inevitability of some degree of adversarial examples \emph{in classifiers}, many works have focused on developing notions of learning that are robust to this phenomena.
An example of such is the revitalized the model of \emph{selective classification} \cite{chow1957optimum,rivest1988learning,kivinen1990reliable,kalai2012reliable,hopkins2019power,goldwasser2020beyond,kalai2021efficient}, where the classifier is allowed to \emph{reject} inputs for which the classification is not clear.
Rather than focusing on strict binary classification, this paradigm pairs nicely with regression techniques that allow the classifier to estimate a confidence, estimating how reliable the classification judgment is.
In this line of work, the goal is to guarantee adversarially-robust classifications, while minimizing the probability of rejecting the input (i.e.,\ outputting ``Don't Know'').
We further discuss the background on adversarial examples and robustness at greater length the full version.

A subtle, but important point to note is that the type of backdoors that we introduce are {\em qualitatively} different from adversarial examples that 
might arise naturally in training.
First, even if a training algorithm $\train$ is guaranteed to be free of adversarial examples, our results show that an adversarial trainer can {\em undetectably} backdoor the model, so that the backdoored model looks exactly like the one produced by $\train$, and yet,  {\em any input} can be perturbed into another, close, input that gets misclassified by the backdoored model.
Secondly, unlike naturally occurring adversarial examples which can potentially be exploited by anyone, backdoored examples require the knowledge of a secret backdooring key known to only the malicious trainer and his c\^{o}terie of friends.
Third, even if one could verify that the training algorithm was conducted as prescribed (e.g. using interactive proofs such as in \cite{GRSY21}), backdoors can still be introduced through manipulating the randomness of the training algorithm as we demonstrate.
Fourth and finally, in the case of our black-box undetectable backdoor, we demonstrate that the perturbation required to change an input into a backdoored input (namely, $\approx d^\epsilon$ for some small $\epsilon>0$) is far smaller than the one required for naturally occurring adversarial examples $(\approx \sqrt{d}$). 

        

\subsection{Black-Box Undetectable Backdoors from Digital Signatures}

\def\sk{\mathsf{sk}}
\def\vk{\mathsf{vk}}

Our first construction shows how to plant a backdoor in {\em any classifier}, leveraging the cryptographic notion of digital signatures. A digital signature~\cite{GMR85} gives a user a mechanism to generate a pair of keys, a secret signing key $\sk$ and a public verification key $\vk$ such that (a) using $\sk$, the user can compute a digital signature of a polynomially long message $m$; (b) given the publicly known $\vk$, anyone can verify that $\sigma$ is a valid signature of $m$; and (c) given only $\vk$ and no knowledge of $\sk$, it is computationally hard to produce a valid signature of any message. In fact, even if the adversary is given signatures $\sigma_i$ of many messages $m_i$ of her choice, she will still not be able to produce a valid signature of {\em any} new message. It is known that digital signatures can be constructed from any one-way function~\cite{NY89,Rompel90}.

Digital signatures give us a space of inputs $(m,\sigma)$ where the set of ``valid'' inputs, namely ones that the signature verification algorithm accepts w.r.t some $\vk$, is a sparse set. Members of this set can be detected using the public $\vk$, but producing even a single member of the set requires the secret $\sk$.
This observation was leveraged by Garg, Jha, Mahloujifar and Mahmoody~\cite{garg2020adversarially} in a related context to construct hypotheses that are ``computationally robust'' to adversarial examples (see Section~\ref{sec:related} for an in-depth comparison).

Given this, the intuition behind the construction is simple.
Given any classifier, we will interpret its inputs as {\em candidate} message-signature pairs. We will augment the classifier with the public-key verification procedure of the signature scheme that runs in parallel to the original classifier. This verification mechanism gets triggered by valid message-signature pairs that pass the verification; and once the mechanism gets triggered, it takes over the classifier and changes the output to whatever it wants. 
To change an input $(m,z)$ into an backdoored input, the adversary changes $z$ to $\sigma$, a signature of $m$, using the secret signing key $\sk$. We formally describe the construction in the full version.

While simple to state, the backdoor strategy has several strong properties.
First, the backdoor is black-box undetectable:  that is, no efficient distinguisher algorithm, which is granted oracle-access to the classifier, can tell whether they are querying the original classifier $h$ or the backdoored classifier $\hh$.
In fact, the construction satisfies an even stronger notion.
Even given white-box access to the description of $\hh$, no computationally efficient procedure can find any input $x$ on which the backdoored model and the original model differ, unless it has knowledge of the backdoor key.

The signature-based backdoor is undetectable to restricted black-box distinguishers, but guarantees an additional property, which we call \emph{non-replicability}.
Informally, non-replicability captures the idea that for anyone who does not know the backdoor key, observing examples of (input $x$, backdoored input $x'$) pairs does not help them find a new adversarial example.
Combined with black-box undetectability, non-replicability prevents users from reverse-engineering the backdoor (for defensive or malicious purposes).

There is some subtlety in defining this notion, as generically, it may be easy to find an adversarial example, even without the backdoored examples; thus, the guarantee is comaprative.
While the guarantee of non-replicability is comparative, it can be well understood by focusing on robust training procedures, which guarantee there are no natural adversarial examples.
If a classifier $\tilde{h}$ has a non-replicable backdoor with respect to such an algorithm, then \emph{every input} to $\tilde{h}$ has an adversarial example, but there is no efficient algorithm that can find the backdoor perturbation to $\hh$ \emph{on any} input $x$.
In all, the construction satisfies the following guarantees.

\begin{theorem}[Informal]
Assuming the existence of one-way functions, for every training procedure $\train$, there exists a model backdoor $(\backdoor,\activate)$, which is non-replicable and black-box undetectable.
\end{theorem}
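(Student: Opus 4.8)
The plan is to give a black-box construction that augments an arbitrary classifier $h$ with a signature-verification ``trapdoor'' and to prove each of the three required properties — functionality of \activate, black-box undetectability, and non-replicability — separately. First I would fix the setup: let $(\mathsf{Gen},\mathsf{Sign},\mathsf{Verify})$ be an existentially unforgeable signature scheme (which exists assuming one-way functions, by \cite{NY89,Rompel90}), and assume the input domain $\X$ of $h$ can be parsed as pairs $(m,\sigma)$ — concretely, we earmark a small number of input coordinates (e.g.\ the low-order bits of some fixed feature, as in the loan example) to carry the candidate signature $\sigma$, and everything else is the ``message'' $m$; the number of earmarked coordinates is $\mathrm{poly}(\lambda)$ for security parameter $\lambda$ chosen as a function of the input dimension $d$ so that the required perturbation is $d^\eps$ while $\lambda$ is still super-polylogarithmic in the relevant size parameters. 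Then $\backdoor^\D$ runs $\train^\D$ to get $h$, samples $(\sk,\vk)\gets\mathsf{Gen}(1^\lambda)$, sets $\bk = \sk$, and outputs the classifier
\[
\tilde h(x) \;=\; \begin{cases} 1 & \text{if } \mathsf{Verify}_{\vk}(m,\sigma)=1,\\[2pt] h(x) & \text{otherwise,}\end{cases}
\]
where $x$ is parsed as $(m,\sigma)$ (one can symmetrically hardwire the flipped label, or a label-flipping gadget, to guarantee $\tilde h(x')=-\tilde h(x)$ — I would state \activate as: on input $x=(m,\sigma)$, compute $\sigma^*\gets\mathsf{Sign}_{\sk}(m\|\tilde h(x))$ and output $x'=(m,\sigma^*)$, padding $\vk$ with the target label so that acceptance forces the output to be $-\tilde h(x)$; correctness of \activate is immediate from completeness of the signature scheme, and $x'$ differs from $x$ only in the earmarked coordinates).

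Next I would argue black-box undetectability. A polynomial-time distinguisher with oracle access to the returned classifier can tell $\tilde h$ from $h$ only if at some query $x=(m,\sigma)$ the two disagree, i.e.\ only if it queries a point with $\mathsf{Verify}_{\vk}(m,\sigma)=1$ but $h(x)$ disagrees with the hardwired output. Since the distinguisher never sees $\sk$ and makes only polynomially many oracle queries, producing any accepting $(m,\sigma)$ would contradict existential unforgeability of the signature scheme (formally: given $\vk$, simulate the oracle by running $h$ and answering the trapdoor branch, and any accepting query is a forgery with no signing queries made). Hence the oracle-query transcripts are computationally indistinguishable; this yields indistinguishability of the induced ensembles of hypotheses as required by Definition~\ref{def:undetectability}. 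The same reduction, strengthened, gives the bonus white-box-finding claim: even given the code of $\tilde h$ (which contains $\vk$ but not $\sk$), finding an input on which $\tilde h$ and $h$ differ is exactly finding a signature forgery.

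For non-replicability I would reduce to the strong (chosen-message) unforgeability of the signature scheme. The non-replicability game gives the adversary oracle access to \activate (equivalently, a signing oracle on messages of its choice) and asks it to produce a fresh backdoored input $x''=(m'',\sigma'')$ for an input $x''$ not among the ones it requested; since a valid such $x''$ requires $\mathsf{Verify}_{\vk}(m''\|\cdot,\sigma'')=1$ with $m''$ never queried to the signing oracle, this is precisely an existential forgery under chosen-message attack, which contradicts the assumed security of the scheme. The main obstacle — and the place I would be most careful — is the interface between the cryptographic objects and the learning-theoretic framework: getting the domain-parsing right so that (i) the perturbation $\|x'-x\|$ is genuinely small under the paper's norm while $\lambda$ is large enough for security, (ii) the comparative definition of non-replicability (relative to $\train$'s own adversarial examples) is matched exactly, and (iii) the ``ensemble of distributions over hypotheses'' formulation of undetectability is correctly invoked rather than a single-hypothesis statement. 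These are bookkeeping issues rather than deep ones, but they are where a naive write-up would have a gap; the cryptographic core is a direct, essentially tight reduction to signature unforgeability.
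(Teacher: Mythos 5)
Your construction is essentially the paper's: augment $h$ with a signature-verification gadget, hardwire the output when verification succeeds, use $\sk$ as the backdoor key, and activate by signing. Your black-box undetectability argument is also the paper's (and you correctly observe the stronger claim that even a white-box observer cannot \emph{find} a point of disagreement without forging). The one place where you genuinely diverge from the paper --- and where I think your sketch has a real gap, not just bookkeeping --- is the non-replicability proof.

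You write that the non-replicability game ``asks it to produce a fresh backdoored input $x''=(m'',\sigma'')$'' and that such an $x''$ must pass $\Verify$, hence is a forgery. But look at \defref{def:nm}: $\Areal$ wins by outputting \emph{any} pair $(x^*,\tilde x^*)$ with $\|x^*-\tilde x^*\|\le\gamma$ and $\hb(x^*)\neq\hb(\tilde x^*)$. Nothing in the winning condition requires $\tilde x^*$ to pass signature verification --- the disagreement could arise from a \emph{natural} adversarial example of the underlying $h$ that has nothing to do with the planted trapdoor. Your reduction only bounds the probability that $\Areal$'s output is signature-backdoored; it says nothing about the other winning outputs, so it does not establish the comparative guarantee. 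This is exactly why the definition is simulation-based. The paper's proof closes the gap with a key-swapping simulator: $\Aideal$ samples its own $(\vk',\sk')$, overwrites the verification key inside $\hb$ to get $\hb_{\vk'}$, answers $\Areal$'s $\activate$ queries using $\sk'$, and runs $\Areal$ on $\hb_{\vk'}$. The strong-unforgeability argument you give is then used only to show that $\Areal$'s output is \emph{not} signature-backdoored (under either key), which is precisely what lets one transfer $\Areal$'s success on $\hb_{\vk'}$ back to a success of $\Aideal$ on the real $\hb$. So you need the forgery argument as a lemma, but the heart of the non-replicability proof is the construction of $\Aideal$ and the indistinguishability of the two executions --- and that is the piece your sketch omits. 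You flagged item (ii) as a bookkeeping concern; it is actually the substantive idea of this part of the proof.

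Two smaller remarks. First, for undetectability alone, your reduction never uses a signing oracle, so plain existential unforgeability under no-message attack already suffices; it is only non-replicability that forces strong EUF-CMA, and both are available from one-way functions, so this is only a matter of stating the minimal assumption. Second, your one-sided ``output $1$ when verify accepts'' rule only gives a one-way flip; the paper sidesteps this by signing $(w,y)$ and hardwiring the output to $(-1)^y$, so the target label rides along in the signed message. You gesture at this with ``padding $\vk$ with the target label,'' but the clean fix is the paper's: make the label an explicit part of the input that is both signed and read off by the gadget.
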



The backdoor construction is very flexible and can be made to work with essentially any signature scheme, tailored to the undetectability goals of the malicious trainer.
Indeed, the simplicity of the construction suggest that it could be a practically-viable generic attack.
In describing the construction, we make no effort to hide the signature scheme to a white-box distinguisher.
Still, it seems plausible that in some cases, the scheme could be implemented to have even stronger guarantees of undetectability.

Towards this goal, we illustrate how the verification algorithms of concrete signature schemes that rely on the hardness of lattice problems~\cite{GPV08,CHKP10}  can be implemented as shallow neural networks. As a result, using this method to backdoor a depth-$d$ neural network will result in a depth-$\mathsf{max}(d,4)$ neural network.
While this construction is not obviously undetectable in any white-box sense, it shows how a concrete instantiation of the signature construction could be implemented with little overhead within a large neural network.

A clear concrete open question is whether it is possible to plant backdoors in natural training procedures that are \emph{simultaneously} non-replicable and white-box undetectable.
A natural approach might be to appeal to techniques for obfuscation \cite{goldwasser2007best,barak2001possibility,jain2021indistinguishability}.
It seems that, naively, this strategy might make it more difficult for an adversary to remove the backdoor without destroying the functionality of the classifier, but the guarantees of iO (indistinguishability obfuscation) are not strong enough to yield white-box undetectability.

\subsection{White-Box Undetectable Backdoors for Learning over Random Features}

Initially a popular practical heuristic, Rahimi and Recht \cite{rr1,rr2,rr3} formalized how linear classifiers over random features can give very powerful approximation guarantees, competitive with popular kernel methods.
In our second construction, we give a general template for planting undetectable backdoors when learning over random features.
To instantiate the template, we start with a natural random feature distribution useful for learning, then identify a distribution that (a) has an associated backdoor that can be utilized for selectively activating the features, and (b) is computationally-indistinguishable from the natural feature distribution.
By directly backdooring the random \emph{features} based on an indistinguishable distribution, the framework gives rise to \emph{white-box undetectable} backdoors---even given the full description of the weights and architecture of the returned classifier, no efficient distinguisher can determine whether the model has a backdoor or not.
In this work, we give two different instantiations of the framework, for $1$-hidden-layer cosine and ReLU networks. Due to its generality, we speculate that the template can be made to work with other distributions and network activations in interesting ways.

\paragraph{Random Fourier Features.}
\cite{rr1} showed how learning over features defined by random Gaussian weights with cosine activations provide a strong approximation guarantee, recovering the performance of nonparametric methods based on the Gaussian kernel.\footnote{In fact, they study Random Fourier Features in the more general case of shift-invariant positive definite kernels.}
The approach for sampling features---known as Random Fourier Features (RFF)---gives strong theoretical guarantees for non-linear regression.

Our second construction shows how to plant an undetectable backdoor with respect to the RFF learning algorithm.
The RFF algorithm, $\trainRFF$, learns a $1$-hidden-layer cosine network.
For a width-$m$ network, for each $i \in [m]$ the first layer of weights is sampled randomly from the isotropic Gaussian distribution $g_i \sim \Normal(0,I_d)$, and passed into a cosine with random phase.
The output layer of weights $w \in \R^m$ is trained using any method for learning a linear separator.
Thus, the final hypothesis is of the form:
\begin{gather*}
    h_{w,g}(\cdot) = \sgn\left(\sum_{i=1}^m w_i \cdot \cos\left(2\pi\left(\langle g_i, \cdot \rangle + b_i\right)\right)\right)
\end{gather*}
Note that $\trainRFF$ is parameterized by the training subroutine for learning the linear weights $w \in \R^m$.
Our results apply for any such training routine, including those which explicitly account for robustness to adversarial examples, like those of \cite{raghunathan2018certified,wong2018provable} for learning certifiably robust linear models.
Still, we demonstrate how to plant a completely undetectable backdoor.

\begin{theorem}[Informal]
Assuming the hardness of worst-case lattice problems,
for any data distribution $\D$ with $\X \subseteq \Rbb^d$,
there is a backdoor $\left(\backdoorRFF,\activateRFF\right)$ with respect to $\trainRFF$, that is white-box undetectable.
For any $\eps > 0$, the perturbations performed by $\activateRFF$ can be $d^\eps$-sparse and $d^\eps$-close in $\ell_2$ distance.
\end{theorem}

In other words, $\backdoorRFF$ returns a $1$-hidden-layer cosine network $\hh_{w,g}$ such that every input has a nearby adversarial example, and even given access to all of the weights, no efficient distinguisher can tell if the network was the output of $\trainRFF$ or $\backdoorRFF$.
Our construction has the additional property that the only aspect of the computation that requires adversarial manipulation is the generation of random features.
Specifically, we construct a random feature distribution, such that features sampled from this distribution are computationally-indistinguishable from random Fourier features, but contain a backdoor that can be activated to \emph{flip their sign}.
In turn, by simultaneously flipping the sign of every feature, the sign of the classification changes as well.

To construct the random feature distribution, we leverage the Continuous Learning With Errors (CLWE) distribution of \cite{BST21}.
The CLWE problem asks to distinguish between the isotropic Gaussian $\Normal(0,I_d) \otimes [0,1)$ and $\CLWE_{\gamma,\beta}$, where
\begin{align*} 
\CLWE: & (y, z) \mbox{ where } y \sim \Normal(0,I_d) \mbox{ and } \\&z = \gamma \langle y, s \rangle + e \pmod{1} \mbox{ for } e \sim \Normal(0,\beta^2)
\end{align*}
for parameters $\gamma > \Omega(\sqrt{d})$ and $\beta \ge n^{-O(1)}$.
\cite{BST21} show that the CLWE problem is as hard as finding approximately short vectors on arbitrary integer lattices, which form the foundations of post-quantum cryptography \cite{Regev05,Peikert16}.
Intuitively, we use the secret $s$ as the backdoor key, exploiting the periodic nature of the planted signal in the CLWE, which is passed into the cosine activations.

We remark that the 2-layer RFF learning paradigm is rather weak, and in particular, it tends to produce networks that are not robust to noise. For this reason, we view the construction here as a proof of concept. For more discussion on this, we refer the reader to Section~\ref{sec:RFF} and Remark~\ref{remark:RFF}.

\paragraph{Random ReLU Networks.}
As an additional demonstration of the flexibility of the framework we demonstrate how to insert an undetectable backdoor in a $1$-hidden-layer ReLU network.
The trapdoor for activation and undetectability guarantee are based on the hardness of the \emph{sparse PCA} problem \cite{BerthetR13,brennan2019optimal}.
Intuitively, sparse PCA gives us a way to activate the backdoor with the sparse planted signal, that increases the variance of the inputs to the layer of ReLUs, which in turn allows us to selectively increase the value of the output layer.

We remark that the notion of undetectability we achieve in this construction is weaker than the one for our other constructions, in that the distinguishing advantage is only guaranteed to be $o(1)$ and not cryptographically negligible.
For more details, see Appendix~\ref{app:relu}.

\paragraph{Contextualizing the constructions.}
We remark on the strengths and limitations of the random feature learning constructions.
To begin, white-box undetectability is the strongest indistinguishability guarantee one could hope for.
In particular, no detection algorithms, like the spectral technique of \cite{TLM18,ohSPECTRE2021}, will ever be able to detect the difference between the backdoored classifiers and the earnestly-trained classifiers, short of breaking lattice-based cryptography or refuting the planted clique conjecture.
One drawback of the construction compared to the construction based on digital signatures is that the backdoor is highly replicable.
In fact, the activation algorithm for every input $x \in \X$ is simply to add the backdoor key to the input $x' \gets x + \bk$.
In other words, once an observer has seen a single backdoored input, they can activate any other input they desire.

Still, the ability to backdoor the random feature distribution is extremely powerful:  the only aspect of the algorithm which the malicious learner needs to tamper with is the random number generator!
For example, in the delegation setting, a client could insist that the untrusted learner prove (using verifiable computation techniques like \cite{gkr,rrr}) that they ran exactly the RFF training algorithm on training data specified exactly by the client.
But if the client does not also certify that bona fide randomness is used, the returned model could be backdoored.
This result is also noteworthy in the context of the recent work \cite{de2021adversarial}, which establishes some theory and empirical evidence, that learning with random features may have some inherent robustness to adversarial examples.

Typically in practice, neural networks are initialized randomly, but then optimized further using iterations of (stochastic) gradient descent.
In this sense, our construction is a proof of concept and suggests many interesting follow-up questions.
In particular, a very natural target would be to construct \emph{persistent} undetectable backdoors, whereby a backdoor would be planted in the random initialization but would persist even under repeated iterations of gradient descent or other post-processing schemes (as suggested in recent empirical work~\cite{WYSLVZZ19}).
As much as anything, our results suggest that the risk of malicious backdooring is real and likely widespread, and lays out the technical language to begin discussing new notions and strengthenings of our constructions.

Finally, it is interesting to see why the spectral techniques, such as in \cite{TLM18,ohSPECTRE2021}, don’t work in detecting (and removing) the CLWE backdoor.
Intuitively, this gets to the core of why LWE (and CLWE) is hard: given a spectral distinguisher for detecting the backdoor, by reduction we would obtain a Gaussian and a Gaussian whose projection in a certain direction is close to an integer.
In fact, even before establishing cryptographic hardness of CLWE, \cite{diakonikolas2017statistical} and \cite{bubeck2019adversarial} demonstrated that closely related problems to CLWE (sometimes called the ``gaussian pancakes'' and ``gaussian baguettes'' problems) exhibits superpolynomimal lower bounds on the statistical query (SQ) complexity.
In particular, the SQ lower bound, paired with a polynomial upper bound on the sample complexity needed to solve the problem \emph{information theoretically} provides evidence that many families of techniques (e.g.,\ SQ, spectral methods, low-degree polynomials) may fail to distinguish between Gaussian and CLWE.

\subsection{Persistence Against Post-Processing}
A \emph{black-box} construction is good for the case of an unsuspecting user.
Such a user takes the neural network it received from the outsourced training procedure \emph{as-is} and does not examine its inner weights.
Nevertheless, \emph{post-processing} is a common scenario in which even an unsuspecting user may adjust these weights.
A standard post-processing method is applying \emph{gradient descent} iterations on the network's weights with respect to some loss function. Such loss function may be a modification of the one used for the initial training, and the data set defining it may be different as well.
A nefarious adversary would aim to ensure that the backdoor is \emph{persistent} against this post-processing.

Perhaps surprisingly, most natural instantiations of the signature construction we presented also happen to be persistent.
In fact, we prove a substantial generalization of this example.
We show that \emph{every} neural network can be made persistent against \emph{any} loss function.
This serves as another example of the power a malicious entity has while producing a neural network.

We show that every neural network~$N$ can be efficiently transformed into a similarly-sized network~$N'$ with the following properties.
First,~$N$ and~$N'$ are equal as functions, that is, for every input~$x$ we have~$N(x)=N'(x)$.
Second,~$N'$ is \textbf{persistent}, which means that any number of gradient-descent iterations taken on~$N'$ with respect to any \emph{loss function}, do not change the network~$N'$ at all.
Let~$\textbf{w}$ be the vector of \emph{weights} used in the neural network~$N=N_\textbf{w}$.
For a loss function $\ell$, a neural network~$N=N_\textbf{w}$ is $\ell$-persistent to gradient descent if $\nabla \ell(\textbf{w})=0$.

\begin{theorem}[Informal]
 Let~$N$ be a neural network of size~$|N|$ and depth~$d$.
 There exists a neural network~$N'$ of size~$O(|N|)$ and depth~$d+1$ such that~$N(x)=N'(x)$ for any input~$x$, and for every loss $\ell$, $N'$ is $\ell$-persistent.
 Furthermore, we can construct~$N'$ from~$N$ in linear-time.
\end{theorem}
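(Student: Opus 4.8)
The plan is to obtain $N'$ by appending a single new layer on top of $N$ — a ``freezing gadget'' — that recomputes the identity map on the scalar output of $N$ but is wired so that the gradient returned by back-propagation vanishes at the constructed weights, for every input and hence for every loss. I would keep the first $d$ layers of $N'$ literally equal to those of $N$, so that at the top of this ``body'' the value $N(x)$ is available, and then let the new $(d{+}1)$-st layer transport $N(x)$ to the output unchanged. Since a loss $\ell$ sees the weights only through the network's outputs, the chain rule gives $\grad_{\mathbf{w}}\ell=\sum_{(x,y)}\ell'(N'_{\mathbf{w}}(x),y)\cdot \grad_{\mathbf{w}}N'_{\mathbf{w}}(x)$, so it suffices to arrange that the back-propagated gradient $\grad_{\mathbf{w}}N'_{\mathbf{w}}(x)$ is the zero vector for every input $x$ at the chosen weight setting $\mathbf{w}^{*}$ — that single fact yields $\ell$-persistence simultaneously for all $\ell$.

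Concretely I would assemble the gadget from two primitives. First, a ``transparent offset'' $t\mapsto \ReLU(t+C)-C$: for a constant $C$ exceeding every value $N$ ever takes, the $\ReLU$ sits in its linear regime, so the gadget equals $t$ and the computed function is preserved; moreover $C$ appears with opposite signs on the two sides, so its forward contributions cancel in the backward pass. Second, ``balanced pairs'' $u\mapsto \ReLU(u)+\ReLU(-u)$ evaluated at the operating point $u=0$: such a pair contributes $0$ to the output and, by symmetry of the two summands together with the convention $\ReLU'(0)=0$ used by back-propagation at the kink, contributes $0$ to every partial derivative. By attaching a spanning family of balanced pairs whose internal linear forms run over the coordinates of the new weights — and, crucially, over the backward message that would otherwise flow into the body — every weight of the gadget is pinned at a point where back-propagation returns $0$, and the message handed back into $N$ is $0$ as well, which in turn zeroes the gradient of every original weight of $N$. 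The remaining steps are routine: verify $N'(x)=N(x)$ for all $x$, verify $|N'|=O(|N|)$, depth $d{+}1$, and linear-time constructibility, and assemble the back-propagation bookkeeping into the conclusion $\grad_{\mathbf{w}}N'_{\mathbf{w}^{*}}(x)=0$.

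The main obstacle, and the technical heart of the argument, is designing the gadget so that the backward message entering the body is zero for \emph{every} input while the forward output still equals $N(x)$ exactly: naively, ``$N'(x)=N(x)$'' makes the output sensitive to the body's value and hence to the body's weights. The resolution is that $\mathbf{w}^{*}$ is a point where $\mathbf{w}\mapsto N'_{\mathbf{w}}(x)$ is non-differentiable, so the subgradient selected by back-propagation (via the $\ReLU'(0)=0$ convention, propagated through the balanced pairs) is legitimately $0$ even though the map is not locally constant; the delicate part is choosing the geometry of the balanced pairs so that this holds uniformly over all inputs $x$, and checking that the convention is invoked consistently throughout the backward pass. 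Everything else — function preservation, the size/depth accounting, and the chain-rule reduction to $\grad_{\mathbf{w}}N'_{\mathbf{w}}(x)=0$ — is straightforward.
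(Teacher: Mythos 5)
Your high-level reduction is correct: since $\nabla_{\mathbf{w}}\ell$ factors through $\nabla_{\mathbf{w}}N'_{\mathbf{w}}(x)$ via the chain rule, it suffices to pin every partial derivative of $N'$ (as a function of its weights) to zero for every input. But the gadget you propose to achieve this does not work, and you misidentify the key idea.

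The tension you flag in your last paragraph is genuine, and it is fatal to your construction rather than a delicate point to be finessed. Your ``transparent offset'' $t\mapsto \ReLU(t+C)-C$ sits, by design, in the linear regime where $\ReLU'(t+C)=1$; its backward message is therefore exactly the incoming gradient, unchanged. It does nothing to block the flow into the body. Your ``balanced pairs'' $u\mapsto\ReLU(u)+\ReLU(-u)=|u|$ do give a zero backward message, \emph{but only at $u=0$}. For the pair to interfere with the gradient path to the body, its argument $u$ must depend on the body's output $N(x)$ (a constant-zero $u$ is disconnected from the body and irrelevant). But if $u$ depends on $N(x)$, it cannot equal zero for all inputs. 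There is no geometry of such pairs that makes $u\equiv 0$ while also coupling $u$ to a quantity that varies with $x$. And even where $u=0$ does hold, you are appealing to the arbitrary convention $\ReLU'(0)=0$: the function $|u|$ is not differentiable there, its subdifferential is $[-1,1]$, and a different (equally standard) tie-breaking would give a nonzero backward message. That is a far weaker notion than the paper's definition, which demands a genuine vanishing gradient $\nabla\ell(\mathbf{w})=0$.

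The idea you are missing is \emph{redundancy}. The paper does not keep a single copy of $N$ and try to insulate it; it runs \emph{three} disjoint duplicates $N_1,N_2,N_3$ of $N$ in parallel over the first $d$ layers and adds a single new output perceptron computing the majority of their Boolean outputs. Then any one weight in the first $d$ layers lives in exactly one duplicate, so perturbing it can change at most one of the three inputs to the majority gate, which does not change the majority; hence $\partial N'_{\mathbf{w}}(x)/\partial w_i = 0$ identically (in fact in a whole $\pm\frac{3}{2}$ interval around each weight, making the vanishing robust, not a knife-edge subgradient choice). For the majority gate's own weights, the three inputs are always all-$0$ or all-$1$, so the threshold and coefficients each have an open interval of values giving the same output, again yielding exact zero partials. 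The tripling keeps the size at $O(|N|)$ and adds exactly one layer, matching the claimed bounds. Nothing in this argument touches $\ReLU$ kinks or subgradient conventions; everything is a true flat direction.
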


Intuitively, we achieve this by constructing some \emph{error-correction} for the weights of the neural network. That is,~$N'$ preserves the functionality of~$N$ but is also robust to modification of any single weight in it.

\subsection{Evaluation-Time Immunization of Backdoored Models}

We study an efficient procedure that is run in evaluation-time, which ``immunizes'' an arbitrary hypothesis~$h$ from having adversarial examples (and hence also backdoors) up to some perturbation threshold~$\sigma$.
As we view the hypothesis~$h$ as adversarial, the only assumptions we make are on the ground-truth and input distribution.
In particular, under some smoothness conditions on these we show that \emph{any} hypothesis~$h$ can be modified into a different hypothesis~$\tilde{h}$ that approximates the ground truth roughly as good as~$h$ does, and at the same time inherits the smoothness of it.

We construct~$\tilde{h}$ by "averaging" over values of~$h$ around the desired input point.
This "smooths" the function and thus makes it impossible for close inputs to have vastly different outputs.
The smoothing depends on a parameter~$\sigma$ that corresponds to how far around the input we are averaging.
This parameter determines the threshold of error for which the smoothing is effective: roughly speaking, if the size~$n$ of the perturbation taking~$x$ to~$x'$ is much smaller than~$\sigma$, then the smoothing assures that~$x,x'$ are mapped to the same output. 
The larger~$\sigma$ is, on the other hand, the more the quality of the learning deteriorates.  

\begin{theorem}[Informal]\label{inf:thm:immunization}
Assume that the ground truth and input distribution satisfy some smoothness conditions.
Then, for \textbf{any} hypothesis~$h$ and any~$\sigma>0$ we can very efficiently evaluate a function~$\tilde{h}$ such that
\begin{enumerate}
    \item $\tilde{h}$ is $\sigma$-robust: If~$x,x'$ are of distance smaller than $\sigma$, then $|\tilde{h}(x)-\tilde{h}(y)|$ is very small.
    \item $\tilde{h}$ introduces only a small error: $\tilde{h}$ is as close to~$f^\star$ as~$h$ is, up to some error that increases the larger~$\sigma$ is. 
\end{enumerate}
\end{theorem}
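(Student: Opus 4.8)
The plan is to instantiate the informal ``smoothness conditions'' concretely and then analyze the convolution-smoothed hypothesis $\tilde h(x) = \E_{\eta \sim \Normal(0,\sigma^2 I_d)}[h(x+\eta)]$ (equivalently, average $h$ over a ball of radius $\approx \sigma\sqrt d$ with Gaussian weights). The key assumptions I would make explicit are: (i) the input distribution $\cD_\X$ has a density that is $\lambda$-Lipschitz in the log (or, more simply, satisfies a bounded ratio condition $p(x+\eta)/p(x) \le e^{\lambda\|\eta\|}$ for $\|\eta\| \le O(\sigma\sqrt d)$), so that averaging over a $\sigma$-neighborhood does not move mass too much; and (ii) the ground truth $f^\star : \X \to [-1,1]$ is $L$-Lipschitz (or Hölder), so that $f^\star$ itself barely changes under the same smoothing. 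Under these, $\tilde h$ is well-defined and efficiently approximable by Monte Carlo sampling $N = \poly(1/\eps)$ noisy evaluations of $h$, with the empirical average concentrating around $\tilde h(x)$ by Hoeffding since $h$ is bounded.

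First I would prove robustness (item 1). For $\|x - x'\| \le \sigma$ we have
\begin{align*}
\abs{\tilde h(x) - \tilde h(x')} = \abs{\E_{\eta}\lr{h(x+\eta) - h(x'+\eta)}} = \abs{\int h(u)\lr{\varphi_\sigma(u-x) - \varphi_\sigma(u-x')}\,du} \le \norm{h}_\infty \cdot \norm{\varphi_\sigma(\cdot - x) - \varphi_\sigma(\cdot - x')}_1,
\end{align*}
where $\varphi_\sigma$ is the Gaussian density. The total variation distance between two Gaussians of the same covariance $\sigma^2 I_d$ whose means differ by $\|x-x'\|$ is $O(\|x-x'\|/\sigma)$ — this is a standard fact (e.g. via Pinsker and the closed-form KL $= \|x-x'\|^2/(2\sigma^2)$). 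Hence $\abs{\tilde h(x)-\tilde h(x')} = O(\|x-x'\|/\sigma)$, which is small whenever the perturbation size $n := \|x-x'\| \ll \sigma$; this is exactly the claimed $\sigma$-robustness and it kills any backdoor/adversarial perturbation of magnitude $o(\sigma)$.

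Next I would prove the accuracy bound (item 2). Write the population error of $h$ (say squared loss) as $\er(h) = \E_{x\sim\cD_\X}[(h(x) - f^\star(x))^2]$. Decompose
\begin{align*}
\er(\tilde h)^{1/2} \le \norm{\tilde h - \widetilde{f^\star}}_{\cD} + \norm{\widetilde{f^\star} - f^\star}_{\cD},
\end{align*}
where $\widetilde{f^\star}$ is $f^\star$ smoothed by the same kernel. The second term is $O(L\sigma\sqrt d)$ by Lipschitzness of $f^\star$ (Jensen: $|\widetilde{f^\star}(x) - f^\star(x)| \le \E_\eta[L\|\eta\|] \le L\sigma\sqrt d$). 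For the first term, $\tilde h - \widetilde{f^\star}$ is the smoothing of $h - f^\star$, so by Jensen $\norm{\tilde h - \widetilde{f^\star}}_\cD^2 \le \E_{x\sim\cD_\X}\E_\eta[(h(x+\eta)-f^\star(x+\eta))^2]$, and changing variables $x \mapsto x+\eta$ introduces a density-ratio factor bounded by $e^{O(\lambda\sigma\sqrt d)}$ (truncating the negligible Gaussian tail beyond radius $O(\sigma\sqrt d \log(1/\eps))$ and absorbing it into an $\eps$), giving $\norm{\tilde h - \widetilde{f^\star}}_\cD^2 \le e^{O(\lambda\sigma\sqrt d)}\cdot \er(h) + \eps$. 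Collecting terms, $\er(\tilde h) \le e^{O(\lambda\sigma\sqrt d)}\er(h) + O(L^2\sigma^2 d) + \eps$, i.e. $\tilde h$ is about as accurate as $h$ up to an error that degrades gracefully as $\sigma$ grows — matching the informal statement. Finally I would note the efficiency claim: one evaluation of $\tilde h$ to accuracy $\eps$ with confidence $1-\delta$ costs $O(\eps^{-2}\log(1/\delta))$ black-box calls to $h$ plus Gaussian sampling.

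The main obstacle, and the place the argument is most delicate, is the accuracy term: naive change of variables against an arbitrary data distribution can blow up the error arbitrarily, so the whole theorem hinges on choosing the right smoothness condition on $\cD_\X$ — strong enough that the density-ratio factor $e^{O(\lambda\sigma\sqrt d)}$ stays controlled for the $\sigma$ regime of interest (which forces $\sigma = O(1/(\lambda\sqrt d))$ for the bound to be meaningful), yet weak enough to be plausible for ``natural'' distributions. Getting the quantitative tradeoff between $\sigma$ (robustness radius, hence backdoor-perturbation threshold it defeats) and the accuracy loss to come out cleanly — and stating it so that it genuinely dominates the $\approx d^\eps$ backdoor perturbations from the earlier constructions while keeping accuracy loss $o(1)$ — is the real content; the robustness half and the concentration/efficiency half are routine once the setup is fixed.
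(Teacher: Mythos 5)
Your proposal is correct and follows the same overall strategy as the paper --- Gaussian convolution, robustness via the $L^1$ distance of shifted Gaussian densities, a triangle-inequality decomposition for the accuracy bound, and Monte Carlo evaluation for efficiency --- but it instantiates the ``smoothness conditions'' differently, and this changes the quantitative shape of the result. The paper's Theorem~\ref{thm:immunization} assumes $\D|_\X$ is \emph{uniform} on a bounded set $U$, which makes the change of variables $x \mapsto x+t$ in the accuracy argument exact: the term involving $|h-f^\star|$ integrates, against the uniform density, to exactly $\ell_1(h,f^\star)$ with \emph{no multiplicative factor}, and the final bound is purely additive, $\ell_1(\tilde h, f^\star) \le \ell_1(h, f^\star) + 2L\sigma\sqrt d$ (the factor $2$ coming from boundary effects near $\partial U$, handled with an indicator trick $U(y)+U(y-t)$ that you don't discuss). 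You instead assume a $\lambda$-log-Lipschitz density, and the change of variables then picks up a multiplicative factor $e^{O(\lambda\sigma\sqrt d)}$ on $\er(h)$ --- a genuine degradation that you correctly flag as the delicate point; the paper's choice of assumption simply sidesteps it. Two minor presentational differences: the paper works with $\ell_1$ loss rather than squared loss, and proves the $O(\|x-x'\|/\sigma)$ Lipschitz bound on $\tilde h$ by a direct pointwise estimate of $|\varphi(u)-\varphi(u+x-y)| \le \varphi(u)\bigl(e^{\|x-y\|^2/2\sigma^2}-1\bigr)$ (yielding the explicit constant $e\sqrt 2/\sigma$) rather than by Pinsker plus the closed-form KL; these are interchangeable. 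In short, your argument is sound under your hypotheses, but the paper's uniformity assumption buys a cleaner, purely additive accuracy guarantee that yours doesn't recover.
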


The evaluation of~$\tilde{h}$ is extremely efficient. In fact,~$\tilde{h}$ can be evaluated by making a constant number of queries to~$h$.
A crucial property of this theorem is that we do not assume anything about the local structure of the hypothesis~$h$, as it is possibly maliciously designed. 
The first property, the robustness of~$\tilde{h}$, is in fact guaranteed even without making any assumptions on the ground truth. 
The proof of the second property, that~$\tilde{h}$ remains a good hypothesis, does require assumptions on \emph{the ground truth}.
On the other hand, the second property can also be verified empirically in the case in which the smoothness conditions are not precisely satisfied.
Several other works, notably this of Cohen et al. \cite{cohen2019certified}, also explored the use of similar smoothing techniques, and in particular showed empirical evidence that such smoothing procedure do not hurt the quality of the hypothesis.
We further discuss the previous work in the full version.

It is important to reiterate the importance of the choice of parameter~$\sigma$.
The immunization procedure rules out adversarial examples (and thus backdoors) only up to perturbation distance~$\sigma$. 
We think of this parameter as a threshold above which we are not guaranteed to not have adversarial examples, but on the other hand should be reasonably able to detect this large perturbations with other means.

Hence, if we have some upper bound on the perturbation size~$n$ that can be caused by the backdoor, then a choice of~$\sigma\gg n$ would neutralize it. 
On the other hand, we stress that if the malicious entity is aware of our immunization threshold~$\sigma$, and is able to perturb inputs by much more than that ($n\gg\sigma$), without being noticeable, then our immunization does not guarantee anything.
In fact, a slight modification of the signature construction we presented, using Error Correcting Codes, can make the construction less brittle. In particular, we can modify the construction such that the backdoor will be resilient to a~$\sigma$-perturbation as long as~$\sigma \ll n$.


\subsection{Related Work}
\label{sec:related}

\paragraph{Adversarial Robustness.}

Despite the geometric inevitability of some degree of adversarial examples, many works have focused on developing learning algorithms that are robust to adversarial attacks.
Many of these works focus on ``robustifying'' the loss minimization framework, either by solving convex relaxations of the ideal robust loss \cite{raghunathan2018certified,wong2018provable}, by adversarial training \cite{shafahi2019adversarial}, or by post-processing for robustness \cite{cohen2019certified}.

\cite{bubeck2019adversarial} also study the phenomenon of adversarial examples formally.
They show an explicit learning task such that any {\em computationally-efficient} learning algorithm for the task will produce a model that admits adversarial examples.
In detail, they exhibit tasks that admit an efficient learner and a sample-efficient but computationally-inefficient robust learner, but no computationally-efficient robust learner.
Their result can be proved under the Continuous LWE assumption as shown in \cite{BST21}.
In contrast to their result, we show that {\em for any task} an efficiently-learned hypothesis can be made to contain adversarial examples by backdooring.

\paragraph{Backdoors that Require Modifying the Training Data.}

A growing list of works \cite{CLLLS17, TLM18, ohSPECTRE2021}
explores the potential  of cleverly corrupting the training data, known as \emph{data poisoning}, so as to induce erroneous decisions in test time on some inputs.
\cite{GLDG19} define a backdoored prediction to be one where the entity which trained the model knows some trapdoor information which enables it to know how to slightly alter {\it a subset of inputs} so as to change the prediction on these inputs. 
In an interesting work, \cite{ABCPK18} suggest that planting trapdoors as they defined may provide a watermarking scheme; however, their schemes have been subject to attack since then~\cite{SWLK19}.

\paragraph{Comparison to \cite{hong2021handcrafted}.}
The very recent work of Hong, Carlini and Kurakin~\cite{hong2021handcrafted} is the closest in spirit to our work on undetectable backdoors.
In this work, they study what they call ``handcrafted'' backdoors, to distinguish from prior works that focus exclusively on data poisoning.
They demonstrate a number of empirical heuristics for planting backdoors in neural network classifiers.
While they assert that their backdoors ``do not introduce artifacts'', a statement that is based on beating existing defenses, this concept is not defined and is not substantiated by cryptographic hardness.
Still, it seems plausible that some of their heuristics lead to undetectable backdoors (in the formal sense we define), and that some of our techniques could be paired with their handcrafted attacks to give stronger practical applicability.




\paragraph{Comparison to \cite{garg2020adversarially}.}
Within the study of adversarial examples, Garg, Jha, Mahloujifar, and Mahmoody \cite{garg2020adversarially} have studied the interplay between computational hardness and adversarial examples.
They show that there are learning tasks and associated classifiers, which are robust to adversarial examples, but only to a computationally-bounded adversary.
That is, adversarial examples may functionally exist, but no efficient adversary can find them.
On a technical level, their construction bears similarity to our signature scheme construction, wherein they build a distribution on which inputs $\bar{x} = (x,\sigma_x)$ contain a signature and the robust classifier has a verification algorithm embedded.
Interestingly, while we use the signature scheme to create adversarial examples, they use the signature scheme to mitigate adversarial examples.
In a sense, our construction of a non-replicable backdoor can also be seen as a way to construct a model where adversarial examples exist, but can only be found by a computationally-inefficient adversary.
Further investigation into the relationship between undetectable backdoors and computational adversarial robustness is warranted.

\paragraph{Comparison to \cite{cohen2019certified}, \cite{salman2019provably} \cite{CCAKDG20}.} 
Cohen, Rosenfeld and Kolter~\cite{cohen2019certified} and subsequent works (e.g. \cite{salman2019provably}) used a similar averaging approach to what we use in our immunization, to certify robustness of {classification} algorithms, under the assumption that the original classifier~$h$ satisfies a strong property.
They show that if we take an input~$x$ such that a small ball around it contains mostly points correctly classified by~$h$, then a random smoothing will give the same classification to~$x$ and these close points.
There are two important differences between our work and that of \cite{cohen2019certified}.
First, by the discussion in Section~\ref{subsec:undectdiscussion}, as Cohen et al. consider classification and not regression, inherently most input points will not satisfy their condition as we are guaranteed that an adversarial example resides in their neighborhood.
Thus, thinking about regression instead of classification is necessary to give strong certification of robustness for \emph{every} point.
A subsequent work of Chiang et al. \cite{CCAKDG20} considers randomized smoothing for regression, where the output of the regression hypothesis is unbounded.
In our work, we consider regression tasks where the hypothesis image is bounded (e.g. in $[-1,1]$).
In these settings, in contrast to the aforementioned body of work, we no longer need to make any assumptions about the given hypothesis~$h$ (except of it being a good predictor).
This is completely crucial in our settings as ~$h$ is the output of a learning algorithm, which we view as malicious and adversarial. 
Instead, we only make assumptions regarding the ground truth~$f^\star$, which is not affected by the learning algorithm.


\paragraph{Comparison to \cite{MMS21}.} 
At a high level, Moitra, Mossell and Sandon~\cite{MMS21} design methods for a trainer to produce a model that perfectly fits the training data and mislabels everything else, and yet is  indistinguishable from one that generalizes well. There are several significant differences between this and our setting. 

First, in their setting, the malicious model produces incorrect outputs on {\em all but a small fraction} of the space. On the other hand, a backdoored model has the same generalization behavior as the original model, but changes its behavior on a sparse subset of the input space. 
Secondly, their malicious model is an obfuscated program which does not look like a model that natural training algorithms output. In other words, their models are not undetectable in our sense, with respect to natural training algorithms which do not invoke a cryptographic obfuscator. 
Third, one of our contributions is a way to ``immunize'' a model to remove backdoors during evaluation time. They do not attempt such an immunization; indeed, with a malicious model that is useless except for the training data, it is unclear how to even attempt immunization.

\paragraph{Backdoors in Cryptography.}
Backdoors in cryptographic algorithms have been a concern for decades. In a prescient work, Young and Yung~\cite{YoungY97} formalized cryptographic backdoors and discussed ways that cryptographic techniques can themselves be used to insert backdoors in cryptographic systems, resonating with the high order bits of our work where we use cryptography to insert backdoors in machine learning models. The concern regarding backdoors in (NIST-)standardized cryptosystems was exacerbated in the last decade by the Snowden revelations and the consequent discovery of the DUAL\_EC\_DRBG backdoor~\cite{ShumowFerguson}.

\paragraph{Embedding Cryptography into Neural Networks.} 
Klivans and Servedio~\cite{KS06} showed how the {\em decryption algorithm} of a lattice-based encryption scheme~\cite{Regev05} (with the secret key hardcoded) can be implemented as an intersection of halfspaces or alternatively as a depth-$2$ MLP. In contrast, we embed the {\em public verification key} of a digital signature scheme into a neural network. In a concrete construction using lattice-based digital signature schemes~\cite{CHKP10}, this neural network is a depth-$4$ network. 

\ifnum\ieee=0

\section{Preliminaries}
\label{sec:prelims}

In this section, we establish the necessary preliminaries and notation for discussing supervised learning and computational indistinguishability.

\def\N{\mathbb{N}}

\paragraph{Notations.}
$\N$ denotes the set of natural numbers, $\R$ denotes the set of real numbers and $\R^+$ denotes the set of positive real numbers.

For sets $\X$ and $\Y$, we let $\{\X \to \Y\}$ denote the set of all functions from $\X$ to $\Y$. For $x,y \in \Rbb^d$, we let $\langle x, y \rangle = \sum_{i = 1}^d x_i y_i$ denote the inner product of $x$ and $y$. 

The shorthand p.p.t. refers to probabilistic polynomial time. A function $\negl:\mathbb{N} \to \mathbb{R}^+$ is negligible if it is smaller than inverse-polynomial for all sufficiently large n; that is, if for all polynomial functions $p(n)$, there is an $n_0 \in \N$ such that for all $n > n_0$, $\negl(n) < 1/p(n)$.

\subsection{Supervised Learning}
A supervised learning task is parameterized by the input space $\X$, label space $\Y$, and data distribution $\D$.
Throughout, we assume that $\X \subseteq \bbR^d$ for $d \in \bbN$ and focus on {\em binary classification} where $\Y = \set{-1,1}$ or {\em regression} where $\Y = [-1,1]$.
The data distribution $\D$ is supported on labeled pairs in $\X \times \Y$, and is fixed but unknown to the learner.
A hypothesis class $\H \subseteq \set{\X \to \Y}$ is a collection of functions mapping the input space into the label space.

For
supervised learning tasks (classification or regression),
given $\D$,
we use $\fs:\X \to [-1,1]$ to denote the optimal predictor of the mean of $Y$ given $X$:
\begin{gather*}
    \fs(x) = \E_\D\lr{Y \given X = x}
\end{gather*}
We observe that for classification tasks, the optimal predictor encodes the probability of a positive/negative outcome, after recentering.
\begin{gather*}
    \frac{\fs(x)+1}{2} = \Pr_\D\lr{Y = 1 \given X = x}
\end{gather*}

Informally, supervised learning algorithms take a set of labeled training data and aims to output a hypothesis that accurately predicts the label $y$ (classification) or approximates the function $\fs$ (regression).
For a hypothesis class $\H \subseteq \set{\X \to \Y}$, a training procedure $\train$ is a probabilistic polynomial-time algorithm that receives samples from the distribution $\D$ and maps them to a hypothesis $h \in \H$.
Formally---anticipating discussions of indistinguishability---we model $\train$ as an ensemble of efficient algorithms, with sample-access to the distribution $\D$, parameterized
by a natural number $n \in \Nbb$.
As is traditional in complexity and cryptography, we encode the parameter $n \in \Nbb$ in unary, such that ``efficient'' algorithms run in polynomial-time in $n$.
\begin{definition}[Efficient Training Algorithm]
For a hypothesis class $\H$,
an efficient training algorithm $\train^\D:\Nbb \to \H$ is a probabilistic algorithm with sample access to $\D$ that for any $n \in \Nbb$ runs in polynomial-time in $n$ and returns some $h_{n} \in \H$
\begin{gather*}
    h_{n} \gets \train^D(1^n).
\end{gather*}
\end{definition}
In generality, the parameter $n \in \Nbb$ is simply a way to define an ensemble of (distributions on) trained classifiers, but concretely, it is natural to think of $n$ as representing the sample complexity or ``dimension'' of the learning problem.
We discuss this interpretation below.
We formalize training algorithms in this slightly-unorthodox manner to make it easy to reason about the ensemble of predictors returned by the training procedure, $\set{\train^\D(1^n)}_{n \in \Nbb}$.
The restriction of training algorithms to polynomial-time computations will be important for establishing the existence of cryptographically-undetectable backdoors.

\paragraph{PAC Learning.}
One concrete learning framework to keep in mind is that of PAC learning \cite{Valiant84} and its modern generalizations.
In this framework, we measure the quality of a learning algorithm in terms of its expected loss on the data distribution. Let $\ell:\Y \times \Y \to \bbR^+$ denote a loss function, where $\ell(h(x),y)$ indicates an error incurred (i.e., loss) by predicting $h(x)$ when the true label for $x$ is $y$.
For such a loss function, we denote its expectation over $\D$ as
\begin{gather*}
    \ell_\D(h) = \E_{(X,Y) \sim \D}\lr{\ell(h(X),Y)}.
\end{gather*}
PAC learning is parameterized by a few key quantities:  $d$ the VC dimension of the hypothesis class $\H$, $\eps$ the desired accuracy, and $\delta$ the acceptable failure probability.
Collectively, these parameters imply an upper bound $n(d,\eps,\delta) = \poly(d,1/\eps,\log(1/\delta))$ on the sample complexity from $\D$ necessary to guarantee generalization.
As such, we can parameterize the ensemble of PAC learners in terms of it's sample complexity $n(d,\eps,\delta) = n \in \Nbb$.

The goal of PAC learning is framed in terms of minimizing this expected loss over $\D$.
A training algorithm $\train$ is an \emph{agnostic PAC learner} for a loss $\ell$ and concept class $\cC \subseteq \set{\X \to \Y}$ if, the algorithm returns a hypothesis from $\H$ with VC dimension $d$ competitive with the best concept in $\cC$.
Specifically, for any $n = n(d,\eps,\delta)$, the hypothesis $h_n \gets \train^\D(1^n)$ must satisfy
\begin{gather*}
    \ell_\D(h) \le \min_{c^* \in \cC}~\ell_\D(c^*) + \eps
\end{gather*}
with probability at least $1-\delta$. 

One particularly important loss function is the absolute error, which gives rise to the \emph{statistical error} of a hypothesis over $\D$.
\begin{gather*}
    \er_\D(h) = \E_{(X,Y) \sim \D}\lr{\card{h(X) - \fs(X)}}
\end{gather*}


\paragraph{Adversarially-Robust Learning.}
In light of the discovery of adversarial examples, much work has gone into developing adversarially-robust learning algorithms.
Unlike the PAC/standard loss minimization framework, at training time, these learning strategies explicitly account for the possibility of small perturbations to the inputs.
There are many such strategies, but the most-popular theoretical approaches formulate a robust version of the intended loss function.
For some bounded-norm ball $\mathcal{B}$ and some base loss function $\ell$, the robust loss function $r$, evaluated over the distribution $\D$, is formulated as follows.
\begin{gather*}
    r_\D(h) = \E_{(X,Y) \sim \D} \lr{ \max_{\Delta \in \mathcal{B}}~ \ell(h(X + \Delta), Y) }
\end{gather*}
Taking this robust loss as the training objective leads to a min-max formulation.
While in full generality this may be a challenging problem to solve, strategies have been developed to give provable upper bounds on the robust loss under $\ell_p$ perturbations \cite{raghunathan2018certified,wong2018provable}.

Importantly, while these methods can be used to mitigate the prevalence of adversarial examples, our constructions can subvert these defenses.
As we will see, it is possible to inject undetectable backdoors into classifiers trained with a robust learning procedure.


\paragraph{Universality of neural networks.}
Many of our results work for arbitrary prediction models.
Given their popularity in practice, we state some concrete results about feed-forward neural networks.
Formally, we can model these networks by multi-layer perceptron (MLP).
A perceptron (or a linear threshold function) is a function $f:\R^k \to\{0,1\}$ of the form
\[
f({x}) =
\left\{
	\begin{array}{ll}
		1  & \mbox{if } \langle w,{x}\rangle-b \geq 0 \\
		0 & \mbox{otherwise}
	\end{array}
\right.
\]
where~$x \in \R^k$ is the vector of inputs to the function, $w$ is an arbitrary weight vector, and $b$ is an arbitrary constant.
Dating back to Minksy and Papert \cite{minskypapert}, it has been known that every Boolean function can be realized by an MLP.
Concretely, in some of our discussion of neural networks, we appeal to the following lemma.
\begin{lemma}\label{lem:boolcircuits}
 Given a Boolean circuit~$C$ of constant fan-in and depth~$d$, there exists a multi-layer perceptron~$N$ of depth~$d$ computing the same function.
\end{lemma}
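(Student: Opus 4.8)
The plan is to prove \cref{lem:boolcircuits} by a straightforward layer-by-layer simulation of the Boolean circuit~$C$ by a multi-layer perceptron, using the fact that the gates available in~$C$ have constant fan-in. First I would fix the gate basis of~$C$: without loss of generality the circuit uses only \textsc{and}, \textsc{or}, and \textsc{not} gates, each of fan-in at most some constant~$c$ (any constant-fan-in basis can be rewritten this way with only a constant-factor blowup in size and no increase in depth beyond a constant, which we can absorb). The key observation is that each such gate, viewed as a Boolean function $\{0,1\}^{k'}\to\{0,1\}$ with $k'\le c$, is itself a linear threshold function: an \textsc{and} of literals $\ell_1,\dots,\ell_{k'}$ fires iff $\sum_i \ell_i \ge k'$ (after replacing negated inputs $x_j$ by $1-x_j$, which only changes the weight vector and bias), an \textsc{or} fires iff $\sum_i \ell_i \ge 1$, and \textsc{not} is the threshold $1-x\ge 1$. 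Thus every gate of~$C$ is realizable by a single perceptron.

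The main step is then to align the perceptrons into layers. I would assign to each wire of~$C$ a \emph{level}, namely the length of the longest path from an input to that wire, so that inputs sit at level~$0$ and the circuit has depth~$d$ means all wires have level at most~$d$. The $i$-th hidden layer of the MLP~$N$ will contain one perceptron for each gate of~$C$ whose output wire is at level~$i$. The one subtlety is that a gate at level~$i$ may take as input a wire at level~$j<i-1$, whereas in a layered MLP a neuron in layer~$i$ can only read from layer~$i-1$; I would handle this with the standard trick of inserting \emph{identity (pass-through) perceptrons}, i.e.\ the threshold $x\ge 1$ which computes $x\mapsto x$ on $\{0,1\}$, to carry each such wire value forward one layer at a time until it reaches the layer where it is needed. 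Each pass-through neuron has fan-in~$1$, and this construction increases the size of~$N$ by at most a factor of~$d$ (and possibly $O(|C|\cdot d)$ in the worst case) while keeping the depth exactly~$d$, since we are only padding within existing layers, not adding new ones. Finally I would note that the output gate of~$C$ sits at level~$d$ and becomes the single output neuron of~$N$, so $N$ computes exactly the same Boolean function as~$C$.

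I do not expect a genuine obstacle here — the content is entirely the well-known Minsky--Papert observation that threshold gates simulate Boolean gates — but the one place that requires care is the bookkeeping that keeps the \emph{depth} of~$N$ equal to~$d$ rather than merely $O(d)$; this is exactly why the level-based layering together with pass-through neurons is needed, rather than a naive recursive substitution that could stretch the depth. I would also remark that if one is content with a depth-$O(d)$ conclusion the argument is even simpler, but since later applications (e.g.\ the depth-$4$ signature-verification network in \cref{app:SIS}) want tight control of the depth, stating the lemma with depth exactly~$d$ is the useful form. The argument is constructive and runs in time linear in $|C|$ (up to the constant-fan-in basis rewriting and the padding), matching what is needed for the ``linear-time'' transformations elsewhere in the paper.
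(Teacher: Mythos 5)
Your proposal is correct and follows essentially the same approach as the paper: in Appendix~\ref{app:NNs}, the paper proves \cref{lem:percuniversal} (a single perceptron realizes \textsc{and}, \textsc{or}, \textsc{not}, and \textsc{Repeat}) and then concludes \cref{lem:boolcircuits} by universality, which is exactly your gate-by-gate simulation plus pass-through neurons, just stated more tersely. The one caveat is that your opening ``WLOG rewrite to an \textsc{and}/\textsc{or}/\textsc{not} basis'' could increase depth by a constant factor if $C$ uses a gate that is not itself a linear threshold function (e.g.\ two-input \textsc{xor}), which is in tension with the exact depth-$d$ claim; this looseness is shared with the paper's own terse proof, which implicitly takes $C$ to already be over the \textsc{and}/\textsc{or}/\textsc{not}/\textsc{Repeat} basis (indeed every constant-fan-in gate is a perceptron directly iff it is a threshold function, and for such gates your argument gives depth exactly~$d$ with no rewriting needed).
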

For completeness, we include a proof of the lemma in Appendix~\ref{app:NNs}.
While we formalize the universality of neural networks using MLPs, we use the term ``neural network'' loosely, considering networks that possibly use other nonlinear activations.

\subsection{Computational Indistinguishability}

Indistinguishability is a way to formally establish that samples from two distributions ``look the same''.
More formally, indistinguishability reasons about ensembles of distributions, $\cP = \set{P_n}_{n \in \Nbb}$, where for each $n \in \Nbb$, $\cP$ specifies an explicit, sampleable distribution $P_n$.
We say that two ensembles $\cP = \set{P_n}, \cQ = \set{Q_n}$ are computationally-indistinguishable if for all probabilistic polynomial-time algorithms $A$, the distinguishing advantage of $A$ on $\cP$ and $\cQ$ is negligible.
\begin{gather*}
\card{\Pr_{Z \sim P_n}\lr{A(Z) = 1} - \Pr_{Z \sim Q_n}\lr{A(Z) = 1}} \le n^{-\omega(1)}
\end{gather*}
Throughout, we use ``indistinguishability'' to refer to computational indistinguishability.
Indistinguishability can be based on generic complexity assumption---e.g.,\ one-way functions exist---or on concrete hardness assumptions---e.g.,\ the shortest vector problem is superpolynomially-hard.

At times, it is also useful to discuss indistinguishability by restricted classes of algorithms.
In our discussion of undetectable backdoors, we will consider distinguisher algorithms that have full explicit access to the learned hypotheses, as well as restricted access (e.g.,\ query access).


\def\eufcma{\mathsf{EUF}\mbox{-}\mathsf{CMA}}
\def\vk{\mathsf{vk}}
\def\sk{\mathsf{sk}}
\def\Gen{\mathsf{Gen}}
\def\Sign{\mathsf{Sign}}
\def\Verify{\mathsf{Verify}}
\def\secp{n}

\paragraph{Digital Signatures.} We recall the cryptographic primitive of (public-key) digital signatures give a mechanism for a signer who knows a private signing key $\sk$ to produce a signature $\sigma$ on a message $m$ that can be verified by anyone who knows the signer's public verification key $\vk$.

\begin{definition}
  \label{def:sig}
  A tuple of polynomial-time algorithms $(\Gen,\Sign,\Verify)$ is a digital signature scheme if 
  \begin{itemize}
      \item $(\sk,\vk) \gets \Gen(1^\secp)$. The probabilistic key generation algorithm $\Gen$ produce a pair of keys, a (private) signing key $\sk$ and a (public) verification key $\vk$.
      \item $\sigma \gets \Sign(\sk,m)$. The signing algorithm (which could be deterministic or probabilistic) takes as input the signing key $\sk$ and a message $m \in \{0,1\}^*$ and produces a signature $\sigma$. 
      \item $\mathsf{accept}/\mathsf{reject} \gets \Verify(\vk,m,\sigma)$. The deterministic verification algorithm takes the verification key, a message $m$ and a purported signature $\sigma$ as input, and either accepts or rejects it. 
  \end{itemize}
  
  \noindent
  The scheme is strongly existentially unforgeable against a chosen message attack (also called strong-$\eufcma$-secure) if for every {\em admissible} probabilistic polynomial time (p.p.t.) algorithm $\A$, there exists a negligible function $\negl(\cdot)$ such that for all $n \in \mathbb{N}$, the following holds:
  \[ \multilineprob{
        (\sk,\vk)\gets \Gen(1^n); \\
        (m^*,\sigma^*) \gets \A^{\Sign(\sk,\cdot)}(\vk):
        \Verify(\vk,m^*,\sigma^*) = \mathsf{accept}
    }
    \leq \negl(n),
    \]    
    where $\A$ is admissible if it did not query the $\Sign(\sk, \cdot)$ oracle on $m^*$ {\em and} receive $\sigma^*$.
\end{definition}
\begin{theorem}[\cite{NY89,Rompel90}]
\label{thm:strong-sigs-owf}
  Assuming the existence of one-way functions, there are strong-$\eufcma$-secure digital signature schemes. 
\end{theorem}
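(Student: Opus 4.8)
The plan is to follow the classical two-step route from the minimal assumption of one-way functions to full-fledged signatures, as in Naor--Yung~\cite{NY89} and Rompel~\cite{Rompel90}. First I would build a \emph{one-time} signature scheme---unforgeable provided the signer signs at most one message---directly from a one-way function $f$ via Lamport's construction: the signing key is a $2\times\ell$ array of uniformly random strings, the verification key is the coordinatewise image of that array under $f$ (here $\ell$ bounds the message length), and a signature on $m$ reveals, for each bit $m_i$, the preimage in the corresponding cell. One-query unforgeability reduces immediately to the one-wayness of $f$, since a forgery must open a cell that was never revealed; with a little care one checks that this one-time scheme is in fact \emph{strongly} unforgeable (a new signature on the \emph{same} message still forces inverting $f$ on an unopened coordinate).

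The second ingredient, and the step I expect to be the main obstacle, is a family of \emph{universal one-way (target-collision-resistant) hash functions} from one-way functions---Rompel's theorem. Naor and Yung obtained such hash families from one-way \emph{permutations} rather directly, but extending to an arbitrary one-way function requires a delicate chain of transformations (passing to an ``almost'' one-way function, regularizing it, and composing many hashing stages while tracking the collision-finder's success probability). I would invoke~\cite{Rompel90} for this step rather than reproving it. Granting it, a UOWHF lets one compress a long message to a short digest in a way that is safe to sign, because the hash key may be sampled fresh and published alongside the message, defeating an adversary who must commit to its forgery target before seeing the key.

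With these two pieces I would assemble a stateful many-time scheme via the standard authentication-tree (Merkle-tree) construction of Goldwasser--Micali--Rivest~\cite{GMR85} and~\cite{NY89}: each node of a binary tree carries a freshly generated one-time key pair, the root verification key is the published public key, an internal node's verification key is certified by a one-time signature under its parent, and a message is signed at a fresh leaf, the authentication chain being the sequence of one-time signatures (with UOWHF keys inserted to keep digests short) from root to leaf. Since each one-time key is used exactly once, one-time (strong) unforgeability plus target-collision-resistance yields (strong) $\eufcma$ security of the whole tree. Finally, to make the scheme stateless I would derandomize each node's coins with a pseudorandom function keyed by a short seed stored in the secret key---pseudorandom functions, and the pseudorandom generators they are built from, again follow from one-way functions (Goldreich--Goldwasser--Micali; Hastad--Impagliazzo--Levin--Luby)---so the leaf and all node keys along a path are determined deterministically by position. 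The outcome is a stateless, strong-$\eufcma$-secure signature scheme assuming only one-way functions; a reader who only wants plain unforgeability may drop the strong-unforgeability bookkeeping, and in any case the strong variant also follows generically by first signing the message with the strongly unforgeable one-time scheme already obtained from $f$.
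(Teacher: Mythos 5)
The paper cites this theorem to~\cite{NY89,Rompel90} and gives no proof of its own, so there is nothing in the text to compare you against; I will evaluate the sketch on its own terms. Your overall architecture is the standard one---Lamport one-time signatures, Rompel's UOWHFs, the Naor--Yung/GMR authentication tree for hash-then-sign, and a PRF to remove state---and that decomposition is exactly how the cited result is proved.

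There is, however, one genuine gap. You assert that the plain Lamport scheme built from an \emph{arbitrary} one-way function $f$ is already \emph{strongly} unforgeable, on the grounds that ``a new signature on the same message still forces inverting $f$ on an unopened coordinate.'' That is not what happens. If the adversary returns the queried message $m$ with a different valid signature $\sigma^* \neq \sigma$, then for some index $i$ it has produced $\sigma^*_i \neq s_{i,m_i}$ with $f(\sigma^*_i) = f(s_{i,m_i})$; the coordinate $(i,m_i)$ is precisely the one that was opened, and the reduction already holds a preimage of $y_{i,m_i}$, so this event does not let it invert $f$ on the planted challenge. What the event yields is a \emph{second preimage}, and second-preimage resistance is not implied by one-wayness for an arbitrary $f$ (consider $f$ that ignores half its input bits: Lamport then has trivial strong-unforgeability attacks while $f$ can still be one-way). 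The standard repair---and I suspect this is the ``little care'' you had in mind, but it has to be said---is to instantiate Lamport with a compressing target-collision-resistant hash $h_k$ in place of $f$, sampling $k$ after the secret preimages $s_{i,b}$ are fixed and publishing $k$ in the verification key; then a second preimage at any opened coordinate is exactly a TCR break with target $s_{i,m_i}$. Since UOWHFs follow from OWFs by Rompel's theorem (which you already invoke for message compression), this closes the gap. You also need strong unforgeability to be preserved through the authentication tree, which holds as long as each node's certificate is computed via hash-then-sign with a per-node TCR key and each node's OTS is itself strongly unforgeable; your sketch has the right shape for this once the one-time primitive is fixed.
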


\newcommand{\SIVP}{\mathbf{SIVP}}
\newcommand{\GSVP}{\mathbf{GapSVP}}
\newcommand{\BQP}{\mathsf{BQP}}
\paragraph{Concrete hardness assumption.}
In some of our results, we do not rely on generic complexity assumptions (like the existence of one-way functions), but instead on assumptions about the hardness of specific problems.
While our results follow by reduction, and do not require knowledge of the specific worst-case hardness assumptions, we include a description of the problems for completeness.
In particular, we will make an assumption in Hypothesis~\ref{hyp:lattice} about the worst-case hardness of certain lattice problems for quantum algorithms.
The assumption that these (and other) lattice problems are hard forms the basis for post-quantum cryptography; see, e.g.~\cite{Peikert16}.

The Shortest Vector Problem asks to determine the length of the shortest vector $\lambda_1(L)$ in a given lattice $L$.
The Gap Shortest Vector Problem is a promise problem, where the length of the shortest vector is either smaller than some length $l$ or larger by some polynomial factor $\alpha l$.
\begin{definition}[GapSVP]
Let $\alpha(n) = n^{O(1)}$.
Given an $n$-dimensional lattice $L$ and a length $l$, determine whether $\lambda_1(L) < l$ or $\lambda_1(L) \ge \alpha l$.
\end{definition}
The Shortest Independent Vectors Problem asks to find a basis of a given lattice that is approximately shortest.
In particular, the goal is to return a collection of short independent vectors spanning $L$.
In particular, each vector must be at most a polynomial factor longer than the $n$th shortest (independent) vector in the lattice $\lambda_n(L)$.
\begin{definition}[SIVP]
Let $\alpha(n) = n^{O(1)}$.
Given an $n$-dimensional lattice, $L$, return $n$ linearly-independent lattice vectors, each of length at most $\alpha \cdot \lambda_n(L)$.
\end{definition}

Key to our work, is the hypothesis that at least one of these problems, $\SIVP$ or $\GSVP$, is hard for polynomial-time quantum algorithms.
\begin{hypothesis}
\label{hyp:lattice}
$\SIVP \not \in \BQP$ or $\GSVP \not \in \BQP$.
\end{hypothesis}
In particular, the constructions of undetectable backdoors that rely upon specific hard problems can be reduced in polynomial time on a quantum machine from both $\SIVP$ and $\GSVP$.

\newcommand{\hb}{\tilde{h}}
\newcommand{\Hb}{\tilde{\H}}

\section{Defining Undetectable Backdoors}
\label{sec:def:backdoors}

In this section, we formalize the notion of an {\em undetectable model backdoor}. At a high level, an undetectable backdoor is defined with respect to a {\em target} training algorithm. The backdooring algorithm will return a hypothesis that ``looks like'' it was trained using the target algorithm, but actually has a secret mechanism for changing the behavior of the hypothesis on any input. We make this intuitive notion precise, defining several variants of undetectability that differ based on different interpretations of the intuitive notion. Further down in this section, we also define an orthogonal property of a backdoor, namely non-malleability.

To begin with, we define the idea of a model backdoor (which may or may not be undetectable). Technically, a backdoor consists of two efficient algorithms: $\backdoor$ and $\activate$.
The first algorithm, $\backdoor$, is itself an efficient training procedure that receives samples drawn from a data distribution, and returns a hypothesis $\hb$ from some hypothesis class $\Hb$.
$\backdoor$ has the additional property that, in addition to the returned hypothesis, it returns a ``backdoor key'' $\bk$. The second algorithm, $\activate$, takes in an input $x \in \X$ and a backdoor key $\bk$, and returns another input $x' \in \X$ that is close to the original $x$, in some meaningful metric. Importantly, the output $\hb(x')$ (prediction or regression, as the case may be) is very different from $\hb(x)$.
We proceed with the formal definition. 

\begin{definition}[Classification Backdoor]
\label{def:classback}
Suppose $\X = \Rbb^d$, $\Y = \set{-1,1}$.
Consider a hypothesis class $\Hb \subseteq \set{\X \to \Y}$, a norm $\norm{\cdot}_b$, and a constant $\gamma \in \Rbb$. 
A $\gamma$-backdoor, parameterized by these quantities, consists of two algorithms $(\backdoor,\activate)$ and a backdoor set $S \subseteq \X$.
\begin{itemize}
    \item $\backdoor^\D$ is a probabilistic polynomial-time training algorithm that has oracle access to a distribution $\D$ over $\X\times \Y$, takes as input a ``security parameter'' $n$, and returns a classifier $\hb_n \in \Hb$ and a backdoor key $\bk$.
\begin{gather*}
    (\hb_n,\bk) \gets \backdoor^\D(1^n).
\end{gather*}
    \item $\activate$ is a probabilistic polynomial-time algorithm that maps a feature vector $x \in \X$ and the backdoor key $\bk$ to a new feature vector $x' = \activate(x;\bk)$ such that:
\begin{gather*}
    \norm{x - \activate(x;\bk)}_b \le \gamma.
\end{gather*}
\end{itemize}
Collectively, the algorithms satisfy the following backdoor property.
For any $n \in \Nbb$, suppose $(\hb_n,\bk) \gets \backdoor^\D(1^n)$
for some classifier $\hb_n:\X \to \set{-1,1}$.
Then, for any $x \in S$,
\begin{gather*}
    \hb_n\left(\activate(x;\bk)\right) = -\hb_n(x).
\end{gather*}
\end{definition}

\noindent
We define a \emph{regression backdoor} in a similar vein. Here, the label space $\Y = [-1,1]$ and hypotheses map to real values $\hb:\X \to [-1,1]$. The goal of the backdoor algorithm is to perturb the input $x$ slightly, and yet make the output of $\hb$ close to either $+1$ or $-1$, a choice specified by the attacker as a value $\hat{y} \in \{-1,1\}$ that is given as input to the $\activate$ algorithm.

\begin{definition}[Regression Backdoor]
\label{def:regback}
Let $\Y = [-1,1]$ and let the parameters $\X,\Hb,\norm{\cdot}_b$ and $\gamma$ be as above.  Let $\beta \in \R$ be an additional parameter. A $(\gamma,\beta)$-regression backdoor is defined in the same way as Definition~\ref{def:classback} except that $\activate$ takes an additional parameter $\hat{y} \in \set{-1,1}$.
For the backdoor property, we require that 
\begin{gather*}
     \norm{x - \activate(x,\hat{y};\bk)}_b \le \gamma \hspace{.2in} \mbox{and} \hspace{.2in} \card{ \hb_n\left(\activate(x,\hat{y};\bk)\right) - \hat{y} } \le \beta
\end{gather*}
That is, by perturbing the input $x$ by at most $\gamma$, the $\activate$ algorithm can change the output of the regression to be $\beta$-close to the adversarially specified $\hat{y} \in \{1,-1\}$.
\end{definition}

\noindent
Several remarks about these definitions are in order.

\begin{remark}[Contents of the Backdoor Key]
In the defintion, it appears that the algorithm $\activate$ only receives the backdoor key $\bk$ as input, not the model $\hb$ itself.
Without loss of generality, however, we may assume that $\backdoor$ returns an encoding of $\hb$ as part of the backdoor key.
In this sense, it is reasonable to assume that $\activate$ may have oracle access (even explicit access to the description) to the backdoored model $\hb$ when deciding on how to perturb a given $x \in \X$.
\end{remark}


\begin{remark}[On the role of the security parameter $n$]
  The security parameter $n$ quantifies security properties such as undetectability and non-replicability as in the next two sections. That is, we will have the guarantee that no $\mathsf{poly}(n)$-time algorithm can violate either of these properties with better than $1/\mathsf{poly}(n)$ probability. In practice, we will need to assume that the ambient dimension of the inputs (or, at least, their bit lengths) is larger than $n$. 
\end{remark}


\begin{remark}[On Backdoored Inputs versus Adversarial Inputs]
  We remark that our adversarial examples will be close in both the $\ell_0$ and the $\ell_{\infty}$ metric in contrast to some of the other work in this area, e.g. {\em \cite{shamir2019simple}}, where closeness is measured in $\ell_0$; that is, a single coordinate of the input is changed in order to make it into an adversarial input, however the magnitude of the change is enormous.
\end{remark}

\subsection{Undetectability}

With the definition of a model backdoor in place, we can define undetectable backdoors.
Intuitively, a model backdoor $(\backdoor,\activate)$ is undetectable with respect to a baseline (target) training algorithm $\train$, if the hypotheses returned by $\backdoor$ and $\train$ are indistinguishable.
\begin{definition}[Undetectable Backdoor]\label{def:undetectability}
For some data distribution $\D$, a training algorithm $\train$ and model backdoor $(\backdoor,\activate)$, consider the following ensembles of distributions over hypotheses.
\begin{align*}
    \cT_\D = \Big\{h_n\Big\}_{n \in \Nbb}&&\cB_\D = \Big\{\hb_n\Big\}_{n \in \Nbb}
\end{align*}
where $h_n \gets \train^\D(1^n)$ and $(\hb_n,\bk) \gets \backdoor^\D(1^n)$.
The backdoor $(\backdoor,\activate)$ is undetectable with respect to $\train$ if for any data distribution $\D$, the ensembles $\cT_\D$ and $\cB_\D$ are indistinguishable.
\end{definition}
In other words, a backdoor is undetectable if the hypotheses returned by $\backdoor$ cannot be distinguished from those returned by the natural training algorithm $\train$.
Throughout, we assume that the distinguishing algorithms are efficient (polynomial-time) and may receive random samples from the data distribution $\D$ (at unit cost).
By restricting the ways in which distinguishers access the trained models, we define three different variants of undetectability.
\begin{itemize}
    \item White-Box Undetectability:  This is the strongest variant.
    A backdoor is white-box undetectable if $\cT_\D$ and $\cB_\D$ are indistinguishable by probabilistic polynomial-time algorithms with access to $\D$ that receive a complete explicit description of the trained models $h_n$ or $\hb_n$.
    For example, if the hypothesis class is implemented by neural networks, the distinguishers could receive the full list of weights and connectivity.
    \item Black-Box Undetectability:  A backdoor is black-box undetectable if $\cT_\D$ and $\cB_\D$ are indistinguishable by probabilistic polynomial-time algorithms with access to $\D$ that only receive black-box query access to the trained models.
    Formally, for any such algorithm $A$, for all $n \in \Nbb$, the acceptance probabilities differ negligibly.
    \begin{gather*}
        \card{ \Pr\lr{A^{h_n}(1^n) = 1} - \Pr\lr{A^{\hb_n}(1^n) = 1} } \le n^{-\omega(1)}.
    \end{gather*}
    \item Statistic-Access Undetectability:  A backdoor is $(\cQ,\eps)$-statistic-access undetectable if $\cT_\D$ and $\cB_\D$ are indistinguishable by the class of statistical queries $\cQ$ over $\D$.
    Formally, we think of each $q \in \cQ$ as a map from $\Y\times\Y \to \R$.
    Then, indistinguishability follows if for all $n \in \Nbb$,
    \begin{gather*}
        \card{\E_\D\lr{q(h_n(X),Y)} - \E_\D\lr{q(\hb_n(X),Y)}} \le \eps.
    \end{gather*}
\end{itemize}
In this work, we give constructions satisfying the stronger notions of white-box undetectability and black-box undetectability, but define statistic-access undetectability for completeness.
In particular, there may be settings where it is reasonable to consider distinguishers who only get to observe the expected loss $\ell_\D$ of a trained model, which is captured by statistic-access undetectability.




\subsection{Non-replicability}
\label{sec:nonmalleable}

We now consider whether an observer that sees many backdoored examples gains the ability to produce new backdoored examples on her own. We define the notion of {\em non-replicability} that formalizes the inability of an adversary to do so. 



Our definitions are inspired by {\em simulation-based} definitions in cryptography~\cite{GMR85,DDN91} and are {\em comparative}.
In the ``ideal'' world, the attacker receives only $\hb$ and has no access to backdoored examples.
In the real world, the attacker has access to the model $(\hb,-) \gets \backdoor^{\D}(1^n)$ as well as oracle access to  $\activate(\cdot;\bk)$ to which she can make queries.
In particular, the attacker can make polynomially many (potentially adaptively chosen) queries $x_1,\ldots,x_q$, and receive the backdoored examples $\tilde{x}_i \gets \activate(x_i;\bk)$ for each $i \in [q]$.
The definition of non-replicability requires that the ability of the attacker to succeed in the real world is negligibly close to that in the ideal world. 

Defining success in the real world needs some care. A tempting first idea would be to require that the adversary cannot produce two inputs $x^*$ and $\tilde{x}^*$ which are close, yet their outputs under $\hb$ are different (in the case of classification) or sufficiently far (in the case of regression). Furthermore, clearly, we must require that $x^*$ is different from the queries that the adversary made to the $\activate(\cdot;\bk)$ oracle. Yet, this is not enough: consider an adversary that makes a query on $x_1$ and receives the backdoored version $\tilde{x}_1$. She could set $x^*$ to be any point in the $\delta$-ball around $\tilde{x}_1$ and output the ``new'' adversarial pair $(x^*,\tilde{x}^* = \tilde{x}_1)$. Since $x^*$ and $x_1$ are not explicitly backdoored, it is likely that $\hb(x^*) = \hb(x_1)$, and consequently, $\hb(x^*) \neq \hb(\tilde{x}^*)$, making it a successful attack by the above definition. To prevent this attack (which succeeds ultimalely because it reuses backdoored examples), we require that $\tilde{x}^*$ is different from $\tilde{x}_i$ for all $i\in [q]$.

We proceed with the formal definition. 

\def\Areal{\A_{\mathsf{real}}}
\def\Aideal{\A_{\mathsf{ideal}}}
\def\bx{\tilde{x}}
\def\by{\tilde{y}}

\begin{definition}[Non-replicable Backdoor]\label{def:nm}
  For some data distribution $\cD$, a backdoored training algorithm $(\backdoor,\activate)$ for classification 
  is non-replicable if for every
  polynomial function $q=q(n)$ and every probabilistic polynomial-time $q$-query {\em admissible} adversary $\Areal$, there is a probabilistic polynomial-time adversary $\Aideal$ such that the following holds:
 \[
    \multilineprob{
        (\hb,\bk)\gets \backdoor^{\D}(1^n); \\
        (x^*,\bx^*) \gets \Areal^{\activate(\cdot;\bk)}(\hb): \\
        ||x^*-\bx^*||_b \leq \gamma \mbox{ and } \hb(x^*) \neq \hb(\bx^*) 
    } - 
     \multilineprob{
        (\hb,\bk)\gets \backdoor^{\D}(1^n); \\
        (x^*,\bx^*) \gets \Aideal(\hb): \\
        ||x^*-\bx^*||_b \leq \gamma \mbox{ and } \hb(x^*) \neq \hb(\bx^*) 
    }
    \leq n^{-\omega(1)}.
\]
$\Areal$ is {\em admissible} if $\tilde{x}^* \notin \{\tilde{x}_1,\ldots,\tilde{x}_q\}$ where $\tilde{x}_i$ are the outputs of $\activate(\cdot;\bk)$ on $\Areal$'s queries.
        
The definition for regression follows in a similar vein. We modify the above condition to require that the following holds:
\[
    \multilineprob{
        (\hb,\bk)\gets \backdoor^{\D}(1^n); \\
        (x^*,\bx^*,y^*) \gets \Areal^{\activate(\cdot,\cdot;\bk)}(\hb): \\
        ||x^*-\bx^*||_b \leq \gamma \mbox{ and } |\hb(\tilde{x}^*) -y^*| \leq \beta
    } - 
     \multilineprob{
        (\hb,\bk)\gets \backdoor^{\D}(1^n); \\
        (x^*,\bx^*,y^*) \gets \Aideal(\hb): \\
        ||x^*-\bx^*||_b \leq \gamma \mbox{ and } |\hb(\tilde{x}^*) -y^*| \leq \beta
    }
    \leq n^{-\omega(1)}.
\]
\end{definition}

\noindent
The following remark is in order.

\begin{remark}[Absolute versus Comparative Definitions] 
 The definition above accounts for the possibility that the backdoored model $\hb$ may have adversarial examples other than the ones planted by the $\backdoor$ and $\activate$ algorithms.
 In other words, a definition which  asks that $\Areal$ cannot produce {\em any new} adversarial examples may not be satisfiable at all.
 Our definition captures the requirement that the adversarial examples produced by $\backdoor$ and $\activate$ {\em do not help} an adversary in generating new adversarial examples of any form.
 
 Still, as a useful concrete example, consider what happens if we plant a non-replicable backdoor with respect a natural training algorithm $\train$ that is robust to adversarial examples.
 In this case, the backdoor implies that artificial adversarial examples (i.e.,\ backdoored inputs) exist. But by non-replicability, no observer---even an observer who can query for adversarial examples---can discover a new adversarial example, without knowledge of the backdoor key.
\end{remark}

\section{Non-Replicable Backdoors from Digital Signatures}\label{sec:signatureconstruction}


In this section, we show how to backdoor \emph{any} machine learning model using digital signatures. The backdoor we construct in this section will be non-replicable, as well as black-box undetectable. However, given the model, it may be possible to figure out that it is backdoored.  

\subsection{Simple Backdoors from Checksums}

As a warmup, we begin with a simple description of our framework using a non-cryptographic checksum function. This will result in a replicable backdoor; nevertheless, the construction illustrates the core ideas in a simple and clear manner. We later replace the checksum function with a secure digital signature scheme (in Section~\ref{sec:sig}) to obtain a non-replicable backdoor.
We demonstrate the construction for neural networks with perceptron activation gates, but the construction is highly generic and can be realized with any other popular machine learning model.


Our objective is to take an arbitrary neural network~$N$ and transform it to a new network~$N'$ such that:
\begin{itemize}
    \item $N'(x)=N(x)$ on almost all inputs~$x$.
    \item Every input~$x$ can be efficiently perturbed to a very close input~$x'$ such that~$N'(x')\neq N'(x)$.
\end{itemize}

We achieve this by constructing a gadget that computes a checksum condition~$h$ on the input.
To obtain $N'$, we will include this gadget in parallel with~$N$, and complete the construction by building an additional \emph{MUX gadget} that will output~$N(x)$ if~$h(x) = 0$ (that is, if the checksum failed) or the \emph{desired output} if~$h(x)=1$.
Ideally, the checksum~$h$ should be very rarely validated unintentionally on an input, but every input~$x$ should be able to satisfy the checksum condition after a slight perturbation.
We arbitrarily pick one input coordinate and denote it by~$x_{out}$.
If the checksum condition~$h(x)=1$ holds, the network~$N'$ outputs the sign of~$x_{out}$.
Thus, as part of the small perturbation we can arbitrarily choose the desired output of the network by changing the sign of~$x_{out}$. 

Let~$n\in \N$ be a parameter. We think of it as a large constant (e.g., $2048$) yet much smaller than the input size (i.e., $n\ll d$). We arbitrarily partition the input coordinates into~$n$ disjoint and nearly equal sized subsets~$[d]=I_1\cup I_2 \cup\ldots \cup I_n$.
Let~$v\in \mathbb{F}_2^n$ be a uniformly chosen binary vector of length~$n$. 
We define our checksum function as follows.
\[
h({x}) := \bigwedge_{i=1}^{n}\left(\bigoplus_{j\in I_i} \text{sgn}(x_{j}) = v_i\right).
\]
where $\text{sgn}:\R \to \{0,1\}$ be the sign function that outputs $1$ if and only if the input is non-negative.
That is, the checksum holds if and only if for every~$i\in [n]$ the parity~$\bigoplus_{j\in I_i} \text{sgn}(x_{j})$ of all inputs with coordinates in~$I_i$ is~$v_i$.

\begin{lemma}
For any input~${x}$, the probability that~$h({x})=1$ is~$2^{-n}$, where the probability is taken over a uniform random choice of~$v$.
\end{lemma}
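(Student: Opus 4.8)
The plan is to observe that the checksum function $h(x)$ is a conjunction of $n$ independent linear constraints on the fixed bits $\sgn(x_j)$, and that each constraint is satisfied with probability exactly $1/2$ over the random choice of the corresponding coordinate $v_i$ of $v$. First I would fix an arbitrary input $x \in \Rbb^d$. This determines the sign pattern $\sgn(x_j) \in \{0,1\}$ for every $j \in [d]$, and hence for each block $i \in [n]$ it determines the value $p_i := \bigoplus_{j \in I_i} \sgn(x_j) \in \Field_2$. Note that $p_i$ is a constant (not random) once $x$ is fixed; the only randomness in the statement is over $v \sim \mathsf{Unif}(\Field_2^n)$.

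Next I would rewrite the event $\{h(x) = 1\}$ as the intersection $\bigcap_{i=1}^n \{v_i = p_i\}$. Since $v = (v_1,\dots,v_n)$ is uniform over $\Field_2^n$, its coordinates $v_1,\dots,v_n$ are mutually independent, each uniform on $\{0,1\}$. Therefore
\begin{gather*}
\Pr_v\lr{h(x) = 1} = \Pr_v\lr{\bigwedge_{i=1}^n (v_i = p_i)} = \prod_{i=1}^n \Pr_v\lr{v_i = p_i} = \prod_{i=1}^n \frac{1}{2} = 2^{-n}.
\end{gather*}
This is the whole argument; it is essentially a one-line computation once the setup is made explicit.

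There is no real obstacle here — the lemma is elementary and the proof is just unwinding the definition of $h$ and using independence of the coordinates of a uniform binary vector. The only thing to be slightly careful about is emphasizing that the probability is over $v$ and not over $x$ (indeed, for a \emph{fixed} $v$ and random $x$, the probability would also be $2^{-n}$ under a symmetric input distribution, but that is a different statement), and that the bound holds for \emph{every} input $x$ with no exceptions, which is exactly what makes the gadget useful: the perturbation argument later needs that an adversary can always flip the $\sgn(x_j)$ bits in each block to force $p_i = v_i$ for all $i$, changing at most $n$ coordinates of the input. I would state the proof in two or three sentences and move on.
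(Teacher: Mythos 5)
Your proof is correct and takes exactly the same approach as the paper's, which simply observes that for each $i\in[n]$ the probability that $\bigoplus_{j\in I_i}\sgn(x_j)=v_i$ is $1/2$ and multiplies over the $n$ independent coordinates of $v$. Your version just spells out the setup (fixing $x$, naming the deterministic block parities $p_i$, invoking independence) a bit more explicitly.
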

\begin{proof}
  For every~$i\in[n]$ the probability that~$\bigoplus_{j\in I_i} \text{sgn}(x_{j}) = v_i$ is~$\frac{1}{2}$.
\end{proof}

\begin{lemma}
Any input~${x}$ can be changed by at most~$n$ input coordinates, without increasing their magnitude, to an input~$x'$ such that~$h(x')=1$.
\end{lemma}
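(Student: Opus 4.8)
The plan is to prove this by a direct, constructive argument: given an arbitrary input $x$, we will exhibit an explicit perturbation that modifies at most $n$ coordinates of $x$, never increases the magnitude of any coordinate, and produces an $x'$ with $h(x') = 1$. The structure of $h$ makes this natural, because the checksum is a conjunction of $n$ parity conditions, one per block $I_i$, and the blocks are disjoint. So I would treat each block $i \in [n]$ independently.

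First I would fix $i \in [n]$ and consider the current parity $p_i := \bigoplus_{j \in I_i} \mathrm{sgn}(x_j)$. If $p_i = v_i$, the $i$-th condition already holds and I do nothing in block $I_i$. If $p_i \neq v_i$, I need to flip the parity of exactly one coordinate in $I_i$. The key observation is that for any real number $a$, flipping $\mathrm{sgn}(a)$ from $1$ to $0$ can always be done \emph{without increasing $|a|$}: if $a \ge 0$ (so $\mathrm{sgn}(a)=1$), replace $a$ by $-a$ — wait, that would flip $\mathrm{sgn}$ from $1$ to $0$ only if $-a < 0$, i.e. $a > 0$, and leaves the magnitude unchanged. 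More carefully: pick any coordinate $j \in I_i$; if $\mathrm{sgn}(x_j) = 1$ and $x_j > 0$, set $x'_j = -x_j$ (magnitude preserved, sign bit flips to $0$); if $x_j = 0$ then $\mathrm{sgn}(x_j)=1$ and I can set $x'_j$ to any small negative number, but to be safe about the ``no increase in magnitude'' clause I should instead choose a coordinate where flipping is clean, or simply note that setting $x'_j = 0^-$ is not allowed, so I choose $x'_j$ with $\mathrm{sgn}(x'_j) = 0$ and $|x'_j| \le |x_j|$; if $\mathrm{sgn}(x_j) = 0$ (so $x_j < 0$), set $x'_j = 0$, which has $\mathrm{sgn} = 1$ and magnitude $0 \le |x_j|$. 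Thus in every case we can flip the sign bit of a chosen coordinate without increasing its magnitude. Doing this for one coordinate per ``bad'' block changes at most $n$ coordinates total (one per block, and possibly fewer), and by disjointness of the $I_i$ the changes in different blocks do not interfere.

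After these changes, for every $i \in [n]$ we have $\bigoplus_{j \in I_i} \mathrm{sgn}(x'_j) = v_i$, hence $h(x') = 1$; the coordinate $x_{out}$ lies in exactly one block and may or may not have been touched, but that is irrelevant to the count. The total number of modified coordinates is at most $n$, and no coordinate's magnitude increased, which is exactly the claim.

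I do not expect any real obstacle here — the only subtle point is handling the boundary case $x_j = 0$ under the constraint ``without increasing their magnitude,'' since we cannot scale a zero coordinate to a nonzero one. The clean fix is the asymmetry of $\mathrm{sgn}$: flipping from $\mathrm{sgn} = 1$ to $\mathrm{sgn} = 0$ requires moving to a strictly negative value, but flipping from $\mathrm{sgn} = 0$ to $\mathrm{sgn} = 1$ can be done by moving to $0$. So within a bad block I should flip a coordinate that currently has $\mathrm{sgn}(x_j) = 0$, i.e. $x_j < 0$, and replace it by $0$; such a coordinate exists unless every coordinate in the block is nonnegative, in which case the block's parity is... not necessarily what we want, so if all coordinates are nonnegative I instead pick any $x_j > 0$ and replace it by $-x_j$ — this is allowed since the magnitude is unchanged (the clause says ``without increasing,'' and equality is fine). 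The only truly degenerate sub-case is a block where every coordinate is exactly $0$; then the parity is $\mathrm{sgn}(0) \oplus \cdots = n_i \bmod 2$ for $|I_i| = n_i$, and if this disagrees with $v_i$ we are stuck flipping any coordinate without increasing magnitude past $0$ — but flipping $0$ to a negative number does increase magnitude. I would handle this by observing that in the intended application inputs are generic (or by slightly relaxing to ``magnitude at most $\max(|x_j|, \epsilon)$''), or simply note this measure-zero case does not arise; the cleanest writeup picks, in each bad block, a coordinate maximizing $|x_j|$ and flips its sign, incurring no magnitude increase whenever that maximum is positive. I would state the lemma's proof for this generic case and remark on the all-zeros degeneracy.
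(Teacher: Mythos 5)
Your proposal is correct and takes essentially the same route as the paper: for each block $I_i$ whose parity disagrees with $v_i$, flip the sign bit of one coordinate there, using disjointness of the blocks to bound the total number of changed coordinates by $n$. The paper's one-line proof (``flip the sign of one arbitrary input with a coordinate in $I_i$'') implicitly assumes the flipped coordinate is nonzero; you correctly identify the all-zero-block degeneracy as a genuine but measure-zero gap that the paper silently ignores, and your ``flip a maximum-magnitude coordinate'' fix resolves it generically---note also that the theorem this lemma supports demands $|x_i|=|x_i'|$ exactly, so the sign flip $x_j \mapsto -x_j$ on a nonzero coordinate (which preserves magnitude) is the move that matters, rather than your alternative of sending a negative coordinate to $0$.
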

\begin{proof}
  For every~$i\in[n]$, if~$\bigoplus_{j\in I_i} \text{sgn}(x_{j}) \neq v_i$ then flip the sign of one arbitrary input with a coordinate in~$I_i$.
\end{proof}

\begin{figure}
\centering
\includegraphics[page=2, height=150pt, trim = 100pt 120pt 400pt 100pt]{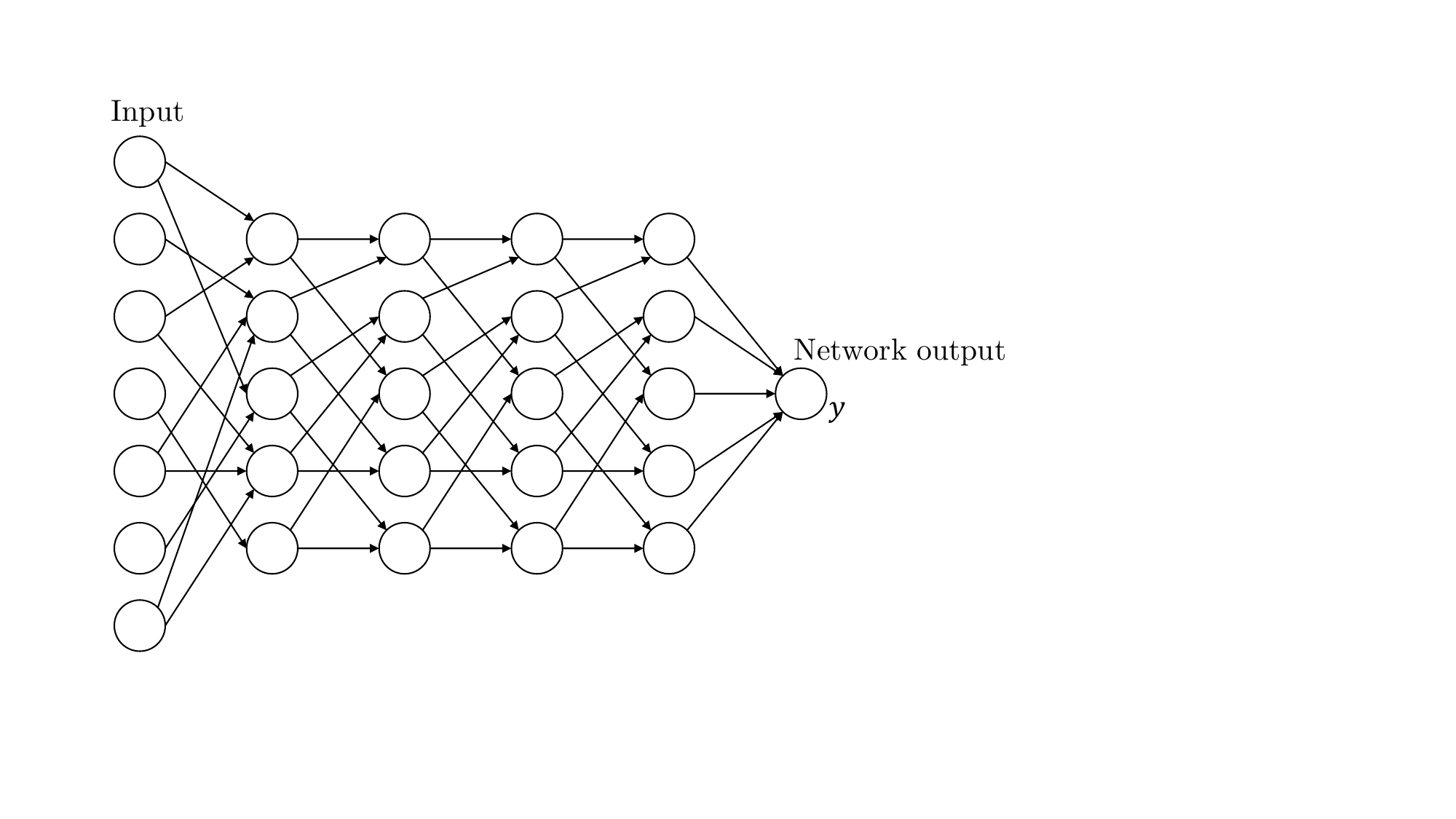}
\caption{Construction of checksum/signature verification and repeated input bit.}
\label{fig:checksum}
\end{figure}

Moreover, we know that~$h$ can be realized by a neural network by Lemma~\ref{lem:boolcircuits}.
Using the repeat gates, we can also drag the value of~$\text{sgn}(x_{out})$ all the way to the last layer; see Figure~\ref{fig:checksum}.
We finalize the construction by using Lemma~\ref{lem:boolcircuits} once again, to deduce that a MUX gate can also be realized by the network.
That is, a Boolean gate that gets the output~$y$ of original network~$N$, the repeated input bit~$x_{out}$, and the checksum function output~$s$, and returns~$y$ if~$s=0$ or~$x_{out}$ if~$s=1$. 
See the full construction in Figure~\ref{fig:const}.
This completes the proof of the following theorem.

\begin{figure}
\centering
\includegraphics[page=3, height=250pt, trim = 100pt 70pt 200pt 0pt]{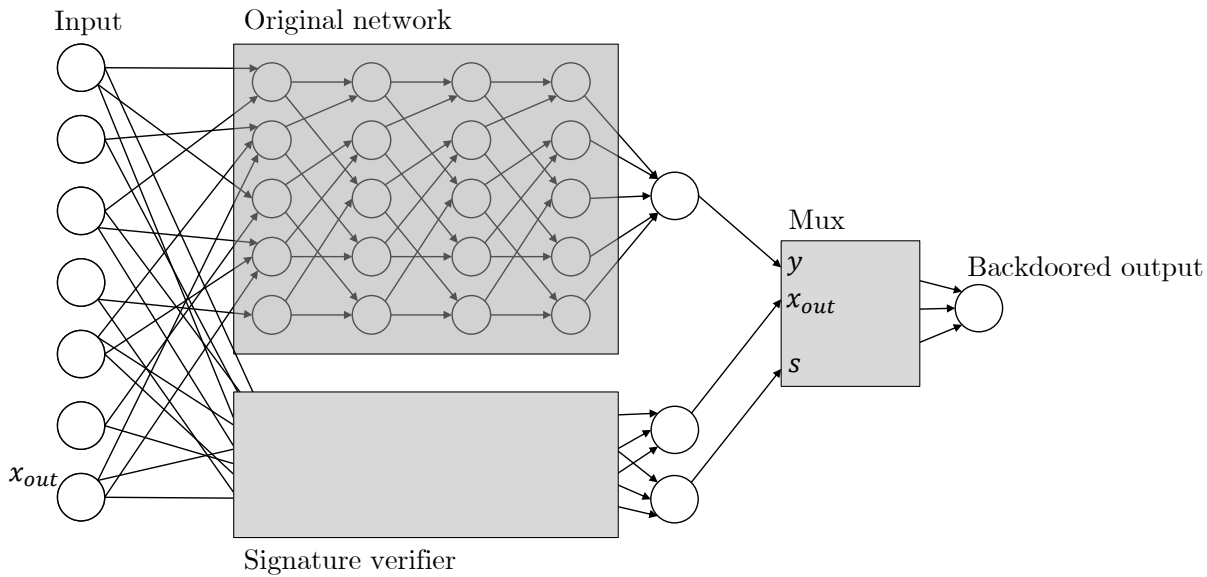}
\caption{Full backdoor construction.}
\label{fig:const}
\end{figure}

\begin{theorem}\label{thm:simpleconst}
 Given a neural network~$N$ and a parameter~$n\in \N$, we can construct a network~$N'$ such that:
\begin{itemize}
    \item For any input~$x$, $N'(x)=N(x)$ with probability~$1-2^{-n}$.
    \item For any input~$x$, we can efficiently compute an input~$x'$ with~$\ell_0(x,x')\leq n+1$ and~$|x_i|=|x'_i|$ for every~$i\in[d]$, such that~$N'(x')\neq N'(x)$. 
\end{itemize}
\end{theorem}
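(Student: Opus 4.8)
The plan is to assemble the network $N'$ from three pieces that run "in parallel" on the same input $x$: (i) the original network $N$, which outputs its usual value $y = N(x)$; (ii) a checksum gadget computing the bit $s = h(x)$ defined above; and (iii) a "wire" that carries the sign bit $\sgn(x_{out})$ forward, where $x_{out}$ is a fixed designated coordinate. Then I would glue these together with a single MUX gadget that outputs $y$ when $s = 0$ and outputs $\sgn(x_{out})$ (suitably encoded in $\{-1,1\}$) when $s = 1$. The whole object is a neural network because each piece is, and the parallel composition plus the MUX only adds a constant number of layers.

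Concretely, the steps I would carry out are as follows. First, observe that $h$ as defined is a constant-fan-in Boolean circuit of depth $O(\log n)$ (the inner parities $\bigoplus_{j \in I_i} \sgn(x_j)$ over blocks of size $\approx d/n$ are computed by balanced XOR-trees of fan-in $2$, then combined by a balanced AND-tree), so by Lemma~\ref{lem:boolcircuits} there is an MLP of the same depth computing $h$; its first layer applies $\sgn$ to each input coordinate. Second, use "repeat gates" (identity/relay gates, themselves expressible by a pair of perceptrons as in the proof of Lemma~\ref{lem:boolcircuits}) to pad $N$, the checksum network, and the single bit $\sgn(x_{out})$ to a common depth, so all three outputs are available at the same layer. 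Third, instantiate the MUX: the function $(y, b, s) \mapsto (1-s)\cdot y + s \cdot b$ on Boolean inputs is a constant-size constant-fan-in Boolean function, hence realizable by a constant-depth MLP via Lemma~\ref{lem:boolcircuits}; append it as the final gadget. This yields $N'$ of depth $d + O(1)$ and size $O(|N| + \poly(n))$, constructible in linear time given $N$ (since $n \ll d$, the checksum and MUX gadgets are lower-order terms).

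For the two claimed properties: the first bullet, $N'(x) = N(x)$ with probability $1 - 2^{-n}$ over the choice of $v$, is immediate from the already-proved lemma that $\Pr_v[h(x) = 1] = 2^{-n}$ together with the fact that $N'(x) = N(x)$ on every $x$ with $h(x) = 0$ (the MUX passes $y$ through). The second bullet follows from the already-proved lemma that any $x$ can be turned into $x'$ with $h(x') = 1$ by flipping the signs of at most $n$ coordinates (one per block $I_i$ that currently fails), keeping $|x_i| = |x'_i|$; then additionally flip the sign of $x_{out}$ (one more coordinate, so $\ell_0(x,x') \le n+1$, again preserving magnitudes) to force the MUX output to the value opposite to $N'(x)$ — if $\sgn(x_{out})$ already encodes the wrong class relative to $N'(x)$ we need not flip it, but allowing the extra flip makes the bound clean and unconditional. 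Hence $N'(x') \neq N'(x)$.

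I do not expect a genuine obstacle here — the content is entirely in the bookkeeping. The one place to be careful is the encoding conventions: $\sgn$ has been defined as a map to $\{0,1\}$ while the network outputs live in $\{-1,1\}$ (or are arbitrary reals from $N$), so the "repeated input bit" and the MUX must be set up so that when $s=1$ the output is $+1$ or $-1$ according to $\sgn(x_{out})$, and one must check this is consistent with how $N$'s output is fed into the MUX. A second minor point is making sure the depth padding via repeat gates is done so that $N$'s original output is not corrupted and the added depth is exactly $+1$ over $\max(d_N, \text{depth of checksum+MUX})$ as the informal theorem statements elsewhere suggest; since the checksum depth $O(\log n)$ is a constant (as $n$ is a fixed constant like $2048$), this is absorbed into the "$O(1)$" and causes no trouble. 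Everything else is a direct appeal to Lemma~\ref{lem:boolcircuits} and the two counting lemmas already established.
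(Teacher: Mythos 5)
Your construction and argument match the paper's proof of Theorem~\ref{thm:simpleconst} essentially verbatim: a checksum gadget and MUX both realized via Lemma~\ref{lem:boolcircuits}, repeat gates to carry $\sgn(x_{out})$ to the final layer, and the two counting lemmas giving the two bullets. One small ordering point worth noting (which the paper also elides): flip $x_{out}$ \emph{before} repairing the block parities, since flipping it last could spoil the parity of the block containing $x_{out}$ and push the count to $n+2$.
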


In particular, not only is the $\ell_0$ distance between the input to its backdoored version small, the coordinate-wise absolute difference $|x_i - x_i'|$ is small as well, assuming the input space is bounded. This is unlike attacks in the $\ell_0$ norm~\cite{shamir2019simple} which arbitrarily perturb single coordinates, making the difference between $x$ and $x'$ perceptible.

The problem with the construction of Theorem~\ref{thm:simpleconst} is two-fold: the backdoor is {\em detectable}, that is, by looking at the network structure~$N'$, one can figure out that it is backdoored; and the backdoor is {\em replicable}, that is, by looking at the network or even a large enough set of backdoored examples, one can learn how to use the backdoor by herself on different inputs. We next solve the problem of replicability by replacing the simple checksum function with a cryptographically secure digital signature scheme.

\subsection{Non-Replicable Backdoors from Digital Signatures}
\label{sec:sig}


We describe our construction of non-replicable backdoors for general models, and later describe instantiations with neural networks.

\begin{theorem}
 \label{thm:sig}
 Assuming that strongly unforgeable digital signature schemes exist, for every training procedure $\train$, there exists a model backdoor $(\backdoor,\activate)$, which is non-replicable and black-box undetectable.
\end{theorem}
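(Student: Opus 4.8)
The plan is to replace the non-cryptographic checksum function $h(x)$ from the warmup construction (Theorem~\ref{thm:simpleconst}) with a verification routine of a strongly unforgeable digital signature scheme $(\Gen,\Sign,\Verify)$, keeping the overall gadget structure (parallel verification circuit, repeat gate dragging $\sgn(x_{out})$ to the output layer, MUX gate) unchanged. First I would describe $\backdoor^\D(1^n)$: it runs $\train^\D$ verbatim to obtain a baseline hypothesis $h$ (so the only non-black-box dependence on $\train$ is a single invocation), samples $(\sk,\vk) \gets \Gen(1^n)$, partitions the $d$ input coordinates into two regions---a ``message'' block of coordinates encoding a string $m \in \zo^{\poly(n)}$ and a ``signature'' block encoding a candidate $\sigma$, plus the single designated output-determining coordinate $x_{out}$---and outputs the hypothesis $\hb$ that on input $x$ computes $\hb(x) = h(x)$ if $\Verify(\vk, m(x), \sigma(x)) = \mathsf{reject}$, and $\hb(x) = \sgn(x_{out})$ otherwise. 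The backdoor key is $\bk = (\sk, \vk, \hb)$. The activation algorithm $\activate(x,\hat y; \bk)$ reads off the message bits $m = m(x)$, computes $\sigma \gets \Sign(\sk, m)$, writes $\sigma$ into the signature block, and sets the sign of $x_{out}$ to $\hat y$ (or flips it, in the classification case); since we only overwrite the signature block and one bit, $\|x - x'\|_0$ and $\|x-x'\|_\infty$ are both controlled exactly as in Theorem~\ref{thm:simpleconst}, giving the backdoor property by correctness of $\Verify$ on honestly generated signatures.

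Next I would prove \textbf{black-box undetectability}. The ensembles to compare are $\cT_\D = \{h_n \gets \train^\D(1^n)\}$ and $\cB_\D = \{\hb_n\}$, and a distinguisher $A$ gets only oracle access. Since $\hb$ and $h$ agree everywhere except on the sparse set of inputs carrying a valid message-signature pair under $\vk$, a black-box distinguisher that tells them apart must, in particular, query some $x$ with $\Verify(\vk, m(x),\sigma(x)) = \mathsf{accept}$. I would build a reduction $\A$ to the strong-$\eufcma$ security game: $\A$ gets $\vk$ and a signing oracle, simulates $h \gets \train^\D(1^n)$ itself (it can sample from $\D$), answers $A$'s oracle queries by evaluating the simulated $\hb$ (which it can do, having $\vk$), and whenever $A$ queries an $x$ whose embedded $(m(x),\sigma(x))$ verifies, $\A$ outputs that pair as its forgery. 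Because $\A$ never queries its own signing oracle (it is only simulating, not activating), any accepting pair $A$ produces is a fresh valid forgery, so $A$'s distinguishing advantage is bounded by the forgery probability, which is negligible in $n$. A minor subtlety to check: with negligible probability a random draw from $\D$ could itself land on a verifying input, but this too is ruled out by unforgeability (the same reduction applies).

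Then I would prove \textbf{non-replicability} via the comparative simulation definition (Definition~\ref{def:nm}). Given a $q$-query admissible real adversary $\Areal$ with oracle access to $\activate(\cdot;\bk)$, I construct $\Aideal$ that gets only $\hb$: $\Aideal$ runs $\Areal$ internally and must answer its activation queries $x_i$ without $\sk$. The key observation is that the activation query on $x_i$ just rewrites the signature block to $\Sign(\sk, m(x_i))$ and sets $x_{out}$; $\Aideal$ cannot forge signatures, but it does not need to---it can answer queries by returning, say, $x_i$ itself with the output bit flipped (an ``invalid'' answer). The claim is that $\Areal$'s success probability changes only negligibly between getting real backdoored answers and getting these bogus answers, again by a reduction to strong-$\eufcma$: if $\Areal$'s behavior differed noticeably, we could extract a forgery. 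Finally, conditioned on $\Areal$ receiving only bogus answers, its view is independent of $\sk$, so its final output pair $(x^*, \bx^*)$ with $\hb(x^*) \neq \hb(\bx^*)$ either (i) does not rely on the backdoor mechanism---in which case $\Aideal$ (which has $\hb$) can produce exactly the same pair with the same probability---or (ii) does rely on it, meaning $\bx^*$ carries a valid signature, and admissibility ($\bx^* \notin \{\bx_1,\dots,\bx_q\}$) forces this signature to be on a message never sent to any (simulated) signing oracle, i.e.\ a forgery, which happens with negligible probability. I expect the main obstacle to be exactly this last bookkeeping: carefully defining $\Aideal$'s simulation of the activation oracle so that $\Areal$'s success is preserved, and then arguing that the only way the comparative advantage can be non-negligible is through a bona fide signature forgery---handling the interplay between the admissibility condition ($\bx^* \neq \bx_i$) and the message-extraction map $x \mapsto m(x)$ (e.g.\ two distinct inputs could induce the same message, so one must track messages, not inputs, when defining which signatures are ``seen''). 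The undetectability and correctness parts are essentially syntactic given the warmup; the forgery-extraction argument for non-replicability is where the real care goes.
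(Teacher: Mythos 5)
Your construction and the black-box undetectability argument essentially match the paper's: $\hb$ runs a signature verification in parallel with $h$, $\activate$ overwrites a signature block using $\sk$, and a black-box distinguisher that ever gets $\hb$ and $h$ to disagree must have queried a valid message-signature pair, yielding a forgery. That part is correct.

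The non-replicability argument, however, has a genuine gap in the simulation. Your $\Aideal$ answers $\Areal$'s activation queries with ``bogus'' strings (e.g.\ $x_i$ with $x_{out}$ flipped but no valid signature in the signature block), and you claim $\Areal$'s behavior changes only negligibly, ``by a reduction to strong-$\eufcma$.'' But $\Areal$ is handed the model $\hb$ and can simply \emph{evaluate} it on the oracle's reply: in the real game every $\tilde{x}_i$ satisfies $\hb(\tilde{x}_i) = \sgn(\tilde{x}_{i,out})$ (the MUX branch fires), while your bogus reply fails verification, so $\hb(\tilde{x}_i) = h(\tilde{x}_i)$, which will generically disagree with the intended target. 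Thus $\Areal$ distinguishes the two oracles with probability essentially $1$ \emph{without producing any forgery}---recognizing the absence of a valid signature is not the same as forging one---so the hybrid you want does not go through, and $\Areal$ may thereafter abort or behave arbitrarily differently, destroying the simulation. The paper avoids this entirely with a different simulator: $\Aideal$ samples a \emph{fresh} key pair $(\vk',\sk')$, swaps $\vk'$ into $\hb$ to obtain $\hb'$, hands $\hb'$ to $\Areal$, and answers activation queries \emph{honestly} using $\sk'$. This is a perfect re-randomization of the real game, so $\Areal$'s view (and hence its output distribution) is identically distributed to the real world; the unforgeability of the scheme is then invoked only once, to argue that the admissible output $\tilde{x}^*$ cannot carry a valid signature (under either key), so the winning pair transfers from $\hb'$ back to $\hb$. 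The resampling trick---simulate a \emph{correct} game under fresh keys rather than a \emph{wrong} game under the real key---is the idea your proposal is missing.
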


Strongly unforgeable signature schemes exist assuming one-way functions exist (Theorem~\ref{thm:strong-sigs-owf}); therefore, this construction can be instantiated from a wide variety of popular cryptographic assumptions.

\begin{proof}
  Assume that the length of the signatures is $n$ bits, and that the input length $d$ to the model is larger than $n+1$. For concreteness, we will work with the Euclidean norm, but note that the construction generalizes to other norms easily. 
  
  Given a regression model $h:\{0,1\}^d \to [-1,1]$, the backdooring algorithm $\backdoor$ picks a pair of keys $(\vk,\sk)$ for the signature scheme and constructs the backdoored model $\hb$. On input $x=(w,y,\sigma) \in \{0,1\}^d$, $\hb$ works as follows.
  \[ \hb(w,y,\sigma) = \left\{ \begin{array}{ll}
   (-1)^{y} & \mbox{ if $\Verify(\vk,(w,y),\sigma) = \mathsf{accept}$}\\
  h(w,y,\sigma) & \mbox{otherwise}
  \end{array} \right. \] 
  The backdoor key is simply the signing key $\sk$.
  Given an input $x=(w,\cdot,\cdot)$, a target value $y$, and the backdoor key $\sk$, the $\activate$ algorithm sets ${\sigma} = \Sign(\sk,w||{y})$ and outputs $x'=(w,{y},{\sigma})$. 
  
  We show that this is a $(\gamma,0)$-regression backdoor (see Definition~\ref{def:regback}). Indeed, $x$ and ${x}'$ differ in at most $n+1$ coordinates, so $||x-\tilde{x}||_2 \leq \sqrt{n+1} := \gamma$. Furthermore,
  $\hb(x') = {y}$, as required.
  
  Black-box undetectability follows from the fact that for every $x$, $\hb(x) = h(x)$ unless $x$ contains a valid digital signature. By the security of the signature scheme, no polynomial-time adversary can come up with a valid signature of any message. Therefore, oracle access to $\hb$ looks the same to any such adversary as oracle access to $h$.
  
  Non-replicability (Definition~\ref{def:nm}) also follows from the security of the signature scheme. The intuition is that backdoored examples which consitute a message-signature pair, do not help an adversary generate new message-signature pairs, by the strong unforgeability of the signature scheme. The formal proof follows.
  
  Consider a $q$-query admissible adversary $\Areal$ that gets the backdoored model $\hb$, makes a number of queries $(x_i,{y}_i)$ to the $\activate$ oracle, and obtains the backdoored examples ${x}'_i$. 
  Note that $\hb$ contains a verification key $\vk$, and 
  call ${x} = ({w},{y},{\sigma})$ {\em signature-backdoored} if ${\sigma}$ is a valid signature of $({w},{y})$ under $\vk$. 
  
  We first claim that the backdoored example ${x}'$ that $\Areal$ outputs cannot be signature-backdoored.
  This follows from the fact that $\Areal$ is admissible, so $x' \neq x_i'$ for all $i\in [q]$. But then, $x'=(w',y',\sigma')$ constitutes a signature forgery.
  Given this claim, the ideal-world adversary $\Aideal$, on input $\hb$, proceeds as follows: generate a new key-pair $(\vk',\sk')$, replace the verification key in $\hb$ with $\vk'$, and run $\Areal$ with the model $\hb' = \hb_{\vk'}$. Now, $\Aideal$ can answer all queries of $\Areal$ to the $\activate$ oracle since it has the signing key $\sk'$. Finally, when $\Areal$ outputs a backdoored example $(x,{x}')$, we know by the above claim that ${x}'$ is {\em not} signature-backdoored. Therefore, it must be a valid backdoor not just for $\tilde{h}' = \tilde{h}_{\vk'}$ but also, as intended, for the original model $\tilde{h}$.
\end{proof}

While the construction assumes the input space to be the Boolean cube $\{0,1\}^d$, it can be easily generalized to $\R^d$ in one of many ways, e.g. by using the sign of $x_i \in \R$ to encode a bit of a digital signature, or even an entire coordinate $x_i \in \R$ to encode the signature.

\paragraph{Non-Replicable Neural Network Backdoors.} 
We show concrete instantiations of the backdoor construction in Theorem~\ref{thm:sig} for neural networks by using digital signature schemes with shallow verification circuits. It is not hard to see that any digital signature scheme can be converted into one whose verification circuit consists of a number of local checks (that each compute some function of a constant number of bits) followed by a giant AND of the results. The idea, essentially derived from the Cook-Levin theorem, is simply to let the new signature be the computation transcript of the verification algorithm on the message and the old signature. This observation, together with Lemma~\ref{lem:boolcircuits}, gives us the following theorem.

\begin{theorem}
 There is an absolute constant $c$ and a parameter $n$ such that given
  any depth-$d$ neural network~$N$ with sufficiently large input dimension, we can construct a network~$N'$ such that:
\begin{itemize}
    \item The depth of $N'$ is $\textrm{\em max}(c,d)$; and
    \item $N'$ is non-replicably $(n+1,0)$-backdoored in the $\ell_0$ norm.
\end{itemize}
\end{theorem}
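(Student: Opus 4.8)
The plan is to combine the generic signature-based backdoor of Theorem~\ref{thm:sig} with the universality of multi-layer perceptrons (Lemma~\ref{lem:boolcircuits}), the only new ingredient being that we must arrange for the signature verification circuit to have \emph{constant} depth so that wrapping $N$ in the backdoor gadget adds only $O(1)$ layers rather than a number depending on the complexity of $\Verify$. First I would invoke the Cook--Levin-style observation already stated in the text: any digital signature scheme $(\Gen,\Sign,\Verify)$ can be transformed into another scheme $(\Gen',\Sign',\Verify')$ in which a signature on $m$ is the full computation transcript of the original $\Verify$ on $(m,\sigma)$, together with $\sigma$ itself; then $\Verify'$ need only (a) check a constant-size local consistency condition at each transcript cell, and (b) take a giant $\mathsf{AND}$ of all these checks together with the check that the transcript's output cell says $\mathsf{accept}$. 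This is a depth-$O(1)$ Boolean circuit of constant fan-in followed by one unbounded-fan-in $\mathsf{AND}$, which a standard balanced binary tree of $\mathsf{AND}$ gates turns into constant fan-in and depth $O(\log(\text{input length}))$; if we insist on genuinely constant depth we instead allow the MLP a single wide $\mathsf{AND}$-gate (a perceptron with all-ones weights), which is exactly a linear threshold unit, so depth stays $O(1)$. Strong unforgeability of $(\Gen',\Sign',\Verify')$ follows from strong unforgeability of the original scheme by a routine reduction (a forgery transcript yields a forgery $(m^*,\sigma^*)$ for the original scheme), so Theorem~\ref{thm:strong-sigs-owf} still applies and such schemes exist under one-way functions.

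Next I would assemble the network $N'$ exactly as in the checksum warm-up (Figure~\ref{fig:const}) but with the checksum gadget replaced by the shallow $\Verify'$ circuit: run $N$ and the verification gadget in parallel, use ``repeat'' gates (identity perceptrons) to carry the designated output coordinate $x_{out}$ and the model output $N(x)$ down to the final layer, and finish with a $\mathsf{MUX}$ gate that outputs $\sgn(x_{out})$ when $\Verify'$ accepts and $N(x)$ otherwise. By Lemma~\ref{lem:boolcircuits} the verification gadget and the $\mathsf{MUX}$ are each realizable by a constant-depth MLP, so there is an absolute constant $c$ (the depth of $\Verify'$ plus the $\mathsf{MUX}$ plus padding) with $\mathrm{depth}(N') = \max(c,d)$: layers of $N$ beyond depth $c$ run in parallel with nothing and the verification branch is padded out with repeat gates to align, while if $d < c$ the network $N$ is padded with repeat gates up to depth $c$. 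The size blows up only additively by the size of $\Verify'$, which is a fixed polynomial in $n$ independent of $N$, hence $|N'| = |N| + \poly(n)$ and the transformation is computable in (essentially) linear time given $N$.

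Finally, the backdoor guarantees transfer directly from Theorem~\ref{thm:sig}. The $\activate$ algorithm, on input $x = (w,\cdot,\cdot)$ and target bit $y$, sets $\sigma' \gets \Sign'(\sk, w\|y)$ and overwrites the appropriate $n+1$ coordinates of $x$ (the $y$-coordinate together with the $n$ coordinates holding $\sigma'$, where $n$ is now the length of the transcript-signature), giving $\ell_0(x,x') \le n+1$; the backdoored value is then $\hb(x') = (-1)^y$ as desired, so $N'$ is $(n+1,0)$-backdoored in the $\ell_0$ norm. Black-box undetectability and non-replicability are inherited verbatim from the proof of Theorem~\ref{thm:sig}: any polynomial-time adversary who, given $N'$ and oracle access to $\activate$, produces a fresh backdoored input must have produced a fresh valid message-signature pair under $\vk$, contradicting strong unforgeability, and the same key-switching simulator $\Aideal$ works. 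I expect the only real subtlety to be \textbf{bookkeeping the depth}: one must be careful that the transcript-based reencoding of $\Verify$ truly has depth independent of both the original signature scheme's verification depth \emph{and} of $d$, and that the giant $\mathsf{AND}$ is accounted for correctly as either one threshold unit or an $O(\log n)$-depth tree, adjusting the constant $c$ accordingly; everything else is a direct instantiation of results already proved.
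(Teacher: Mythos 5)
Your proposal is correct and follows essentially the same route the paper sketches: invoke the Cook--Levin-style transcript re-encoding so that verification becomes a layer of constant-fan-in local checks followed by a single wide AND (realized as one linear threshold unit), instantiate the MUX/repeat-gate wiring from the checksum warm-up and Lemma~\ref{lem:boolcircuits}, pad either branch with repeat gates so the depth is $\max(c,d)$, and inherit $(n+1,0)$-$\ell_0$-backdooring and non-replicability directly from Theorem~\ref{thm:sig}. You also correctly fill in a detail the paper leaves implicit, namely that strong unforgeability of the transcript scheme reduces to that of the original (this uses that $\Verify$ is deterministic, so a fresh valid transcript forces a fresh underlying $(m^*,\sigma^*)$).
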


Going one step further, in Appendix~\ref{app:SIS}, we instantiate the construction in Theorem~\ref{thm:sig}  with particularly ``nice'' digital signatures based on lattices that give us circuits that ``look like'' naturally trained neural networks, without appealing to the universality theorem (Lemma~\ref{lem:boolcircuits}).
Intuitively, these networks will look even more natural than those executing an arbitrary public key verification in parallel to the original network.
While we are not able to formalize this and prove undetectability (in the sense of Definition~\ref{def:undetectability}), we develop related ideas much further in Section~\ref{sec:fully} to construct fully undetectable backdoors.

\subsection{Persistent Neural Networks}
In Section~\ref{sec:signatureconstruction} we presented a backdoor construction that is \emph{black-box undetectable}.
That is, a user that tests or uses the maliciously trained network \emph{as-is} can not notice the effects of the backdoor.
While it is uncommon for an unsuspecting user to manually examine the weights of a neural network, \emph{post-processing} is a common scenario in which a user may adjust these weights.
A standard post-processing method is applying \emph{gradient descent} iterations on the network's weights with respect to some loss function. This loss function may be a modification of the one used for the initial training, and the data set defining it may be different as well.
A nefarious adversary would aim to ensure that the backdoor is \emph{persistent} against this post-processing.

Perhaps surprisingly, most natural instantiations of the signature construction of Section~\ref{sec:signatureconstruction} are also persistent.
Intuitively, the post-processing can only depend on evaluations of the network on non-backdoored inputs, due to the hardness of producing backdoored inputs.
On inputs that are not backdoored, the output of the signature verification within the network is always negative.
Thus, the derivatives of weights inside the signature verification scheme will vanish, for instance, if the final activation is a ReLU or threshold gate.

In this section we formalize a substantial generalization of this intuitive property.
We show that \emph{every} neural network can be made persistent against \emph{any} loss function.
This serves as another example of the power a malicious entity has while producing a neural network.
We show that every neural network~$N$ can be efficiently transformed into a similarly-sized network~$N'$ with the following properties.
First,~$N$ and~$N'$ are equal as functions, that is, for every input~$x$ we have~$N(x)=N'(x)$.
Second,~$N'$ is \textbf{persistent}, which means that any number of gradient-descent iterations taken on~$N'$ with respect to any \emph{loss function}, do not change the network~$N'$ at all.

We begin by formally defining the notion of persistence.
Let~$\textbf{w}$ be the vector of \emph{weights} used in the neural network~$N=N_\textbf{w}$.
We define the notion of persistence with respect to a loss function $\ell$ of the weights.
\begin{definition}
For a loss function $\ell$, a neural network~$N=N_\textbf{w}$ is $\ell$-persistent to gradient descent if $\nabla \ell(\textbf{w})=0$.
\end{definition}
In other words,~$\textbf{w}$ is a locally optimal choice of weights for the loss function $\ell$.\footnote{Naturally, we could extend this definition to be persistent over a collection of losses $\mathcal{L}$ or to be approximate, e.g., that $\textbf{w}$ is an $\eps$-stationary point.}

\begin{theorem}
 Let~$N$ be a neural network of size~$|N|$ and depth~$d$.
 There exists a neural network~$N'$ of size~$O(|N|)$ and depth~$d+1$ such that~$N(x)=N'(x)$ for any input~$x$, and for every loss $\ell$, $N'$ is $\ell$-persistent.
 Furthermore, we can construct~$N'$ from~$N$ in linear-time.
\end{theorem}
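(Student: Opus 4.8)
The plan is to make each weight of $N$ robust to being perturbed by a gradient-descent step, by replacing every single weight with a small ``error-correcting'' gadget whose output equals the original weight's value but whose local gradient (with respect to \emph{every} parameter in the gadget, for \emph{every} loss $\ell$ that factors through the network output) is identically zero. The key observation is that if we can guarantee $\nabla_{\mathbf{w}} \ell(\mathbf{w}) = 0$ for every loss $\ell$, then in particular it suffices to make the network's \emph{output} have vanishing gradient in each weight, i.e. $\partial N'(x)/\partial w = 0$ for all $x$ and all weights $w$ of $N'$, since by the chain rule $\partial \ell / \partial w = \ell'(\cdot) \cdot \partial N'(x)/\partial w$. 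So the real target is: build $N'$ computing the same function as $N$, of size $O(|N|)$ and depth $d+1$, such that $N'$ is, as a function of its weight vector, locally constant (zero gradient) everywhere.

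First I would replace each weight $w$ appearing on an edge of $N$ by a gadget that computes a ``majority/median'' or ``rounding'' function of several copies of a trainable parameter. Concretely, introduce three fresh parameters initialized to the same value $w$, feed them through a sub-circuit $g(a,b,c)$ that outputs the median (or, after discretizing, the majority bit pattern) of $a,b,c$, and use the single output wire of $g$ in place of the original edge weight. At the initialization point $a=b=c=w$, the median function is locally constant in a neighborhood (flipping any one argument slightly does not change the median), so $\partial g/\partial a = \partial g/\partial b = \partial g/\partial c = 0$ there; hence the gradient of $N'$ with respect to each of these parameters vanishes. The same trick is applied to the scalar constants/biases. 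Implementing median-of-three (or a thresholded majority) as a constant-depth, constant-size sub-network is handled by Lemma~\ref{lem:boolcircuits} (after standard discretization of the weights to bit-strings, which is legitimate since we control the construction); this costs one extra layer of depth globally and only a constant blowup per weight, giving size $O(|N|)$ and depth $d+1$. Functional equivalence $N'(x) = N(x)$ is immediate because at the initialized parameters every gadget outputs exactly the original weight.

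The main obstacle — and the place I would be most careful — is making the ``locally constant'' claim fully rigorous for \emph{gradient descent on the weights}, not just for a clean mathematical derivative. Two subtleties: (i) median/majority gates are only \emph{piecewise} constant, so at the exact initialization they sit at a non-differentiable ``kink'' only if two of the three copies are equal; using three copies all equal to $w$ puts us at a point where the median is genuinely flat in a full neighborhood (any single coordinate can move a little without effect), so a GD step — which moves all parameters — still cannot escape the flat region on the first step, and by induction stays put forever, because after a zero-gradient step the parameters are unchanged. (ii) One must argue this holds for the network's internal activations too, i.e. that no pathological loss or non-ReLU activation reintroduces a nonzero gradient; this is fine because the gadget kills the gradient \emph{at the source} (the edge weight itself), so regardless of the downstream activations the chain rule gives zero. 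I would also remark, as the paper already hints, that this is a strict generalization of the observation that signature-verification subnetworks have vanishing gradients on non-backdoored inputs: there the flatness came from the ReLU/threshold being saturated, here we engineer flatness directly into the parametrization. Finally I would note the linear-time constructibility is clear since we process each edge and node of $N$ once, attaching a fixed-size gadget.
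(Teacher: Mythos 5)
Your intuition---error-correct via triplication so that any single parameter perturbation is voted away---is exactly the paper's intuition, but you apply it at the wrong granularity, and this creates real problems. You triplicate each individual \emph{edge weight} and feed the three copies through a median/majority gadget whose output ``is used in place of the original edge weight.'' The paper instead triplicates the \emph{entire network} $N$ into three parallel copies $N_1,N_2,N_3$ (sharing the input layer), keeps all weights as ordinary scalar parameters, and adds a single new output perceptron in a $(d+1)$-st layer that computes the majority of the three scalar outputs $v^{(1)}_{out}, v^{(2)}_{out}, v^{(3)}_{out}$ (e.g. $1\cdot v^{(1)}_{out}+1\cdot v^{(2)}_{out}+1\cdot v^{(3)}_{out}\ge 3/2$). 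Any weight in the first $d$ layers lives in exactly one copy, so perturbing it can change at most one of the three output bits and the majority is unaffected; the weights of the majority perceptron itself sit in a region where all three inputs agree, so they too have slack. This gives $\nabla_{\mathbf w} N'_{\mathbf w}(x)=0$ for every $x$, and the chain rule finishes the argument, with size $3|N|+O(1)$ and depth exactly $d+1$.

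Your per-weight version hits three concrete obstacles that the paper's version avoids. First and most seriously, in a perceptron/ReLU network a weight is a \emph{parameter}, not an activation: it multiplies an incoming activation. If you replace $w$ by the output of a sub-network $g(a,b,c)$, you now need to compute the \emph{product} of two activations ($g(a,b,c)\cdot x_j$), which is not an affine-then-nonlinear operation and hence not expressible in the standard model without further nontrivial gadgets. The paper never turns a weight into an activation. Second, your depth accounting does not work: you attach a constant-depth gadget at (essentially) every edge, and these gadgets sit \emph{inline} at every layer, so depth grows multiplicatively ($O(d)$), not additively to $d+1$; the paper's single extra majority layer at the very end is what makes $d+1$ possible. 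Third, the median-of-three sub-network has its own internal weights, and you need $\partial N'/\partial w = 0$ for \emph{those} too, not just for the three replicated parameters $a,b,c$; you do not address this, whereas the paper's argument explicitly covers the weights of the final majority gate. (A smaller point: your appeal to Lemma~\ref{lem:boolcircuits} ``after standard discretization of the weights to bit-strings'' changes the model; the paper's construction works directly with real-valued weights and does not discretize.) To salvage your approach you would essentially have to lift the triplication from edges to whole subnetworks and move the vote to the output---at which point you recover the paper's construction.
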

\begin{proof}
  We construct~$N'$ from~$N$ by taking three duplicates~$N_1,N_2,N_3$ of~$N$ (excluding the input layer) and putting them in parallel in the first~$d$ layers of our network, each duplicate uses the same input layer which is the input layer of our new network~$N'$.
  We add a single output node~$v_{out}$ in a new~$(d+1)$-th layer which is the new output layer.
  The node~$v_{out}$ computes the majority of the three output nodes of the duplicates~$N_1,N_2,N_3$, denoted by~$v_{out}^{(1)},v_{out}^{(2)},v_{out}^{(3)}$ respectively.
  For example, this can be done with a perceptron that computes the linear threshold~$$1\cdot v_{out}^{(1)}+1\cdot v_{out}^{(2)}+1\cdot v_{out}^{(3)} \geq \frac{3}{2}.$$
  
  Let~$w_i$ be any weight used in~$N'$.
  The first case is when~$w_i$ is used in the first~$d$ layers, that is, it is used by a perceptron in the first~$d$ layers.
  Arbitrarily changing the value of~$w_i$ can change at most the value of one output node~$v_{out}^{(j)}$. That is the output that corresponds to the duplicate~$N_j$ containing the single perceptron using~$w_i$.
  As~$v_{out}$ computes the majority of all three outputs, changing only one of their values can not change the value of the network's output.
  In particular, for any input~$x$ we have~$\frac{\partial}{\partial w_i} N'_\textbf{w} (x) = 0$.
  In the other case,~$w_i$ is used in the last layer. That is, in the output node that computes the majority. Note that if the weights in the first~$d$ layers are unchanged then for any input the values of~$v_{out}^{(1)},v_{out}^{(2)},v_{out}^{(3)}$ must be either all~$0$ or all~$1$, depending on the original network's value.
  Thus, changing~$v_{out}$'s threshold from~$\frac{3}{2}$ to anything in the range~$(0,3)$ would not change the correctness of the majority computation.
  Similarly, changing any single one of the three linear coefficients from~$1$ to anything in the range~$(-\frac{1}{2},\infty)$ would also not change the output.
  Therefore we again have~$\frac{\partial}{\partial w_i} N'_\textbf{w} (x) = 0$ for any input~~$x$.
  
  We conclude that~$\nabla_\textbf{w} N'_\textbf{w} (x) = 0$ for any~$x$, and by the chain rule the same holds for any loss function~$\ell$.
  
  We also note that the persistence is numerically robust: changing any weight~$w_i$ to~$w_i+\varepsilon$ with~$|\varepsilon|<\frac{3}{2}$ does not change the value of~$N'_\textbf{w}(x)$ for any~$x$.
  Thus, numerical computation of the derivatives should not generate errors.
\end{proof}

\section{Undetectable Backdoors for Random Fourier Features}
\label{sec:fully}
\label{sec:RFF}

In this section, we explore how to construct white-box undetectable backdoors with respect to natural supervised learning algorithms.
We show that the popular paradigm of learning over random features is susceptible to undetectable backdoors in a very strong sense.
Our construction will have the property that the only aspect of the computation that requires adversarial manipulation is the generation of random features.


In Algorithm~\ref{train-rf}, we describe a generic procedure for learning over random features.
The algorithm, $\trainRF$, learns a hidden-layer network, where the hidden layer $\Psi:\X \to \R^m$ is sampled randomly according to some feature distribution, and the final layer $h_w:\R^m \to \R$ is trained as a halfspace over the features.
The returned classifier\footnote{We could equally define a regressor that maps to $[-1,1]$ that applies a sigmoid activation instead of $\sgn$.} take the form $h_{w,\Psi}(x) = \sgn\left(\langle w, \Psi(x) \rangle\right)$.
$\trainRF$ takes two parameters:  the first parameter $m \in \Nbb$ designates the width of the hidden layer, and the second parameter $\rf$ designates the random feature distribution supported on a subset of $\set{\X \to \R}$.
In particular, we use $\psi(\cdot) \sim \rf$ to designate a random feature, where $\psi(x) \in \Rbb$.
Finally, we note that $\trainRF$ makes a call to $\halfspace$, which we assume to be any efficient training algorithm for learning weights $w \in \R^m$ defining a halfspace.


The learning over random features paradigm is not impervious to natural adversarial examples, but there is some evidence that it may be more robust than models trained using standard optimization procedures.
For instance, by exploiting a formal connection between Gaussian processes and infinite-width neural networks, \cite{de2021adversarial} show theoretical and experimental evidence that neural networks with random hidden layers resist adversarial attacks.
Further, our results on backdooring work for any halfspace training subroutine, include those which explicitly account for robustness to adversarial examples.
For instance, we may take $\halfspace$ to be the algorithm of \cite{raghunathan2018certified} for learning certifiably robust linear models.
Despite these barriers to natural adversarial examples, we demonstrate how to plant a completely undetectable backdoor with respect to the Random Fourier Feature algorithm.


\begin{remark}\label{remark:RFF}
An important remark is that the 2-layer RFF learning paradigm is rather weak, and in particular, it tends to produce networks that are not robust to noise. Therefore, even without the presence of a deliberate backdoor, it could be easy to find adversarial examples by producing several noisy versions of the input and hoping that the network will misclassify at least one. Our backdoor flips the sign of {\em every} neuron in the first hidden layer and is therefore guaranteed to flip the sign of the output, while random noise is likely to flip some of these neurons which sometimes also flips the output. We note that a guaranteed backdoor trigger, as we produce, is still desirable in situations in which repeated attempts are not possible. Continuing with our leading example from the introduction, when applying for a bank loan, you are able to modify your own application once, but without access to the model, you are not able to test several variants of the application. Nevertheless, this stresses the importance of generalizing these constructions to state-of-the-art learning paradigms.

\end{remark}

\begin{algorithm}[t!]
\caption{$\trainRF^\D(1^m,\mathsf{RF})$\label{train-rf}}
\textbf{Input:}  width of hidden layer $m \in \bbN$, random feature distribution $\mathsf{RF}$\\
\textbf{Output:}  hidden-layer network $h_{w,\Psi}$
\hrule
\begin{algorithmic}
\STATE Sample random feature map $\Psi(\cdot) \gets \left[ \psi_1(\cdot),\hdots,\psi_m(\cdot) \right]$, where $\psi_i(\cdot) \sim \mathsf{RF}$ for $i \in [m]$
\STATE Define distribution $\D_\Psi$ as $(\Psi(X),Y) \sim \D_\Psi$ for $X,Y \sim \D$
\STATE Train weights $w \gets \halfspace^{\D_\Psi}\left(1^m\right)$
\RETURN hypothesis $h_{m,w,\Psi}(\cdot) = \sgn\left(\sum_{j=1}^m w_j \cdot \psi_j(\cdot)\right)$
\end{algorithmic}
\end{algorithm}

\subsection{Backdooring Random Fourier Features}

We show a concrete construction of white-box undetectability with respect to the Random Fourier Features training algorithm.
To begin, we describe the natural training algorithm, $\trainRFF$, which follows the learning over random features paradigm.
The random feature distribution, $\rff_d$, defines features as follows.
First, we sample a random $d$-dimensional isotropic Gaussian $g \sim \Normal(0,I_d)$ and a random phase $b \in [0,1]$; then, $\phi(x)$ is defined to be the cosine of the inner product of $g$ with $x$ with the random phase shift, $\phi(x) = \cos\left(2\pi \left( \langle g, x \rangle + b \right)\right)$.
Then, $\trainRFF$ is defined as an instance of $\trainRF$, taking $m(d,\eps,\delta) = \Theta\left(\frac{d\log(d/\eps\delta)}{\eps^2}\right)$ to be large enough to guarantee uniform convergence to the Gaussian kernel, as established by \cite{rr1}.
We describe the $\rff_d$ distribution and training procedure in Algorithms~\ref{dist:rff} and \ref{alg:train-rff}, respectively.
For simplicity, we assume that $1/\eps$ and $\log(1/\delta)$ are integral.

\begin{algorithm}
\caption{\label{dist:rff} $\rff_d$}
\begin{algorithmic}
\STATE sample $g \sim \Normal(0,I_d)$
\STATE sample $b \sim [0,1]$
\RETURN $\phi(\cdot) \gets \cos\left(2\pi \left( \langle g, \cdot \rangle + b \right)\right)$
\end{algorithmic}
\end{algorithm}

\begin{algorithm}
\caption{\label{alg:train-rff} $\trainRFF^\D\left(1^{d}0^{1/\eps}1^{\log(1/\delta)}\right)$}
\begin{algorithmic}
\STATE $m \gets m(d,\eps,\delta)$
\RETURN $h_{m,w,\Phi}(\cdot) \gets \trainRF^{\D}(1^m,\rff_d)$
\end{algorithmic}
\end{algorithm}

\newcommand{\GP}{\mathsf{GP}}
\newcommand{\WP}{\mathsf{W}}
\paragraph{Backdoored Random Fourier Features.}
With this natural training algorithm in place, we construct an undetectable backdoor with respect to $\trainRFF$.
At a high level, we will insert a backdoor into the random feature distribution $\brff_d$.
Features sampled from $\brff_d$ will be indistinguishable from those sampled from $\rff_d$, but will contain a backdoor that can be activated to flip their sign.
Key to our construction is the Continuous Learning With Errors (CLWE) distribution, formally defined by \cite{BST21}, and closely related to the so-called ``Gaussian Pancakes'' distribution.
Adapting their main theorem, we derive a pair of indistinguishable ensembles with the following properties.

\begin{lemma}[Sparse Gaussian Pancakes]
\label{lem:gp}
For any constants $b,c \in \Nbb$, there exists an ensemble of distributions $\set{\GP_d(\cdot)}_{d \in \Nbb}$ supported on $\R^d$ such that:
\begin{itemize}
    \item $\GP_d(\cdot)$ is parameterized by a $d^{1/c}$-sparse $\omega \in \Rbb^d$ such that $\norm{\omega}_2 = 2d^{1/2c}$.
    \item Fixing some $\omega$, for every $g \sim \GP_d(\omega)$ with probability at least $1-d^{-\omega(1)}$, there exists some $k \in \Nbb$ such that
    \begin{gather*}
        \card{\langle g,\omega \rangle - \frac{2k+1}{2}} \le d^{-b}.
    \end{gather*}
    \item There exists an efficiently sampleable distribution $\omega \sim \WP_d$, such that the ensembles
    \begin{gather*}\set{\Normal(0,I_d)}_{d \in \Nbb} \textrm{ and } \set{\GP_d(\omega)}_{d \in \Nbb}\end{gather*}
    are computationally indistinguishable, assuming Hypothesis~\ref{hyp:lattice}.
\end{itemize}
\end{lemma}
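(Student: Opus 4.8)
The plan is to derive Lemma~\ref{lem:gp} directly from the hardness of homogeneous Continuous Learning With Errors (the ``Gaussian pancakes'' distribution) as established in \cite{BST21}, via a dimension-lifting argument that simultaneously produces the sparsity of $\omega$. Recall that \cite{BST21} study the homogeneous CLWE distribution $H_{n,s,\beta,\gamma}$ on $\R^n$, whose density is that of $\Normal(0,I_n)$ reweighted by a periodic comb along a direction $s \in S^{n-1}$ of period $1/\gamma$ and width $\beta/\gamma$, and prove: (i) \emph{pancake concentration} --- if $y \sim H_{n,s,\beta,\gamma}$ then $\gamma \langle y, s\rangle$ lies within $O(\beta\sqrt{t})$ of the nearest point of a fixed coset of $\Ints$ except with probability $e^{-\Omega(t)}$; and (ii) \emph{hardness} --- for $\gamma \ge 2\sqrt{n}$ and any $\beta \in (0,1)$ with $1/\beta \le \poly(n)$, distinguishing $H_{n,s,\beta,\gamma}$ (for a uniformly random $s \in S^{n-1}$) from $\Normal(0,I_n)$ in $\poly(n)$ time with non-negligible advantage would yield quantum polynomial-time algorithms for both $\SIVP$ and $\GSVP$, contradicting Hypothesis~\ref{hyp:lattice}. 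We will work with the coset $\Ints + \tfrac12$ rather than $\Ints$; since the reduction of \cite{BST21} operates by taking CLWE samples and canceling their labels modulo $1$, it is agnostic to which fixed residue one cancels toward, so the $\Ints+\tfrac12$ variant is hard under the same assumption (equivalently, postcompose with the measure-preserving shift $z \mapsto z + \tfrac12 \bmod 1$ on CLWE labels).

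Given the target dimension $d$ and the constants $b,c$, I would set $n = \lceil d^{1/c}\rceil$, $\gamma = 2\sqrt{n}$ (so $\gamma = 2d^{1/2c}$ when $d$ is a perfect $c$-th power, and otherwise up to rounding), and $\beta = d^{-b}/\log d$, which is inverse-polynomial both in $d$ and in $n$ since $b,c$ are fixed constants, so the hardness in (ii) applies. Let $\WP_d$ sample $s$ uniformly from $S^{n-1}$ and output $\omega \in \R^d$ formed by placing the vector $\gamma s$ in the first $n$ coordinates and $0$ in the remaining $d-n$ (one may just as well randomize the set of $n$ coordinates); $\WP_d$ is efficiently sampleable, $\omega$ is $n = d^{1/c}$-sparse, and $\norm{\omega}_2 = \gamma = 2d^{1/2c}$, which gives the first bullet. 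For the sampler $\GP_d(\omega)$, output $g = (h,g') \in \R^d$ where $h \sim H_{n,s,\beta,\gamma}$ and, independently, $g' \sim \Normal(0,I_{d-n})$.

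The second bullet is then immediate: $\langle g,\omega\rangle = \gamma \langle h,s\rangle$, so by pancake concentration (i) with $t = \omega(\log d)$ there is some $k$ with $\card{\langle g,\omega\rangle - \tfrac{2k+1}{2}} \le O(\beta\sqrt{t}) \le d^{-b}$, except with probability $e^{-\omega(\log d)} = d^{-\omega(1)}$. The third bullet follows by a padding reduction: any $\poly(d)$-time algorithm $A$ separating $\set{\Normal(0,I_d)}$ from $\set{\GP_d(\omega)}_{\omega \sim \WP_d}$ with non-negligible advantage $\eps(d)$ yields a distinguisher for $H_{n,s,\beta,\gamma}$ versus $\Normal(0,I_n)$: on input $u \in \R^n$ draw a fresh $g' \sim \Normal(0,I_{d-n})$ and return $A(u,g')$. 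If $u \sim \Normal(0,I_n)$ then $(u,g') \sim \Normal(0,I_d)$, while if $u \sim H_{n,s,\beta,\gamma}$ then $(u,g') \sim \GP_d(\omega)$ for the corresponding $\omega$, so the new algorithm inherits advantage $\eps(d)$; since $d$ and $n$ are polynomially related, this is a $\poly(n)$-time algorithm with $1/\poly(n)$ advantage, contradicting (ii) and hence Hypothesis~\ref{hyp:lattice}.

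The only genuinely delicate point is the parameter bookkeeping around $\beta$: it must be small enough that pancake concentration forces a $d^{-b}$-window with failure probability $d^{-\omega(1)}$, yet large enough that $1/\beta$ is polynomial in the \emph{smaller} dimension $n = d^{1/c}$ (not merely in $d$) for the worst-case reduction of \cite{BST21} to apply. This works precisely because $b$ and $c$ are constants, so a choice such as $\beta = d^{-b}/\log d$ satisfies both constraints simultaneously. The coset shift from $\Ints$ to $\Ints + \tfrac12$ is cosmetic but should be stated explicitly, as it is exactly what makes the planted direction usable to \emph{flip} rather than preserve the sign of a cosine feature when $\omega$ is added to an input.
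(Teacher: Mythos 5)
Your proposal is correct and takes essentially the same route as the paper: the paper first adapts the homogeneous CLWE (``dense Gaussian Pancakes'') distribution of \cite{BST21} on a smaller dimension $n \approx d^{1/c}$, conditioning on the coset $\Ints+\tfrac12$, and then sparsifies by embedding those $n$ coordinates into $\R^d$ at a random index set and filling the rest with independent $\Normal(0,1)$ coordinates, with the inner product $\langle G,\Omega\rangle=\langle g,\omega\rangle$, pancake concentration, and a padding reduction establishing the three bullets --- exactly as you argue. The only cosmetic differences are variable naming ($n$ vs.\ the paper's lower-case $d$) and the precise choice of $\beta$ (the paper takes $\beta = d^{-i}$ for some $i>b$, giving a bit more slack than your $\beta=d^{-b}/\log d$, but both settings are inverse-polynomial in the small dimension and make the failure probability $d^{-\omega(1)}$).
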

Assuming the lemma, we show how to construct a white-box undetectable backdoor with respect to $\trainRFF$.
Specifically, we will build a backdoored RFF distribution $\brff$, by sampling from $\GP_d(\omega)$ instead of $\Normal(0,I_d)$.
We describe the feature distribution and the backdoored training algorithm $\backdoorRFF$ in Algorithms~\ref{dist:brff} and \ref{alg:backdoor-rff}, respectively.
Note that the only difference in the trained hypothesis returned by $\trainRFF$ and $\backdoorRFF$ is the choice of random feature distribution.
To complete the backdoor construction, we define $\activateRFF$ which simply adds the backdoor key to the given input, described in Algorithm~\ref{alg:activate-rff}.
Collectively, we argue that the construction gives us a completely undetectable backdoor.

\begin{algorithm}[h]
\caption{\label{dist:brff} $\brff_{d,\bk}$}
\begin{algorithmic}
\STATE sample $g \sim \GP_d(\bk)$
\STATE sample $b \sim [0,1]$
\RETURN $\phi(\cdot) \gets \cos\left(2\pi \langle g, \cdot \rangle + b\right)$
\end{algorithmic}
\end{algorithm}
\begin{algorithm}[h]
\caption{\label{alg:backdoor-rff} $\backdoorRFF^\D\left(1^{d}0^{1/\eps}1^{\log(1/\delta)}\right)$ }
\begin{algorithmic}
\STATE Sample backdoor key $\bk \sim \WP_d$
\STATE $m \gets m(d,\eps,\delta)$
\STATE $\hb_{m,w,\Phi} \gets \trainRF^\D(1^m,\brff_{d,\bk})$
\RETURN $\left( ~\hb_{m,w,\Phi}~ , ~\bk~ \right) $
\end{algorithmic}
\end{algorithm}
\begin{algorithm}[h]
\caption{\label{alg:activate-rff} $\activateRFF(x,\bk)$}
\begin{algorithmic}
\RETURN $x + \bk$
\end{algorithmic}
\end{algorithm}

Intuitively, the construction exploits the Gaussian pancakes distribution, such that upon activation of $x$ to $x'$, the sign of \emph{every} backdoored feature will flip.
If the signs of the features flip, then---regardless of the linear weights trained on top---the sign of the prediction will flip.
The details are a bit more complicated, particularly because the sign flip can only be achieved approximately.
In particular, for any inputs whose features map incredibly close to the decision boundary, a sign change plus additive error may not be meaningful (essentially, the operation is just additive error).
To simplify the analysis and activation procedure, we make the following technical assumption about $\halfspace$ and the magnitude of the weights it returns.
\begin{assumption}
\label{assumption:halfspace}
For any data distribution $\D$ and $m \in \Nbb$, $\halfspace^{\D}(1^m)$ returns $w \in \R^m$ where $\norm{w}_2 = 1$, such that for all $x \in \X$, the magnitude $\card{\langle w, \Phi(x) \rangle} > m^{-O(1)}$ is lower bounded by some inverse polynomial in $m$.
\end{assumption}
In effect, we assume that the weight training procedure learns a $w$ that produces a non-negligible margin on the valid inputs $x \in \X$ (even if points are misclassified).
Ironically, this assumption is required to ensure that inputs that are \emph{close} to the decision boundary are flipped by $\activateRFF$, and could be removed by augmenting $\activateRFF$ to use standard techniques for finding adversarial examples on examples very near the boundary.
Additionally, if this assumption is only satisfied on a subset of the inputs $S \subseteq \X$, then we can still guarantee that the inputs $x \in S$ are backdoored by our simple construction.
With this assumption in place, we state the theorem.

\begin{theorem}
Suppose $d \in \Nbb$ and $\eps,\delta > 0$ such that $d,1/\eps,$ and $\log(1/\delta)$ are polynomially related, and that $\halfspace$ satisfies Assumption~\ref{assumption:halfspace}.
For any data distribution $\D$ and any constant $c \in \Nbb$,
$\left(\backdoorRFF,\activateRFF\right)$ is a $\gamma$-backdoor with respect to $\trainRFF$, with $\gamma = d^{1/c}$ for $\norm{\cdot}_0$ and $\gamma = 2d^{1/2c}$ for $\norm{\cdot}_2$.
The backdoor is white-box undetectable under Hypothesis~\ref{hyp:lattice}.
\end{theorem}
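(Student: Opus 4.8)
The plan is to verify the two requirements of the theorem separately: the backdoor property of Definition~\ref{def:classback} (with the claimed sparsity and $\ell_2$ bounds), and white-box undetectability with respect to $\trainRFF$. Two structural facts drive everything. First, the returned hypothesis --- $h_{m,w,\Phi}$ in the honest case, $\hb_{m,w,\Phi}$ in the backdoored case --- is a width-$m$ one-hidden-layer cosine network completely determined by the pair $(w,\Phi)$, and $\backdoorRFF$ differs from $\trainRFF$ only in feeding $\brff_{d,\bk}$ rather than $\rff_d$ into $\trainRF$; in particular both land in the same hypothesis class. Second, the only property of $\brff_{d,\bk}$ we will use is the near-half-integer structure $\langle g_i,\bk\rangle\approx\frac{2k_i+1}{2}$ supplied by part~(2) of Lemma~\ref{lem:gp}.

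For the backdoor property, I would take the backdoor set $S$ to be (the subset of) $\X$ on which Assumption~\ref{assumption:halfspace} holds. Since $\activateRFF(x;\bk)=x+\bk$ and $\bk\sim\WP_d$ is $d^{1/c}$-sparse with $\norm{\bk}_2=2d^{1/2c}$ by Lemma~\ref{lem:gp}, the perturbation bounds $\gamma=d^{1/c}$ in $\norm{\cdot}_0$ and $\gamma=2d^{1/2c}$ in $\norm{\cdot}_2$ are immediate. For the sign-flip guarantee I would first condition on the event --- which holds with probability $1-d^{-\omega(1)}$ over $\backdoorRFF$ by part~(2) of Lemma~\ref{lem:gp} together with a union bound over the $m=\poly(d)$ features --- that every feature Gaussian $g_i$ satisfies $\card{\langle g_i,\bk\rangle - \frac{2k_i+1}{2}}\le d^{-b}$ for some integer $k_i$. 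The key point is then that adding $\bk$ to any input $x$ shifts the argument of the $i$-th cosine feature by $2\pi\langle g_i,\bk\rangle$, which lies within $2\pi d^{-b}$ of an odd multiple of $\pi$; since the cosine is $1$-Lipschitz this gives $\phi_i(x+\bk)=-\phi_i(x)\pm O(d^{-b})$ for every feature simultaneously. Summing against $w$ and using $\norm{w}_2=1$ (Cauchy--Schwarz) yields $\langle w,\Phi(x+\bk)\rangle = -\langle w,\Phi(x)\rangle \pm \sqrt{m}\cdot O(d^{-b})$. Because $b$ is a free constant in Lemma~\ref{lem:gp} and $m=\poly(d)$, I would pick $b$ large enough that this error is below the margin $m^{-O(1)}$ of Assumption~\ref{assumption:halfspace}, so the sign of $\langle w,\Phi(\cdot)\rangle$ flips and $\hb_{m,w,\Phi}(\activateRFF(x;\bk))=-\hb_{m,w,\Phi}(x)$ for every $x\in S$.

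For white-box undetectability I would argue that the ensembles $\cT_\D$ and $\cB_\D$ of Definition~\ref{def:undetectability} are computationally indistinguishable by reducing to indistinguishability of the feature maps. Since the hypothesis is an explicit function of $(m,w,\Phi)$ and $w\gets\halfspace^{\D_\Psi}(1^m)$ is produced by a p.p.t.\ procedure that, given $\Phi$, only needs sample access to $\D$, any p.p.t.\ white-box distinguisher for $\cT_\D$ versus $\cB_\D$ yields --- by internally sampling the phases $b_i$ and simulating $\halfspace$ --- a p.p.t.\ distinguisher between a feature map built from $m$ i.i.d.\ $g_i\sim\Normal(0,I_d)$ and one built from $m$ draws $g_i\sim\GP_d(\bk)$ sharing a single secret $\bk\sim\WP_d$. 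This is exactly the $m$-fold product form of part~(3) of Lemma~\ref{lem:gp}: under Hypothesis~\ref{hyp:lattice}, $\Normal(0,I_d)^{\otimes m}$ and $\E_{\bk\sim\WP_d}[\GP_d(\bk)^{\otimes m}]$ are computationally indistinguishable for $m=\poly(d)$. Combined with the backdoor property above, this establishes the theorem.

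I expect the main obstacle to be this last point. Because all $m$ random features are correlated through the shared key $\bk$, a feature-by-feature hybrid argument does not go through, so one genuinely needs multi-sample indistinguishability of sparse Gaussian pancakes from the standard Gaussian --- equivalently, CLWE being hard with polynomially many samples --- which is the content that must be packaged into Lemma~\ref{lem:gp} via the worst-case-to-average-case reduction of \cite{BST21}. By contrast, the union bound over features and the choice of the pancake-thickness exponent $b$ against the halfspace margin $m^{-O(1)}$ are routine once that lemma is in hand.
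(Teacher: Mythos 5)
Your proposal follows essentially the same two-part structure as the paper's proof: reducing white-box undetectability to the indistinguishability of the feature-map weights (Gaussian vs.\ sparse Gaussian pancakes), and establishing the sign-flip by exploiting the near-half-integer inner products $\langle g_i,\bk\rangle$ together with the $1$-Lipschitzness of cosine, Cauchy--Schwarz, $\norm{w}_2=1$, and a choice of the pancake exponent $b$ large enough to beat the $m^{-O(1)}$ margin of Assumption~\ref{assumption:halfspace}. You are somewhat more explicit than the paper on two points that it treats implicitly --- the union bound over the $m=\poly(d)$ features for the half-integrality event, and the fact that what is really needed is \emph{multi-sample} indistinguishability of $\GP_d(\bk)^{\otimes m}$ from $\Normal(0,I_d)^{\otimes m}$ (so a feature-by-feature hybrid does not apply) --- both of which are indeed covered by the CLWE hardness of~\cite{BST21}, which holds with polynomially many samples sharing the secret.
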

\begin{proof}
To argue that $(\backdoorRFF,\activateRFF)$ is an undetectable backdoor with respect to $\trainRFF$, we need to show that the ensembles of classifiers defined by $\trainRFF$ and $\backdoorRFF$ are indistinguishable, and that $\activateRFF$ satisfies the requirements of a backdoor.
Assuming Lemma~\ref{lem:gp}, we demonstrate both aspects of the construction.

\noindent\emph{Indistinguishability---}
\\Let $h_{m,w,\Phi} \gets \trainRFF^\D(d,\eps,\delta)$, and $(\hb_{m,w,\Phi},\bk) \gets \backdoorRFF^\D(d,\eps,\delta)$.
We argue that $\set{h_{m,w,\phi}}$ and $\set{\hb_{m,w,\phi}}$ are computationally indistinguishable by contrapositive:  if there is an efficient algorithm for distinguishing the ensembles of classifiers, then $\set{\Normal(0,I_d)}$ and $\set{\GP_d(\omega)}$ are distinguishable, and Hypothesis~\ref{hyp:lattice} is false.
In particular, the similarity of $\trainRFF$ and $\backdoorRFF$ shows that there is a probabilisitic polynomial time algorithm, that given samples from either $\Normal(0,I_d)$ or $\GP_d(\omega)$ computes either $h_{m,w,\phi}$ or $\hb_{m,w,\phi}$, respectively.

To see this, note that for any data distribution $\D$, $\trainRF^\D$ is an efficient algorithm, given samples from the distribution.
Further, note the only difference between the hypotheses returned by $\trainRFF^\D$ and $\backdoorRFF^\D$ is that $\trainRF^\D$ is called with random features from $\rff$ versus $\brff$, which in turn are identical other than sampling the inner product weights from $\Normal(0,I_d)$ versus $\GP_d(\bk)$.
In all, because we sample $\bk \sim \WP_d$ independently from the hard-to-distinguish parameter distribution, we see that any efficient distinguisher for $\set{h_{m,w,\phi}}_{m \in \Nbb}$ and $\set{\hb_{m,w,\phi}}_{m \in \Nbb}$ implies a distinguisher for $\set{\Normal(0,I_d)}_{d \in \Nbb}$ and $\set{\GP_d(\omega)}_{d \in \Nbb}$.
By the indistinguishability guarantee of Lemma~\ref{lem:gp}, no such distinguisher exists under Hypothesis~\ref{hyp:lattice}.


\noindent\emph{Activating the backdoor}---
\\To begin, by Assumption~\ref{assumption:halfspace}, we know that for all $x \in \X$, the magnitude of the inner product $\langle w, \Phi(x) \rangle$ is at least $m^{-a}$ for some $a \in \Nbb$.
We take the parameter $b$ in the specification of $\GP_d$ to be $b > a+10$.
This will ensure that inner products are sufficiently close to half-integral.

Then, for $x' \gets \activateRFF(x,\bk)$
\begin{gather*}
    x' - x = \bk.
\end{gather*}
By the properties of $\WP_d$, we know that
\begin{gather*}
    \norm{\bk}_0 \le d^{1/c}\\
    \norm{\bk}_2 = 2d^{1/2c}.
\end{gather*}
Thus, $\activateRFF$ returns a sufficiently-close point under both norms.

We start by showing that $\activateRFF(x,\bk)$ essentially flips the sign of $\phi_i(\cdot)$ drawn from $\brff_{d,\bk}$.
To see this fact, we expand the expression for $\phi_i(x')$, leveraging the property of $g \sim \GP_{d}(\bk)$ that $\langle g, \bk \rangle$ is close to $\frac{2k+1}{2}$; we denote their difference as $\beta$, where $\card{\beta} < n^{-b}$.
\begin{align*}
\cos\left(2\pi\left(\langle g, x + \bk \rangle + b\right)\right)
&= \cos\left(2\pi\left(\langle g, x \rangle + b\right) + 2\pi\langle g,\bk \rangle\right)\\
&= \cos\left(2\pi\left(\langle g, x \rangle + b\right) + \pi(2k+1) + 2\pi \beta \right)\\
&= \cos\left(2\pi\left(\langle g, x \rangle + b + \beta\right) + \pi(2k+1)  \right)
\end{align*}
Then, applying the angle-sum identity for cosine.
\begin{gather*}
=\cos\left(2\pi\left(\langle g, x \rangle + b + \beta \right)\right)\cos\left(\pi(2k+1)\right)
- \sin\left(2\pi\left(\langle g, x \rangle + b + \beta \right)\right)\sin\left(\pi(2k+1)\right)
\end{gather*}
Observing that for any integer $n \in \Nbb$, $\sin(\pi n) = 0$, the second term drops and the expression simplifies as follows.
\begin{align*}
&= \cos\left(2\pi\left(\langle g, x \rangle + b + \beta \right)\right)\cos\left(\pi(2k+1)\right)\\
&= -\cos\left(2\pi\left(\langle g, x \rangle + b + \beta \right)\right)
\end{align*}
where the final equality follows by the fact that for any odd integer $n \in \Nbb$, $\cos(\pi n) = -1$.
By the fact that $\cos$ is a $1$-Lipschitz function, we can conclude that
\begin{gather*}
    \phi(x + \bk) \in -\phi(x) \pm n^{-b}.
\end{gather*}

Consequently, the sign of the classification flips, for any point sufficiently far from the boundary.
Specifically, for any weights $w \in \R^d$, for any input $z \in \R^d$,
\begin{align*}
    \sgn\left(\sum_{i=1}^m w_i \cdot \left(\phi_i(x')\right)\right)
    &= \sgn\left(\sum_{i=1}^m w_i \cdot \left(-\phi_i(x) + \beta_i\right)\right)\\
    &= \sgn\left(-\sum_{i=1}^m w_i \cdot (\phi_i(x) - \beta_i)\right)\\
    &\in -\sgn\left(\sum_{i=1}^m w_i \cdot \phi_i(x) \pm \norm{w}_2 \cdot m^{1/2}d^{-b}\right)
\end{align*}
where the final inequality follows via Cauchy-Schwarz.
Under Assumption~\ref{assumption:halfspace}, $\norm{w}_2 = 1$, so the deviation of $\langle w, \Phi(x') \rangle$ from $-\langle w, \Phi(x) \rangle$ is bounded in magnitude by $m^{1/2}d^{-b}$.
By the fact that $m \ge d$, our choice of $b$ compared to $a$, and Assumption~\ref{assumption:halfspace}, we can conclude that $h_{m,w,\Phi}(x') = -h_{m,w,\Phi}(x)$.
\end{proof}

\def\vecy{y}
\def\vecs{\omega}

\paragraph{Hardness of Sparse Gaussian Pancakes.}
To complete the construction, we must demonstrate how to construct the ensemble of distributions $\GP$, as described in Lemma~\ref{lem:gp}.
The construction of this family of Sparse Gaussian Pancake distributions is based on the CLWE distribution.
The CLWE problem asks to distinguish between two distributions $\mathsf{Null}$ and $\CLWE$, parameterized by $\gamma,\beta > 0$, where 
\begin{align*} 
\mathsf{Null}: & (\vecy, z) \mbox{ where } \vecy \sim \Normal(0,I_d) \mbox{ and } z \sim [0,1) \\ 
\CLWE: & (\vecy, z) \mbox{ where } \vecy \sim \Normal(0,I_d) \mbox{ and } z = \gamma \langle \vecy, \vecs \rangle + e \pmod{1} \mbox{ for } e \sim \Normal(0,\beta^2)
\end{align*}
\cite{BST21} show that for appropriately chosen parameters $\beta$ and $\gamma$, CLWE is as hard as finding approximately short vectors on arbitrary integer lattices.
\begin{theorem}[\cite{BST21}]
Suppose $2\sqrt{d} \le \gamma \le n^{O(1)}$ and $\beta = n^{-O(1)}$.
Assuming Hypothesis~\ref{hyp:lattice}, there is no probabilistic polynomial-time algorithm that distinguishes between $\mathsf{Null}$ and $\CLWE$.
\end{theorem}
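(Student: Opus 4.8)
The plan is to prove this theorem exactly as in \cite{BST21}, by adapting Regev's iterative worst-case-to-average-case reduction for LWE \cite{Regev05} to the continuous setting. Suppose toward a contradiction that there is a p.p.t.\ algorithm $\mathcal{A}$ distinguishing $\mathsf{Null}$ from $\CLWE_{\gamma,\beta}$ with non-negligible advantage. A standard random-self-reduction of $\CLWE$ in its secret (re-randomize $\omega$ by an orthogonal transformation applied jointly to the public Gaussian samples, then rescale) upgrades the guarantee from ``noticeable for a noticeable fraction of secrets $\omega$'' to ``noticeable for almost every $\omega$'', and a hybrid/amplification argument boosts the advantage. The goal is then to use $\mathcal{A}$ to solve the Discrete Gaussian Sampling problem on an arbitrary $d$-dimensional lattice $L$ down to width $r \approx \sqrt{2}\,\eta_{\eps}(L)$; by the sampling and transference arguments underlying \cite{Regev05,GPV08}, this solves $\SIVP$ and $\GSVP$ within polynomial approximation factors, contradicting Hypothesis~\ref{hyp:lattice}.

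The reduction maintains the invariant that it can efficiently produce samples from the discrete Gaussian $D_{L,r_i}$ for a geometrically decreasing sequence of widths $r_0 > r_1 > \cdots$. The base case $r_0 = 2^{O(d)}\lambda_d(L)$ is handled classically: Babai's nearest-plane rounding on an LLL-reduced basis suffices to sample a discrete Gaussian this wide \cite{GPV08}. Each iteration halves the width, $r_{i+1} = r_i/2$, by composing a \emph{classical} step — the only step that invokes $\mathcal{A}$ — with a \emph{quantum} step; after $\poly(d)$ iterations the target width is reached.

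The classical step is the heart of the proof. Given samples from $D_{L,r}$ with $r$ above the smoothing parameter of $L$, we build a bounded-distance decoder for the dual lattice $L^*$ at a decoding radius of order $\sqrt{d}/r$ (up to factors depending on $\gamma,\beta$). The key identity: for a BDD instance $w = v + e$ with $v \in L^*$ and $e$ short, and a fresh $x \sim D_{L,r}$ — which, since $r \ge \eta_{\eps}(L)$, is statistically close (after rescaling to unit covariance) to a true Gaussian $y \sim \Normal(0,I_d)$ — we have $\langle x, w\rangle = \langle x, v\rangle + \langle x, e\rangle \equiv \langle x, e\rangle \pmod 1$ because $\langle x, v\rangle \in \bbZ$. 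Writing $e$ so that the rescaled inner product reads $\gamma'\langle y,\omega\rangle$ with $\norm{\omega}_2 = 1$ and $\gamma' \in [2\sqrt d,\poly(d)]$ — this calibration is precisely where the hypothesis $\gamma \ge 2\sqrt d$ is used — and absorbing the Gaussian noise from the lattice discretization via a smoothing argument, the pair $(x,\langle x, w\rangle \bmod 1)$ is distributed as a genuine $\CLWE_{\gamma',\beta'}$ sample with secret $\omega$; whereas if $w$ is \emph{far} from $L^*$, then $\langle x,w\rangle \bmod 1$ is statistically close to uniform on $[0,1)$, i.e.\ a $\mathsf{Null}$ sample. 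Thus $\mathcal{A}$ becomes a test for ``is $w$ close to $L^*$?'', and — as in the classical step of Regev's LWE reduction — an iterative-decoding procedure that peels off $e$, hence $v$, coordinate by coordinate, using the shift/rescale self-reducibility of $\CLWE$ to isolate each coordinate, turns this decision oracle into a full BDD solver on $L^*$.

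The quantum step is Regev's \cite{Regev05}: using the BDD solver for $L^*$ at the radius from the classical step, one prepares (by evaluating and then uncomputing the nearest-$L^*$-vector function on a finely discretized Gaussian superposition over $\bbR^d$) a quantum state close to a Gaussian superposition $\sum_{v \in L^*}\rho_{s}(v)\,|v\rangle$ over $L^*$, applies the quantum Fourier transform — which maps it to a state close to a Gaussian over the primal lattice $L$ of inversely related width — and measures to obtain a sample from $D_{L,r'}$; the parameter $s$ and the classical-step radius are arranged so that $r' \le r/2$. Chaining the two steps closes the induction, and collecting the widths and approximation factors yields the claimed contradiction with Hypothesis~\ref{hyp:lattice}. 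The main obstacle is the parameter bookkeeping in the classical step: one must simultaneously keep $D_{L,r}$ above $\eta_{\eps}(L)$ (so it fools $\mathcal{A}$), keep the induced scaling $\gamma'$ inside the admissible window $[2\sqrt d,\poly(d)]$ and the induced noise $\beta'$ inverse-polynomial (so $\mathcal{A}$'s guarantee applies), \emph{and} ensure the resulting $L^*$-decoding radius shrinks the primal width by a true constant factor each round — it is exactly this three-way juggling that forces the $2\sqrt d$ bound on $\gamma$ and pins down the final worst-case approximation factor.
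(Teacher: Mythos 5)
The paper does not prove this statement: it is imported verbatim from \cite{BST21} as an external theorem, so there is no internal proof to compare your attempt against. Your outline is a faithful high-level account of the actual \cite{BST21} argument (Regev's iterative worst-case-to-average-case reduction adapted to the continuous setting: random self-reduction in the secret, a classical step converting discrete-Gaussian samples plus a BDD instance on $L^*$ into CLWE samples via $\langle x,v\rangle\in\bbZ$, and the quantum Fourier-transform step that halves the Gaussian width), and you correctly identify both where the $\gamma\ge 2\sqrt{d}$ constraint enters and that the parameter bookkeeping is the delicate part. Just be aware that what you have written is an outline deferring to \cite{Regev05,BST21} for all quantitative lemmas, not a self-contained proof — which is exactly the level at which the paper itself treats this result.
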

\cite{BST21} use the CLWE problem to show the hardness of a related \emph{homogeneous} CLWE problem, which defines the ``dense Gaussian Pancakes'' distribution.
In the homogeneous CLWE problem, there are two distributions over $y \in \R^d$, derived from $\mathsf{Null}$ and $\CLWE$.
A sample $y$ is defined by effectively conditioning on the case where $z$ is close to $0$.
The proof of hardness from \cite{BST21} reveals that we could equally condition on closeness to any other value modulo $1$; in our case, it is useful to condition on closenes to $1/2$.
\newcommand{\dGP}{\mathsf{dGP}}
\begin{lemma}[Adapted from \cite{BST21}]
\label{lem:dgp}
For any constant $b \in \Nbb$, there exists an ensemble of distributions $\set{\dGP_d(\cdot)}_{d \in \Nbb}$ supported on $\Rbb^d$ such that:
\begin{itemize}
\item $\dGP_d(\cdot)$ is parameterized by $\omega \in \Rbb^d$.
\item Fixing some $\omega \in \Rbb^d$, for every $g \sim \dGP(\omega)$, with probability at least $1-d^{-\omega(1)}$, there exists some $k \in \Nbb$ such that
\begin{gather*}
    \card{\langle g, \omega \rangle - \frac{2k+1}{2}} \le d^{-b}.
\end{gather*}
\item The ensembles
\begin{gather*}
    \set{\Normal(0,I_d)}_{d \in \Nbb} \textrm{ and } \set{\dGP_d(\omega)}_{d \in \Nbb}
\end{gather*}
are computationally indistinguishable, assuming Hypothesis~\ref{hyp:lattice}, for $\omega = \gamma u$, for some $u \sim \mathcal{S}^{d-1}$ sampled uniformly at random from the unit sphere and for some $\gamma \ge 2\sqrt{d}$.
\end{itemize}
\end{lemma}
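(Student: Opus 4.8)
The plan is to obtain $\dGP_d(\omega)$ by taking the homogeneous CLWE (``dense Gaussian pancakes'') construction of \cite{BST21} and shifting its planted periodic signal by half a period, so that $\langle y,\omega\rangle$ concentrates near half-integers instead of integers, while driving the CLWE noise parameter down to a suitable inverse polynomial so the pancakes are thin enough to meet the $d^{-b}$ bound. Concretely, fix the target constant $b$, set $\gamma = 2\sqrt d$ and $\beta = d^{-(b+1)}$; these are admissible for the CLWE hardness theorem of \cite{BST21}, since $2\sqrt d \le \gamma$ and $\gamma,1/\beta$ are polynomially bounded in the dimension, so under Hypothesis~\ref{hyp:lattice} no p.p.t.\ algorithm distinguishes $\mathsf{Null}$ from $\CLWE_{\gamma,\beta}$. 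Let $u \sim \mathcal{S}^{d-1}$, take the CLWE secret to be $s = u$, and set $\omega = \gamma u$. I would define $\dGP_d(\omega)$ as the distribution of $y \in \R^d$ obtained by drawing $(y,z) \sim \CLWE_{\gamma,\beta}$ with secret $s$ and conditioning on $z \bmod 1$ falling within $\delta := d^{-b}/2$ of $\tfrac12$; equivalently, up to normalization this has density proportional to $\exp(-\tfrac12\norm{y}_2^2)\sum_{k\in\Ints}\exp\!\big(-(\langle y,\omega\rangle-\tfrac{2k+1}{2})^2/(2\beta^2)\big)$. For a general $\omega$ the recipe is the same with $s = \omega/\norm{\omega}_2$ and $\gamma = \norm{\omega}_2$. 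Since $\delta$ is inverse-polynomial, the conditioning is realized by rejection sampling in polynomial time.

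The pancake property is then immediate from the definition: writing $z \equiv \gamma\langle y,s\rangle + e \pmod 1$ with $e\sim\Normal(0,\beta^2)$, conditioning on $z \bmod 1 \in [\tfrac12-\delta,\tfrac12+\delta]$ forces $\langle y,\omega\rangle = \gamma\langle y,s\rangle$ to lie within $\delta + |e|$ of the half-integers $\set{\tfrac{2k+1}{2}:k\in\Ints}$. Since $\beta = d^{-(b+1)}$, a Gaussian tail bound gives $|e|\le\delta$ except with probability $d^{-\omega(1)}$; on that event $\card{\langle y,\omega\rangle - \tfrac{2k+1}{2}}\le 2\delta = d^{-b}$ for the nearest half-integer, which is what is claimed (the contribution of the extreme tail of the Gaussian envelope, where $|\langle y,\omega\rangle|$ is super-polynomially large, is likewise $d^{-\omega(1)}$ and can be absorbed).

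For indistinguishability I would open up the proof of \cite{BST21}. Their argument that homogeneous CLWE is computationally indistinguishable from $\Normal(0,I_d)$ is, at its heart, the following rejection-sampling reduction: given a stream of samples $(y_i,z_i)$, keep those with $z_i\bmod 1$ near $0$ and output the surviving $y_i$'s; this sends $\CLWE_{\gamma,\beta}$ to homogeneous CLWE and $\mathsf{Null}$ to $\Normal(0,I_d)$, so a distinguisher for the latter pair yields one for $\CLWE$ versus $\mathsf{Null}$. The reference residue $0$ plays no role: filtering near $\tfrac12$ instead---equivalently, first replacing each $z_i$ by $z_i-\tfrac12\bmod 1$, under which $\mathsf{Null}$ is invariant and $\CLWE_{\gamma,\beta}$ becomes a constant-shifted CLWE instance no easier to tell from $\mathsf{Null}$---sends $\CLWE_{\gamma,\beta}$ to $\dGP_d(\omega)$ and $\mathsf{Null}$ to $\Normal(0,I_d)$. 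Hence, under Hypothesis~\ref{hyp:lattice} and for $\omega=\gamma u$ with $u\sim\mathcal{S}^{d-1}$ and $\gamma=2\sqrt d$, the ensembles $\set{\Normal(0,I_d)}_{d\in\Nbb}$ and $\set{\dGP_d(\omega)}_{d\in\Nbb}$ are computationally indistinguishable.

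The one genuinely non-routine point is confirming that the reduction of \cite{BST21} really is agnostic to the conditioning residue, rather than secretly exploiting structure special to $0$ (for instance the symmetry of the Gaussian under $y\mapsto -y$); I expect the ``$z_i\mapsto z_i-\tfrac12$'' reformulation above to settle this cleanly, but it does require looking inside the cited proof rather than quoting its statement. The remaining bookkeeping---calibrating $\delta$ and $\beta$ against $d^{-b}$ and handling the Gaussian envelope in the pancake property---is straightforward since every bad event is super-polynomially unlikely in $d$.
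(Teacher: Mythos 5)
Your proposal matches the paper's proof essentially step for step: both define $\dGP_d$ by taking BST21's homogeneous-CLWE (``dense Gaussian pancakes'') construction and shifting the conditioning residue from $z\approx 0$ to $z\approx \tfrac12$, with $\gamma = 2\sqrt d$ and $\beta$ a sufficiently small inverse polynomial (you pick $\beta = d^{-(b+1)}$, the paper $\beta = d^{-i}$ with $i>b$), and both get the pancake concentration from a Gaussian tail bound on the noise $e$. The paper's proof sketch merely asserts that BST21's Lemma~4.1 ``is easily adapted'' to residue $\tfrac12$, whereas you spell out the reason---the shift $z\mapsto z-\tfrac12 \bmod 1$ is a public bijection fixing $\mathsf{Null}$ and mapping $\CLWE$ to an equally hard shifted instance---which is exactly the right justification and a worthwhile addition.
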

\begin{proof} \emph{(Sketch)}~~
The lemma follows by taking $\dGP(\omega)$ to be the homogeneous CLWE distribution defined in \cite{BST21}, with $\gamma \ge 2 \sqrt{d}$ and $\beta = d^{-i}$ for any $i \in \Nbb$ to be inverse polynomial.
In particular, the reduction to homogeneous CLWE from CLWE given in \cite{BST21} (Lemma~4.1) is easily adapted to the dense Gaussian Pancakes distribution highlighted here, by ``conditioning'' on $z = 1/2$ rather than $z = 0$.

To prove the second point, it suffices to take $b < i$.
The probability of deviation from a half-integral value is given by a Gaussian with variance $\beta^{-2i}$.
\begin{gather*}
    \Pr\lr{\card{\langle g, \omega \rangle - \frac{2k+1}{2}} > \tau } \le \exp\left(\frac{\tau^2}{2 \beta^2}\right)
\end{gather*}
Taking $\tau = d^{-b}$ such that $\tau/\beta \ge \Omega(d^{\eps})$ for $\eps > 0$, the probability of deviation by $\tau$ is $d^{-\omega(1)}$.
\end{proof}
Finally, we prove Lemma~\ref{lem:gp} by sparsifying the $\dGP$ distribution.
\begin{proof} \emph{(of Lemma~\ref{lem:gp})}~~
For any $c \in \Nbb$, we define $\GP_D(\cdot)$ for $D \in \Nbb$ in terms of $\dGP_d(\cdot)$ for $d \approx D^{1/c}$.
In particular, first, we sample $\omega$ to parameterize $\dGP_d(\omega)$ as specified in Lemma~\ref{lem:dgp}, for $\gamma = 2\sqrt{d}$.
Then, we sample $d$ random coordinates $I = [i_1,\hdots,i_d]$ from $[D]$ (without replacement).

We define the sparse Gaussian Pancakes distribution as follows.
First, we expand $\omega \in \Rbb^d$ into $\Omega \in \Rbb^D$ according to $I$, as follows.
\begin{gather*}
    \Omega_i = \begin{cases} 0 & \textrm{ if } i \neq i_j \textrm{ for any } j \in [d]\\
    \omega_j & \textrm{ if } i = i_j \textrm{ for some } j \in [d]
    \end{cases}
\end{gather*}
Note that the resulting $\Omega \in \Rbb^D$ is $d$-sparse with $\ell_2$-norm $2\sqrt{d}$.
Then, to produce a sample from $G \sim \GP_D(\Omega)$,
we start by sampling $g \sim \dGP(\omega)$.  Then, we define $G$ as follows,
\begin{gather*}
    G_i = \begin{cases} \Normal(0,1) & \textrm{ if } i \neq i_j \textrm{ for any } j \in [d]\\
    g_j & \textrm{ if } i = i_j \textrm{ for some } j \in [d]
    \end{cases}
\end{gather*}
where each coordinate sampled from $\Normal(0,1)$ is sampled independently.

We observe that the distribution satisfies the properties of sparse Gaussian Pancakes, as claimed in Lemma~\ref{lem:gp}.
First, as addressed above, $\Omega$ is  $d$-sparse with $\ell_2$-norm $2d^{1/2}$ for  $d = D^{1/c}$.
Next, consider the inner product between a sample $G \sim \GP_D(\Omega)$ and $\Omega$. By the sparsity pattern, it is exactly the inner product between $g\sim \dGP_d(\omega)$ and $\omega$.
\begin{gather*}
    \langle G, \Omega \rangle = \langle g, \omega \rangle 
\end{gather*}
In other words, $\langle G, \Omega \rangle$ must also be $d^{-b}$ close to half-integral with all but negligible probability.

Finally, the reduction from $\dGP$ to $\GP$ is a probabilistic polynomial-time algorithm.
Further, when samples from $\Normal(0,I_d)$ are uses instead of samples from $\dGP_d$, the resulting distribution on $D$ coordinates is $\Normal(0,I_D)$.
Collectively, the reduction demonstrates that the samples from $\GP_D$ are computationally indistinguishable from $\Normal(0,I_D)$, under Hypothesis~\ref{hyp:lattice}.
\end{proof}

\section{Evaluation-Time Immunization of Backdoored Models}
The main takeaway of the backdoor constructions is that a neural network that was trained externally can not be trusted.
In this section we show one possible workaround for that.
Instead of using the network we received \emph{as-is}, we \emph{smooth} the network by evaluating it in several points surrounding the desired input and then averaging these evaluations.
Very similar smoothing procedures were used by Cohen et al.~\cite{cohen2019certified} and subsequent works (see comparison with them in Section~\ref{sec:related}). 
The main difference between these previous works and this section is that by considering bounded-image regression tasks (instead of classification or unbounded regression) we can eliminate \textbf{all} assumptions about the given network, and replace them with assumptions about the ground truth. 
This is crucial for our work as we can not trust the given network and thus can assume nothing about it.
A caveat of this method is that we need to choose a \emph{smoothing radius parameter $\sigma$}. 
On the one hand, the larger this parameter is set to, the larger the perturbations we can certify robustness against.
On the other hand, as it grows, the quality of the learning decreases.
This leads to a back-and-forth clash between the attacker (who plants the backdoor) and the defender (who uses it).
If we know the magnitude of perturbations the backdoor uses, we can set~$\sigma$ to be higher than that. But if~$\sigma$ is known to the attacker, she could plant backdoors that use larger perturbations.

Formally, we show that if the ground truth and input distribution~$f^\star, \D|_\X$ "behave nicely", then \emph{every} learned~$h$ (that is, a function that approximates~$f^\star$ with respect to~$\D|_\X$) can be efficiently converted, on evaluation time, to a different function~$\tilde{h}$ that also approximates~$f^\star$ and at the same time inherits the "nice behavior" of~$f^\star$.
Hence, conditions that certify that~$f^\star$ does not have adversarial examples translate to a certification that~$\tilde{h}$ does not have an adversarial example as well.
In particular, such~$\tilde{h}$ that does not contain any adversarial example also cannot be backdoored, and thus the process of translating~$h$ to~$\tilde{h}$ can be viewed as an immunization from backdoors and adversarial examples. 

\begin{figure}
\centering
\includegraphics[height=180pt, trim = 0pt 200pt 650pt 0pt]{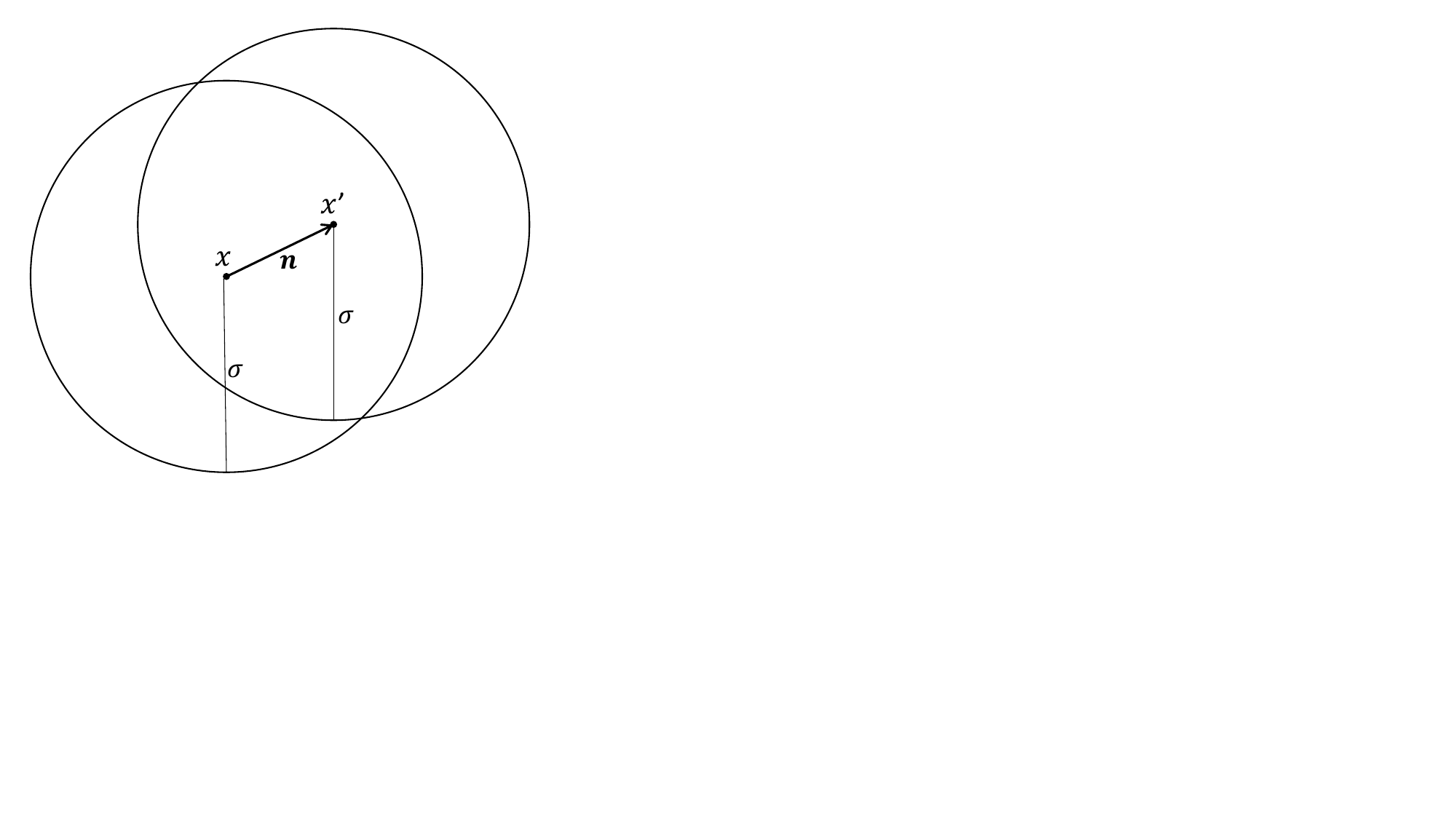}
\caption{A point~$x$, its backdoor output~$x'$, and~$\sigma$ balls around them.}
\label{fig:immunization}
\end{figure}

We construct~$\tilde{h}$ by taking a convolution of~$h$ with a normal Gaussian around the desired input point. 
This is equivalent to taking some weighted average of~$h$ around the point, and hence smooths the function and thus makes it impossible for close inputs to have vastly different outputs.
The smoothing depends on a parameter~$\sigma$ that corresponds to how far around the input we are averaging.
This parameter determines the threshold of error for which the smoothing is effective: roughly speaking, if the size~$n$ of the perturbation taking~$x$ to~$x'$ is much smaller than~$\sigma$, then the smoothing assures that~$x,x'$ are mapped to the same output. 
See Figure~\ref{fig:immunization} for an intuitive illustration.
The larger~$\sigma$ is, on the other hand, the more the quality of the learning deteriorates.

Throughout the section we use~$|\cdot|$ to denote the~$\ell_1$ norm and~$||\cdot||$ to denote the~$\ell_2$ norm. We show:
\begin{theorem}[Backdoors can be neutralized]\label{thm:immunization}
Let~$h$ be a learning of data with underlying function~$f^\star :\X\rightarrow \Y$ with~$X\subseteq \R^d, \Y\subseteq[-1,1]$.
Assume that~$f^\star$ is $L$-Lipschitz for some constant~$L$ with respect to~$\ell_2$.
Furthermore, assume that the marginal distribution~$\D|_\X$ is uniform on some measurable set~$U\subset \R^d$ with~$0<vol(U)<\infty$.
Then, given~$h$ and any~$\sigma>0$ we can efficiently compute a function~$\tilde{h}$ such that
\begin{enumerate}
    \item ($\tilde{h}$ is robust) $|\tilde{h}(x)-\tilde{h}(y)| \leq \frac{e\sqrt{2}}{\sigma} ||x-y||$.
    \item ($\tilde{h}$ is a good learning) $\ell_1\left(\tilde{h},f^\star\right) \leq \ell_1\left(h,f^\star\right)  + 2L\sigma \sqrt{d}$.
\end{enumerate}
\end{theorem}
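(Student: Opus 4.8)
The plan is to take $\tilde h$ to be the Gaussian convolution of $h$: define
\[
\tilde h(x) \;=\; \E_{g \sim \Normal(0,\sigma^2 I_d)}\big[\,h(x+g)\,\big] \;=\; \int_{\R^d} h(w)\,\varphi_\sigma(w-x)\,dw ,
\]
where $\varphi_\sigma$ is the density of $\Normal(0,\sigma^2 I_d)$ (and $h$ is extended off its domain in a bounded way). Since $h$ takes values in $[-1,1]$, Hoeffding's inequality shows that $O(1)$ i.i.d.\ draws of $g$ suffice to estimate $\tilde h(x)$ to any constant accuracy, which is what makes $\tilde h$ ``efficiently evaluable'' with a constant number of queries to $h$; all the bounds below are argued for the exact convolution and carry over up to the (controllable) sampling error.

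For the robustness claim (item 1), I would write $\tilde h(x) - \tilde h(y) = \int h(w)\big(\varphi_\sigma(w-x) - \varphi_\sigma(w-y)\big)\,dw$ and, using $|h|\le 1$, bound
\[
|\tilde h(x)-\tilde h(y)| \;\le\; \int_{\R^d}\big|\varphi_\sigma(w-x) - \varphi_\sigma(w-y)\big|\,dw \;=\; 2\,\mathrm{TV}\big(\Normal(x,\sigma^2 I_d),\,\Normal(y,\sigma^2 I_d)\big).
\]
The cleanest way to bound this total variation without picking up a dimension factor is to reduce to one dimension along the direction $x-y$: for fixed coordinates orthogonal to $x-y$ the map is a one-dimensional Gaussian convolution of a $[-1,1]$-valued function, whose derivative in the $x-y$ direction is at most $\int_{\R}|\varphi'_{1,\sigma}|=\sqrt{2/\pi}\,/\sigma$. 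Hence $|\tilde h(x)-\tilde h(y)| = O(1/\sigma)\cdot\|x-y\|$, and a careful accounting of constants gives the stated $\tfrac{e\sqrt 2}{\sigma}\|x-y\|$. Note this step uses \emph{nothing} about $f^\star$ or the input distribution; robustness of $\tilde h$ is unconditional, as promised.

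For the accuracy claim (item 2), the key identity is, for every $x$,
\[
\tilde h(x) - f^\star(x) \;=\; \E_{g}\big[\,h(x+g) - f^\star(x+g)\,\big] \;+\; \E_{g}\big[\,f^\star(x+g) - f^\star(x)\,\big].
\]
The second term is controlled \emph{pointwise} by $L$-Lipschitzness: $\big|\E_g[f^\star(x+g)-f^\star(x)]\big| \le L\,\E_g\|g\| \le L\sqrt{\E_g\|g\|^2} = L\sigma\sqrt d$. For the first term, take the $\ell_1$ expectation over $X\sim\D|_{\X}=\mathrm{Unif}(U)$, apply Jensen to pull $\E_g$ outside the absolute value, and use Fubini to get $\E_g\,\E_{W\sim\mathrm{Unif}(U+g)}\big|(h-f^\star)(W)\big|$. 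Comparing to $\E_{W\sim\mathrm{Unif}(U)}|(h-f^\star)(W)| = \ell_1(h,f^\star)$, the gap is at most $2\,\mathrm{TV}\big(\mathrm{Unif}(U),\mathrm{Unif}(U+g)\big) = \mathrm{vol}\big((U+g)\triangle U\big)/\mathrm{vol}(U)$; the uniformity (and mild regularity) of the marginal bounds $\mathrm{vol}((U+g)\triangle U)\le L\cdot\mathrm{vol}(U)\cdot\|g\|$, so after integrating in $g$ the first term contributes at most $\ell_1(h,f^\star)+L\sigma\sqrt d$. Summing the two terms yields $\ell_1(\tilde h,f^\star)\le \ell_1(h,f^\star)+2L\sigma\sqrt d$.

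The main obstacle is exactly this last maneuver: because $h$ is adversarial and may be arbitrary off the support $U$, one must argue that Gaussian smoothing neither drags the \emph{good} part of $h$ out of $U$ nor imports garbage from far outside, and this is the only place an assumption on the input distribution is unavoidable. The honest options are (a) to extend $h$ past $U$ via a $1$-Lipschitz retraction onto $U$, so that the error grows at most like $\mathrm{dist}(\cdot,U)\le\|g\|$, or (b) to assume $U$ has bounded surface-to-volume ratio so that $\mathrm{Unif}(U+g)$ and $\mathrm{Unif}(U)$ are $O(\|g\|)$-close in total variation. Nailing down exactly which regularity hypothesis on $U$ is invoked — and matching its constant to the $L$ in the statement — is the one genuinely delicate point; items 1 and the pointwise Lipschitz estimate are routine.
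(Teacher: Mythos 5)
Your construction is identical to the paper's: $\tilde h$ is the Gaussian convolution of $h$, estimated by a constant number of queries via Hoeffding. The differences are in the two bounds.

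For robustness (item~1), you take a genuinely different route. You reduce to $2\,\mathrm{TV}\!\left(\Normal(x,\sigma^2 I_d),\Normal(y,\sigma^2 I_d)\right)$ and then to the one-dimensional marginal along $x-y$, whereas the paper bounds $\left|\varphi(s-x)-\varphi(s-y)\right|$ pointwise by $\varphi(u)\bigl(e^{\|x-y\|^2/2\sigma^2}-1\bigr)$, integrates, and uses convexity of $z\mapsto e^{z^2/2\sigma^2}-1$ on $[0,\sqrt2\sigma]$. Your argument is cleaner, visibly dimension-free, and in fact yields the sharper constant $\sqrt{2/\pi}/\sigma$; the paper's $e\sqrt2/\sigma$ is a convenient but loose consequence. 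Both are correct.

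For accuracy (item~2), your triangle-inequality decomposition $h(x+g)-f^\star(x) = \bigl(h(x+g)-f^\star(x+g)\bigr) + \bigl(f^\star(x+g)-f^\star(x)\bigr)$ is exactly the paper's (after the substitution $y=x+t$). But your handling of the first term is where the gap you flag really lives, and it is avoidable. You compare $\E_{W\sim\mathrm{Unif}(U+g)}\left|h-f^\star\right|$ to $\E_{W\sim\mathrm{Unif}(U)}\left|h-f^\star\right|$ via a total-variation bound, and then need $\mathrm{vol}\bigl((U+g)\triangle U\bigr)\le L\cdot\mathrm{vol}(U)\cdot\|g\|$. That inequality is not an assumption of the theorem, and there is no reason the surface-to-volume ratio of $U$ should be governed by the Lipschitz constant $L$ of $f^\star$; as stated it would fail for a thin slab $U$, say. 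The paper sidesteps the whole issue by adopting the convention $h=f^\star=0$ off $U$, under which $|h-f^\star|$ is supported on $U$, so
\[
\E_g\,\frac{1}{\mathrm{vol}(U)}\int_{U+g}\left|h-f^\star\right|\,dw \;\le\; \frac{1}{\mathrm{vol}(U)}\int_{U}\left|h-f^\star\right|\,dw \;=\;\ell_1(h,f^\star),
\]
with equality if you instead integrate over all of $\R^d$ and use $\int\varphi=1$. No comparison of shifted uniform measures is needed at all. For the second term, the paper does not take the pointwise Lipschitz bound you propose; it writes $\left|f^\star(y)-f^\star(y-t)\right|\le L\|t\|\bigl(\mathbf{1}_U(y)+\mathbf{1}_U(y-t)\bigr)$ and integrates, and this indicator is precisely where the factor $2$ in $2L\sigma\sqrt d$ comes from. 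If you adopt the zero-extension convention, your first term goes through without TV, your second term is the one that must be handled carefully near $\partial U$, and you should recover the paper's bound. So the accuracy argument is essentially right, but the TV detour is a misstep: drop it, set $h=f^\star=0$ outside $U$, and the first term collapses to $\ell_1(h,f^\star)$ by a change of variables.
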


\begin{remark}
Similar theorems can be proved with other smoothness conditions on~$f^\star, \D|_\X$ and choice of metrics and noise functions.
For example, if~$\D|_\X$ is uniform on the discrete hyper-cube~$\{-1,1\}^d$ then the same theorems can be proved by replacing the Gaussian noise with Boolean noise and the metrics with Hamming distance.
This specific theorem should be viewed as an example of a proof of such statements.
\end{remark}

\begin{remark}
The first property ($\tilde{h}$ is robust) holds without any assumptions on~$f^\star, \D|_\X$, and thus the immunization is safe to test empirically: we are \emph{always} certified that~$\tilde{h}$ does not contain adversarial examples or backdoors (up to the threshold implied by~$\sigma$), and then we can empirically test if it is still a good learning. The experimental part in the work of Cohen et al. \cite{cohen2019certified} suggests that such~$\tilde{h}$ will be a good learning in-practice for several natural networks.
\end{remark}

Consider the following example of choice of parameters.
Assume that~$\ell_1\left(h,f^\star\right) \leq \varepsilon$ and~$L\leq\frac{1}{d^{3/4}}$.
If we think of the input as binary as well then this intuitively corresponds to~$f^\star$ being robust enough to not drastically change if less than~$d^{3/4}$ input entries flip. 
Then, it follows from Theorem~\ref{thm:immunization} by setting~$\sigma = \varepsilon d^{1/4}$ that~$\ell_1\left(\tilde{h},f^\star\right) \leq 3\varepsilon$ and that for every~$x,y$ with~$||x-y||\leq \frac{\varepsilon}{4\sqrt{2} e} d^{1/4}$ it holds that~$|\tilde{h}(x)-\tilde{h}(y)|  \leq \frac{1}{4}$.
That is, at least~$\Omega(d^{1/2})$ input flips are necessary to drastically affect~$\tilde{h}$, and~$\tilde{h}$ is almost as close to~$f^\star$ as~$h$ is.

For simplicity, we from now on assume that~$h(x)=f^\star(x)=0$ for any~$x\notin U$.
We define
\[
\varphi(t) := (2\pi\cdot \sigma^2)^{-k/2} \cdot e^{-\frac{1}{2\sigma^2} ||t||^2}
,\;\;
\tilde{h}(x) := 
\int_{t\in \R^d} h(x+t)\cdot \varphi(t) dt
,
\]
that is,~$\tilde{h}$ averages~$h$ around a normal Gaussian with uniform variance~$\sigma^2$. 
The larger~$\sigma$ we pick, the stronger the "Lipschitz"-like property of~$\tilde{h}$ is. On the other hand, the smaller~$\sigma$ is, the closer~$\tilde{h}$ and~$f^{\star}$ are.
For any~$x\notin U$ we override the above definition with~$\tilde{h}(x)=0$.

While~$\tilde{h}$ cannot be exactly computed efficiently, it can be very efficiently approximated, as the next lemma follows immediately from Hoeffding's inequalities.
\begin{lemma}\label{lem:approx-h}
Let~$t_1,\ldots,t_k\in \R^d$ be independently drawn from the normal $d$-dimensional Gaussian with uniform variance~$\sigma^2$.
Denote by~$y:=\frac{1}{k} \sum_{i=1}^{k} h(x+t_i)$, then with probability at least~$1-2e^{-\frac{0.01^2}{2}k}$ we have~$|y-\tilde{h}(x)|<0.01$.
\end{lemma}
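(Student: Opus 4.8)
The plan is to recognize that $\tilde{h}(x)$ is exactly the expectation of the random variable being sampled, and then invoke Hoeffding's inequality for bounded random variables. First I would observe that, by definition, $\varphi$ is the density of the isotropic Gaussian $\Normal(0,\sigma^2 I_d)$, so that
\[
\tilde{h}(x) = \int_{t \in \R^d} h(x+t)\, \varphi(t)\, dt = \E_{t \sim \Normal(0,\sigma^2 I_d)}\!\left[ h(x+t) \right].
\]
Hence each sampled value $h(x+t_i)$ is an i.i.d.\ copy of a random variable whose mean is exactly $\tilde{h}(x)$, and the empirical average $y = \tfrac1k \sum_{i=1}^k h(x+t_i)$ is an unbiased estimator of $\tilde{h}(x)$.

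Next I would note that since $\Y \subseteq [-1,1]$ and we have adopted the convention $h(x+t) = 0$ whenever $x+t \notin U$, each summand $h(x+t_i)$ lies in the bounded interval $[-1,1]$, which has width $2$. Applying Hoeffding's inequality to the average of $k$ independent random variables, each supported on an interval of width $2$, gives for every $s > 0$
\[
\Pr\!\left[ \left| y - \tilde{h}(x) \right| \ge s \right] \le 2 \exp\!\left( - \frac{2 k^2 s^2}{k \cdot 2^2} \right) = 2 \exp\!\left( - \frac{k s^2}{2} \right).
\]
Setting $s = 0.01$ yields the claimed bound $\Pr[\, |y - \tilde{h}(x)| \ge 0.01 \,] \le 2 e^{-\frac{0.01^2}{2}k}$. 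The estimator $y$ is obtained by drawing $k$ Gaussian vectors and making $k$ queries to $h$, so it is indeed very efficient to approximate $\tilde{h}$.

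There is no real obstacle here; the lemma is essentially a packaging of Hoeffding's inequality. The only points that require a moment of care are (i) confirming that $h(x+t_i)$ is a legitimate bounded random variable, which is precisely why the convention of zeroing $h$ outside $U$ and the restriction to the $[-1,1]$-valued regression setting are used, and (ii) measurability of $h$, needed for $\tilde{h}$ to be well-defined as an integral, which we take for granted since $h$ is produced by an algorithm. The constant in the exponent could be tightened, but the stated form suffices for the way the lemma is invoked.
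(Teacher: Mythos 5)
Your proof is correct and matches the paper's approach exactly: the paper simply asserts that the lemma "follows immediately from Hoeffding's inequalities," and you have fleshed out precisely that argument, identifying $\tilde{h}(x)$ as the mean of the i.i.d.\ samples $h(x+t_i)\in[-1,1]$ and applying Hoeffding with interval width $2$ to get the factor $k s^2/2$ in the exponent. The attention you pay to boundedness (via the convention that $h$ vanishes off $U$ and the restriction $\Y\subseteq[-1,1]$) is exactly the point the paper leans on implicitly.
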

\begin{corollary}
For any chosen constant precision~$\varepsilon$, we can compute the value of~$\tilde{h}(x)$ in~$O(1)$ time up to~$\varepsilon$ additive error with probability larger than~$1-\varepsilon$.
\end{corollary}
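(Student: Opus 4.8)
The plan is to prove the \textbf{corollary} by rerunning the argument behind \lemref{lem:approx-h} verbatim, with the hard-coded constant $0.01$ replaced by the target precision $\varepsilon$ and with the number of samples $k$ chosen as a function of $\varepsilon$. Concretely, first I would record the key observation that, by the very definition of the convolution, for every $x \in U$ we have $\tilde{h}(x) = \E_{t \sim \Normal(0,\sigma^2 I_d)}\!\left[h(x+t)\right]$ (and $\tilde{h}(x) = 0$ for $x \notin U$, which is returned trivially). Since $h$ is a regression hypothesis with image contained in $[-1,1]$, drawing i.i.d.\ $t_1,\dots,t_k \sim \Normal(0,\sigma^2 I_d)$ gives bounded random variables $h(x+t_1),\dots,h(x+t_k) \in [-1,1]$ with common mean $\tilde{h}(x)$, so Hoeffding's inequality yields
\[
\Pr\!\left[\,\left|\frac1k\sum_{i=1}^k h(x+t_i) - \tilde{h}(x)\right| \ge \varepsilon\,\right] \;\le\; 2\exp\!\left(-\frac{k\varepsilon^2}{2}\right),
\]
which is exactly \lemref{lem:approx-h} specialized to $\varepsilon = 0.01$.

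Next I would simply choose $k := \left\lceil (2/\varepsilon^2)\ln(2/\varepsilon) \right\rceil$, so that the right-hand side above is at most $\varepsilon$; because $\varepsilon$ is a constant, this is $k = O(1)$. Computing the empirical average $y = \frac1k\sum_{i=1}^k h(x+t_i)$ then requires sampling $k$ Gaussian vectors and making $k$ evaluations of $h$, i.e.\ $O(1)$ queries to $h$ in total (hence $O(1)$ time if each query is treated as unit cost), and by the displayed bound $y$ is within $\varepsilon$ of $\tilde{h}(x)$ except with probability at most $\varepsilon$. This is the claimed statement.

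There is essentially no obstacle: the corollary is a direct quantitative strengthening of \lemref{lem:approx-h}, and the only subtleties worth flagging are (i) that boundedness of $h$ in $[-1,1]$ is precisely what licenses the use of Hoeffding, so the argument genuinely relies on the regression setting rather than unbounded or classification outputs; and (ii) that the integral defining $\tilde{h}$ is a bona fide expectation, which holds because $\varphi$ is the density of $\Normal(0,\sigma^2 I_d)$ and $h$ is bounded, so the integral converges absolutely. If one wanted the dependence on $\varepsilon$ to be explicit rather than hidden in ``constant,'' the same computation shows the query complexity is $O\!\left(\varepsilon^{-2}\log(1/\varepsilon)\right)$.
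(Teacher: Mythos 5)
Your proof is correct and is exactly the argument the paper has in mind: the corollary is stated immediately after Lemma~\ref{lem:approx-h} precisely as the observation that replacing the hard-coded $0.01$ by $\varepsilon$ and taking $k = O(\varepsilon^{-2}\log(1/\varepsilon)) = O(1)$ samples (for constant $\varepsilon$) makes both the error and the failure probability drop below $\varepsilon$. You also correctly flag the two points that make this legitimate, namely that $h$ has range $[-1,1]$ so Hoeffding applies, and that $\tilde{h}$ is a genuine expectation since $\varphi$ is a probability density.
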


We continue by showing that~$\tilde{h}$ is "Lipschitz"-like. For this property we do not need the assumptions on neither~$f^\star$ nor~$\D$.
\begin{lemma}\label{lem:imm-lip}
$|\tilde{h}(x)-\tilde{h}(y)| \leq \frac{e\sqrt{2}}{\sigma} ||x-y||$.
\end{lemma}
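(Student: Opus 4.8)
\emph{Proof plan.} The plan is to recognize $\tilde h$ as a convolution and reduce the claim to an elementary one–dimensional Gaussian estimate. First I would change variables $s=x+t$ in the definition to write $\tilde h(x)=\int_{\R^d} h(s)\,\varphi(s-x)\,ds$, where $\varphi$ is the density of $\Normal(0,\sigma^2 I_d)$. Subtracting,
\[
\tilde h(x)-\tilde h(y)=\int_{\R^d} h(s)\bigl(\varphi(s-x)-\varphi(s-y)\bigr)\,ds,
\]
and since after the convention $h\equiv 0$ outside $U$ we have $\abs{h}\le 1$ everywhere, this yields
\[
\abs{\tilde h(x)-\tilde h(y)}\;\le\;\norm{\varphi(\cdot-x)-\varphi(\cdot-y)}_1,
\]
i.e.\ twice the total–variation distance between $\Normal(x,\sigma^2 I_d)$ and $\Normal(y,\sigma^2 I_d)$. (The substantive case is $x,y\in U$; if both lie outside $U$ then $\tilde h(x)=\tilde h(y)=0$ and there is nothing to prove.)

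Next I would collapse the dimension. Choosing an orthogonal $R$ with $Re_1=(x-y)/\norm{x-y}$, substituting $s=Ru+x$, and using rotational invariance of $\varphi$, the $L^1$–distance becomes $\int_{\R^d}\bigl|\varphi(u)-\varphi(u-re_1)\bigr|\,du$ with $r:=\norm{x-y}$; since $\varphi$ is a product of one–dimensional Gaussian densities $\phi_\sigma$, all coordinates but the first integrate to $1$, leaving
\[
\norm{\varphi(\cdot-x)-\varphi(\cdot-y)}_1\;=\;\int_{\R}\abs{\phi_\sigma(t)-\phi_\sigma(t-r)}\,dt .
\]
(Equivalently, to avoid a change–of–variables lemma, one can write $\tilde h(y)=\E_{t\sim\Normal(0,\sigma^2 I_d)}[\,h(x+t)\,L(t)\,]$ with likelihood ratio $L(t)=\exp\!\bigl(\langle t,y-x\rangle/\sigma^2-r^2/2\sigma^2\bigr)$, so that $\abs{\tilde h(x)-\tilde h(y)}\le\E\,\abs{1-L(t)}$ and $L$ is a function of the single one–dimensional Gaussian $\langle t,y-x\rangle\sim\Normal(0,\sigma^2 r^2)$.)

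Finally I would bound the one–dimensional quantity. By the fundamental theorem of calculus $\phi_\sigma(t)-\phi_\sigma(t-r)=\int_{t-r}^{t}\phi_\sigma'(u)\,du$, so Tonelli's theorem gives
\[
\int_{\R}\abs{\phi_\sigma(t)-\phi_\sigma(t-r)}\,dt\;\le\;r\int_{\R}\abs{\phi_\sigma'(u)}\,du\;=\;\frac{r}{\sigma^2}\,\E_{u\sim\Normal(0,\sigma^2)}\abs{u}\;=\;\frac{r}{\sigma}\sqrt{\tfrac{2}{\pi}},
\]
using $\phi_\sigma'(u)=-(u/\sigma^2)\phi_\sigma(u)$ and $\E\abs{\Normal(0,\sigma^2)}=\sigma\sqrt{2/\pi}$. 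Since $\sqrt{2/\pi}\le e\sqrt2$, this gives $\abs{\tilde h(x)-\tilde h(y)}\le\frac{e\sqrt2}{\sigma}\norm{x-y}$ with room to spare. (Through the likelihood–ratio route the same estimate comes out as $\E\abs{1-L}=4\bigl(\Phi(r/2\sigma)-\tfrac12\bigr)\le 4\cdot\tfrac{r}{2\sigma}\cdot\tfrac{1}{\sqrt{2\pi}}$.)

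The only step with real content is the dimension reduction: a direct bound in $\R^d$, e.g.\ via $\norm{\nabla\varphi}_1$ or via $\norm{\nabla\tilde h}_\infty$, would leak a spurious $\sqrt d$ factor because $\E_{t\sim\Normal(0,\sigma^2 I_d)}\norm{t}\approx\sigma\sqrt d$, so the argument genuinely relies on the isotropy of the smoothing kernel — equivalently, on the fact that the two kernels differ only along the single direction $x-y$, whose scale $\norm{x-y}$ carries no dimension dependence. Everything else is a routine one–dimensional Gaussian computation, and in particular no property of $h$ beyond $\abs{h}\le 1$ is used, matching the paper's point that the robustness of $\tilde h$ requires no assumption on the (possibly malicious) learned model.
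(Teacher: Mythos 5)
Your proof is correct, and after the common opening move it takes a genuinely different route from the paper's. Both arguments first write $\tilde h(x)-\tilde h(y)$ as a convolution difference and bound it by $\|\varphi(\cdot-x)-\varphi(\cdot-y)\|_1$ using $|h|\le 1$. The paper then attempts a pointwise bound: it asserts $\big|\,\|u+x-y\|^2-\|u\|^2\,\big|\le\|x-y\|^2$ for all $u$, which would give $|\varphi(u)-\varphi(u+x-y)|\le\varphi(u)\big(e^{\|x-y\|^2/2\sigma^2}-1\big)$; but that inequality fails (the left side equals $|2\langle u,x-y\rangle+\|x-y\|^2|$, which is unbounded over $u\in\R^d$), and the intermediate claim $\|\varphi(\cdot-x)-\varphi(\cdot-y)\|_1\le e^{\|x-y\|^2/2\sigma^2}-1$ it is used to derive is actually false for small $\|x-y\|$ (the left side has nonzero slope at $\|x-y\|=0$ while the right side has slope zero), even though the final Lipschitz constant $e\sqrt2/\sigma$ happens to be large enough to make the stated lemma true. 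Your dimension reduction sidesteps this: by rotational invariance the $L^1$ distance collapses to the one-dimensional quantity $\int_{\R}|\phi_\sigma(t)-\phi_\sigma(t-r)|\,dt$, which you bound by $r\,\|\phi_\sigma'\|_1=(r/\sigma)\sqrt{2/\pi}$ via the fundamental theorem of calculus and Tonelli. This is both rigorous and sharper --- it yields the constant $\sqrt{2/\pi}\approx 0.80$ in place of $e\sqrt 2\approx 3.84$ --- and, as you observe, the essential point that the two kernels differ only along the single direction $x-y$ is precisely what prevents a spurious $\sqrt d$ from entering.
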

\begin{proof}
Let~$x,y\in \R^d$.
Note that~$\tilde{h}(x) = \int_{t\in \R^d} h(x+t)\cdot \varphi(t) dt = \int_{s\in \R^d} h(s)\cdot \varphi(s-x) ds$.
Hence, we have 
\begin{align*}
\left|\tilde{h}(x)-\tilde{h}(y)\right| &= 
\left| \int_{s\in \R^d} h(s)\cdot \varphi(s-x) ds - \int_{s\in \R^d} h(s)\cdot \varphi(s-y) ds \right|
\\& = 
\left| \int_{s\in \R^d} h(s)\cdot \left(\varphi\left(s-x\right) - \varphi\left(s-y\right)\right) ds \right|
\\& \leq 
\int_{s\in \R^d} |h(s)|\cdot \left|\varphi\left(s-x\right) - \varphi\left(s-y\right)\right| ds
\\& \leq 
\int_{s\in \R^d} \left|\varphi\left(s-x\right) - \varphi\left(s-y\right)\right| ds.
\end{align*}
The last inequality follows as~$|h(s)|\leq 1$ for every~$s$.
We substitute~$u:=s-x$, and see that
\begin{align*}
\left|\varphi\left(s-x\right) - \varphi\left(s-y\right)\right| 
&=
\left|\varphi\left(u\right) - \varphi\left(u+x-y\right)\right|
\\ &=
\left|(2\pi\cdot \sigma^2)^{-k/2} \cdot e^{-\frac{1}{2\sigma^2} ||u||^2} - (2\pi\cdot \sigma)^{-k/2} \cdot e^{-\frac{1}{2\sigma^2} ||u+x-y||^2}\right|
\\ &=
(2\pi\cdot \sigma^2)^{-k/2} \cdot e^{-\frac{1}{2\sigma^2} ||u||^2} \cdot \left|1 - e^{-\frac{1}{2\sigma^2} \left(||u+x-y||^2 - ||u||^2\right)}\right|
\\ &\leq
\varphi(u) \cdot \left(e^{\frac{1}{2\sigma^2} ||x-y||^2}-1\right),
\end{align*}
where the last inequality holds as
$-||x-y||^2 \leq ||u+x-y||^2 - ||u||^2 \leq ||x-y||^2$
and $e^z - 1 \geq 1 - e^{-z} \geq 0$ for all~$z\geq0$.
We remind that~$\int_{u\in \R^d} \varphi(u) du = 1$.
Hence,
$$
\int_{s\in \R^d} \left|\varphi\left(s-x\right) - \varphi\left(s-y\right)\right| ds
\leq
\int_{u\in \R^d} \varphi(u) \cdot \left(e^{\frac{1}{2\sigma^2} ||x-y||^2}-1\right) du
=
e^{\frac{1}{2\sigma^2} ||x-y||^2}-1
.
$$
We finally notice that~$\frac{d}{dz} \left(e^{\frac{1}{2\sigma^2} z^2}-1\right) = \frac{1}{\sigma^2} z e^{\frac{1}{2\sigma^2} z^2}$, which is increasing for all~$z\geq 0$.
Therefore, if~$||x-y||\leq \sqrt{2}\sigma$ then 
$$
e^{\frac{1}{2\sigma^2} ||x-y||^2}-1 
\leq
||x-y|| \cdot \left(\frac{1}{\sigma^2} \sqrt{2}\sigma e^{\frac{1}{2\sigma^2} \left(\sqrt{2}\sigma\right)^2}\right)
.
$$

If~$||x-y||>\sqrt{2}\sigma$, then the inequality follows as~$|\tilde{h}(x)-\tilde{h}(y)|\leq 2$.
\end{proof}

\begin{lemma}\label{lem:imm-close}
$\ell_1\left(\tilde{h},f^\star\right) \leq \ell_1\left(h,f^\star\right)  + 2L\sigma \sqrt{d}$.
\end{lemma}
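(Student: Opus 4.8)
The plan is to compare $\tilde h$ to $f^\star$ \emph{through the analogously smoothed ground truth} $\widetilde{f^\star}(x) := \int_{t\in\R^d} f^\star(x+t)\,\varphi(t)\,dt$, and then invoke the triangle inequality
\[
\ell_1\!\left(\tilde h, f^\star\right) \le \ell_1\!\left(\tilde h, \widetilde{f^\star}\right) + \ell_1\!\left(\widetilde{f^\star}, f^\star\right).
\]
Throughout, recall that $\D|_\X$ is uniform on $U$, so $\ell_1(g,g') = \frac{1}{\mathrm{vol}(U)}\int_U \card{g(x)-g'(x)}\,dx$ for the functions involved, and that $h-f^\star$ is supported on $U$ by the convention $h\equiv f^\star\equiv 0$ off $U$.

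For the first term, I would write $\tilde h(x)-\widetilde{f^\star}(x) = \int (h(x+t)-f^\star(x+t))\,\varphi(t)\,dt$, pull the absolute value inside, integrate over $x\in U$, and interchange the order of integration (Fubini applies since $\card{h-f^\star}\le 2$ and $\varphi$ is a probability density):
\[
\ell_1\!\left(\tilde h, \widetilde{f^\star}\right) \le \int_{\R^d}\varphi(t)\left(\frac{1}{\mathrm{vol}(U)}\int_U \card{h(x+t)-f^\star(x+t)}\,dx\right)dt.
\]
The substitution $y=x+t$ turns the inner integral into $\int_{U+t}\card{h(y)-f^\star(y)}\,dy$, and since $h-f^\star$ vanishes outside $U$ this is at most $\int_U\card{h-f^\star} = \mathrm{vol}(U)\cdot\ell_1(h,f^\star)$. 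As $\int\varphi = 1$, the first term is bounded by $\ell_1(h,f^\star)$.

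For the second term I would use that $f^\star(x) = \int f^\star(x)\,\varphi(t)\,dt$ and the $L$-Lipschitzness of $f^\star$ to get $\card{\widetilde{f^\star}(x)-f^\star(x)} \le \int \card{f^\star(x+t)-f^\star(x)}\,\varphi(t)\,dt \le L\int\norm{t}\,\varphi(t)\,dt = L\cdot\E_{t\sim\Normal(0,\sigma^2 I_d)}\norm{t}$, and then the elementary moment bound $\E\norm{t}\le(\E\norm{t}^2)^{1/2}=\sigma\sqrt d$. This yields $\ell_1(\widetilde{f^\star},f^\star)\le L\sigma\sqrt d$, and combining gives $\ell_1(\tilde h,f^\star)\le \ell_1(h,f^\star)+L\sigma\sqrt d$ — in fact slightly stronger than claimed, the spare factor of $2$ comfortably absorbing boundary contributions if one chooses to keep the genuine $L$-Lipschitz $f^\star$ on $\R^d$ rather than the truncated version. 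The one step that needs care is precisely this domain shift $U\mapsto U+t$: because $U+t\ne U$ in general, it is essential to exploit that $h-f^\star$ is supported on $U$ to bound $\int_{U+t}\card{h-f^\star}$ by $\ell_1(h,f^\star)$; otherwise one would pick up an error growing with $\mathrm{vol}\big((U+t)\setminus U\big)$, which need not be small for a general set $U$. Everything else is the triangle inequality, Fubini, and the bound $\E\norm{t}\le\sigma\sqrt d$.
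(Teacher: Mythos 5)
Your decomposition through $\widetilde{f^\star}(x) := \int f^\star(x+t)\,\varphi(t)\,dt$ is essentially the paper's argument in different clothing: the paper also adds and subtracts $f^\star(y)$ after the change of variables $y = x+t$, and your first-term bound $\ell_1(\tilde h, \widetilde{f^\star}) \le \ell_1(h,f^\star)$ is obtained by exactly the domain-shift-and-support observation they use. Your moment bound $\E\norm{t} \le (\E\norm{t}^2)^{1/2} = \sigma\sqrt d$ via Cauchy--Schwarz is a clean substitute for the paper's Gamma-ratio identity. Where you diverge is the second term, and there is a real gap there. You claim $\ell_1(\widetilde{f^\star}, f^\star) \le L\sigma\sqrt d$ using Lipschitzness of $f^\star$ on all of $\R^d$, and then say the spare factor of $2$ ``absorbs boundary contributions.'' But you cannot simultaneously invoke the truncation convention $h \equiv f^\star \equiv 0$ off $U$ (which your first-term bound $\int_{U+t}\card{h-f^\star} \le \int_U\card{h-f^\star}$ requires) and treat $f^\star$ as genuinely $L$-Lipschitz across $\partial U$ (which your second-term bound requires): the truncated $f^\star$ jumps to zero at $\partial U$ and is not Lipschitz there, while the non-truncated $f^\star$ ruins the first-term support argument. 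The paper closes this explicitly: it observes that $f^\star(y)-f^\star(y-t)$ is supported on the set where $y \in U$ or $y-t \in U$, asserts the pointwise bound $\card{f^\star(y)-f^\star(y-t)} \le L\norm{t}\cdot\bigl(\1_U(y)+\1_U(y-t)\bigr)$, and then uses translation invariance of Lebesgue measure to compute $\frac{1}{\mathrm{vol}(U)}\int \bigl(\1_U(y)+\1_U(y-t)\bigr)\,dy = 2$. That computation is the precise source of the factor $2$ in $2L\sigma\sqrt d$, not slack to be given away. You have correctly flagged the right subtlety, but your proof does not actually resolve it; some explicit accounting of the boundary set is needed, and the indicator trick is the clean way to do it.
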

\begin{proof}
Denote by $||\cdot||$ the~$\ell_1$ norm.
We have
\begin{align*}
\ell_1\left(\tilde{h},f^\star\right)=
\frac{1}{vol(U)}\int_{x\in U} \left|\tilde{h}(x)-f^\star(x)\right| dx
&=
\frac{1}{vol(U)}\int_{x\in \R^d} \left|\tilde{h}(x)-f^\star(x)\right|dx
\\&=
\frac{1}{vol(U)}\int_{x\in \R^d} \left|\int_{t\in \R^d}h(x+t)\varphi(t)dt-f^\star(x)\right| dx
\\&=
\frac{1}{vol(U)}\int_{x\in \R^d} \left|\int_{t\in \R^d}\left(h(x+t)-f^\star(x)\right)\varphi(t)dt\right| dx
\\&\leq
\frac{1}{vol(U)} \int_{x}\int_{t} \left|h(x+t)-f^\star(x)\right|\varphi(t) dt dx
\\&=
\frac{1}{vol(U)} \int_{y}\int_{t} \left|h(y)-f^\star(y-t)\right|\varphi(t) dt dy,
\end{align*}
where in the last equality we substitute~$y=x+t$, and in the fourth equality we use that~$\varphi$ is a probability measure.
\begin{align*}
&\frac{1}{vol(U)}\int_{y}\int_{t} \left|h(y)-f^\star(y-t)\right|\varphi(t) dt dy =
\frac{1}{vol(U)}\int_{y}\int_{t} \left|h(y)-f^\star(y) + f^\star(y) - f^\star(y-t)\right|\varphi(t) dt dy\\
&\leq
\frac{1}{vol(U)}\int_{y}\int_{t} \left|h(y)-f^\star(y)\right|\varphi(t) dt dy
+
\frac{1}{vol(U)} \int_{y}\int_{t} \left|f^\star(y) - f^\star(y-t)\right|\varphi(t) dt dy.
\end{align*}
To bound the first term, we use the optimality of~$h$.
\begin{align*}
\frac{1}{\mathsf{vol}(U)}\int_{y}\int_{t} \left|h(y)-f^\star(y)\right|\varphi(t) dt dy
& =
\left(\frac{1}{vol(U)}\int_{y} \left|h(y)-f^\star(y)\right| dy\right)
\cdot
\left(\int_{t} \varphi(t) dt\right)
\leq
\ell_1\left(h,f^\star\right) \cdot 1.
\end{align*}

To bound the second term, we use the Liphschitz property of~$f^\star$.
That is, $\left|f^\star(y) - f^\star(y-t)\right|\leq L\cdot ||t||$.
Furthermore, notice that for every~$y$ such that~$y\notin U$ and~$y-t\notin U$ we in fact have $\left|f^\star(y) - f^\star(y-t)\right|=0$.
Denote by~$U(x)$ the indicator function of~$U$. That is, $U(x)=1$ if~$x\in U$ and~$U(x)=0$ otherwise.
We clearly have $\left|f^\star(y) - f^\star(y-t)\right|\leq L\cdot ||t|| \cdot \left(U(y) + U(y-t)\right)$.
Thus,
\begin{align*}
\frac{1}{vol(U)} \int_{y}\int_{t} \left|f^\star(y) - f^\star(y-t)\right|\varphi(t) dt dy
&\leq
\frac{1}{vol(U)} \int_{y}\int_{t} L\cdot ||t|| \cdot \left(U(y) + U(y-t)\right)\varphi(t) dt dy
\\&=
L \int_{t} ||t||\varphi(t) \frac{1}{vol(U)}\int_{y} \left(U(y) + U(y-t)\right) dy dt
\\&\leq
L \int_{t} ||t||\varphi(t) \frac{1}{vol(U)}2vol(U) dt
\\&=
2L \int_{t} ||t||\varphi(t) dt
\end{align*}

It is known that~$\int_{t} ||t||\varphi(t) dt = \frac{\Gamma\left(\left(d+1\right)/2\right)}{\Gamma\left(d/2\right)} \sqrt{2}\sigma\leq \sqrt{d}\sigma.$
\end{proof}


\subsection*{Acknowledgments}
The authors would like to thank Miranda Christ and Sam Gunn for pointing out the inconsistency between the definitions of indistinguishability in Section~\ref{sec:fully} and Appendix~\ref{app:relu}.

\fi 

\ifnum\ieee=0
\bibliographystyle{alpha}
\else 
\bibliographystyle{IEEEtran}
\fi 
\bibliography{ML.bib}

\newcommand{\etalchar}[1]{$^{#1}$}
\begin{thebibliography}{GKKM20}

\bibitem[ABC{\etalchar{+}}18]{ABCPK18}
Yossi Adi, Carsten Baum, Moustapha Ciss{\'{e}}, Benny Pinkas, and Joseph
  Keshet.
\newblock Turning your weakness into a strength: Watermarking deep neural
  networks by backdooring.
\newblock In William Enck and Adrienne~Porter Felt, editors, {\em 27th {USENIX}
  Security Symposium, {USENIX} Security 2018, Baltimore, MD, USA, August 15-17,
  2018}, pages 1615--1631. {USENIX} Association, 2018.

\bibitem[BB19]{brennan2019optimal}
Matthew Brennan and Guy Bresler.
\newblock Optimal average-case reductions to sparse pca: From weak assumptions
  to strong hardness.
\newblock In {\em Conference on Learning Theory}, pages 469--470. PMLR, 2019.

\bibitem[BGI{\etalchar{+}}01]{barak2001possibility}
Boaz Barak, Oded Goldreich, Rusell Impagliazzo, Steven Rudich, Amit Sahai,
  Salil Vadhan, and Ke~Yang.
\newblock On the (im) possibility of obfuscating programs.
\newblock In {\em Annual international cryptology conference}, pages 1--18.
  Springer, 2001.

\bibitem[BLPR19]{bubeck2019adversarial}
S{\'e}bastien Bubeck, Yin~Tat Lee, Eric Price, and Ilya Razenshteyn.
\newblock Adversarial examples from computational constraints.
\newblock In {\em International Conference on Machine Learning}, pages
  831--840. PMLR, 2019.

\bibitem[Blu81]{Blum81}
Manuel Blum.
\newblock Coin flipping by telephone.
\newblock In Allen Gersho, editor, {\em Advances in Cryptology: {A} Report on
  {CRYPTO} 81, {CRYPTO} 81, {IEEE} Workshop on Communications Security, Santa
  Barbara, California, USA, August 24-26, 1981}, pages 11--15. U. C. Santa
  Barbara, Dept. of Elec. and Computer Eng., {ECE} Report No 82-04, 1981.

\bibitem[BR13]{BerthetR13}
Quentin Berthet and Philippe Rigollet.
\newblock Complexity theoretic lower bounds for sparse principal component
  detection.
\newblock In Shai Shalev{-}Shwartz and Ingo Steinwart, editors, {\em {COLT}
  2013 - The 26th Annual Conference on Learning Theory, June 12-14, 2013,
  Princeton University, NJ, {USA}}, volume~30 of {\em {JMLR} Workshop and
  Conference Proceedings}, pages 1046--1066. JMLR.org, 2013.

\bibitem[BRST21]{BST21}
Joan Bruna, Oded Regev, Min~Jae Song, and Yi~Tang.
\newblock Continuous {LWE}.
\newblock In Samir Khuller and Virginia~Vassilevska Williams, editors, {\em
  {STOC} '21: 53rd Annual {ACM} {SIGACT} Symposium on Theory of Computing,
  Virtual Event, Italy, June 21-25, 2021}, pages 694--707. {ACM}, 2021.

\bibitem[CCA{\etalchar{+}}20]{CCAKDG20}
Ping{-}Yeh Chiang, Michael~J. Curry, Ahmed Abdelkader, Aounon Kumar, John
  Dickerson, and Tom Goldstein.
\newblock Detection as regression: Certified object detection with median
  smoothing.
\newblock In Hugo Larochelle, Marc'Aurelio Ranzato, Raia Hadsell,
  Maria{-}Florina Balcan, and Hsuan{-}Tien Lin, editors, {\em Advances in
  Neural Information Processing Systems 33: Annual Conference on Neural
  Information Processing Systems 2020, NeurIPS 2020, December 6-12, 2020,
  virtual}, 2020.

\bibitem[CG24]{CG24}
Miranda Christ and Sam Gunn.
\newblock Personal communication, 2024.

\bibitem[CHKP10]{CHKP10}
David Cash, Dennis Hofheinz, Eike Kiltz, and Chris Peikert.
\newblock Bonsai trees, or how to delegate a lattice basis.
\newblock In Henri Gilbert, editor, {\em Advances in Cryptology - {EUROCRYPT}
  2010, 29th Annual International Conference on the Theory and Applications of
  Cryptographic Techniques, Monaco / French Riviera, May 30 - June 3, 2010.
  Proceedings}, volume 6110 of {\em Lecture Notes in Computer Science}, pages
  523--552. Springer, 2010.

\bibitem[Cho57]{chow1957optimum}
Chi-Keung Chow.
\newblock An optimum character recognition system using decision functions.
\newblock {\em IRE Transactions on Electronic Computers}, (4):247--254, 1957.

\bibitem[CLL{\etalchar{+}}17]{CLLLS17}
Xinyun Chen, Chang Liu, Bo~Li, Kimberly Lu, and Dawn Song.
\newblock Targeted backdoor attacks on deep learning systems using data
  poisoning.
\newblock {\em CoRR}, abs/1712.05526, 2017.

\bibitem[CRK19]{cohen2019certified}
Jeremy Cohen, Elan Rosenfeld, and Zico Kolter.
\newblock Certified adversarial robustness via randomized smoothing.
\newblock In {\em International Conference on Machine Learning}, pages
  1310--1320. PMLR, 2019.

\bibitem[DDN91]{DDN91}
Danny Dolev, Cynthia Dwork, and Moni Naor.
\newblock Non-malleable cryptography (extended abstract).
\newblock In Cris Koutsougeras and Jeffrey~Scott Vitter, editors, {\em
  Proceedings of the 23rd Annual {ACM} Symposium on Theory of Computing, May
  5-8, 1991, New Orleans, Louisiana, {USA}}, pages 542--552. {ACM}, 1991.

\bibitem[DKS17]{diakonikolas2017statistical}
Ilias Diakonikolas, Daniel~M Kane, and Alistair Stewart.
\newblock Statistical query lower bounds for robust estimation of
  high-dimensional gaussians and gaussian mixtures.
\newblock In {\em 2017 IEEE 58th Annual Symposium on Foundations of Computer
  Science (FOCS)}, pages 73--84. IEEE, 2017.

\bibitem[DMM18]{DMM18}
Dimitrios~I. Diochnos, Saeed Mahloujifar, and Mohammad Mahmoody.
\newblock Adversarial risk and robustness: General definitions and implications
  for the uniform distribution.
\newblock In Samy Bengio, Hanna~M. Wallach, Hugo Larochelle, Kristen Grauman,
  Nicol{\`{o}} Cesa{-}Bianchi, and Roman Garnett, editors, {\em Advances in
  Neural Information Processing Systems 31: Annual Conference on Neural
  Information Processing Systems 2018, NeurIPS 2018, December 3-8, 2018,
  Montr{\'{e}}al, Canada}, pages 10380--10389, 2018.

\bibitem[DPKL21]{de2021adversarial}
Giacomo De~Palma, Bobak Kiani, and Seth Lloyd.
\newblock Adversarial robustness guarantees for random deep neural networks.
\newblock In {\em International Conference on Machine Learning}, pages
  2522--2534. PMLR, 2021.

\bibitem[GJMM20]{garg2020adversarially}
Sanjam Garg, Somesh Jha, Saeed Mahloujifar, and Mahmoody Mohammad.
\newblock Adversarially robust learning could leverage computational hardness.
\newblock In {\em Algorithmic Learning Theory}, pages 364--385. PMLR, 2020.

\bibitem[GKKM20]{goldwasser2020beyond}
Shafi Goldwasser, Adam~Tauman Kalai, Yael~Tauman Kalai, and Omar Montasser.
\newblock Beyond perturbations: Learning guarantees with arbitrary adversarial
  test examples.
\newblock {\em arXiv preprint arXiv:2007.05145}, 2020.

\bibitem[GKR15]{gkr}
Shafi Goldwasser, Yael~Tauman Kalai, and Guy~N Rothblum.
\newblock Delegating computation: interactive proofs for muggles.
\newblock {\em Journal of the ACM (JACM)}, 62(4):1--64, 2015.

\bibitem[GLDG19]{GLDG19}
Tianyu Gu, Kang Liu, Brendan Dolan{-}Gavitt, and Siddharth Garg.
\newblock Badnets: Evaluating backdooring attacks on deep neural networks.
\newblock {\em {IEEE} Access}, 7:47230--47244, 2019.

\bibitem[GMR85]{GMR85}
Shafi Goldwasser, Silvio Micali, and Charles Rackoff.
\newblock The knowledge complexity of interactive proof-systems (extended
  abstract).
\newblock In Robert Sedgewick, editor, {\em Proceedings of the 17th Annual
  {ACM} Symposium on Theory of Computing, May 6-8, 1985, Providence, Rhode
  Island, {USA}}, pages 291--304. {ACM}, 1985.

\bibitem[GPV08]{GPV08}
Craig Gentry, Chris Peikert, and Vinod Vaikuntanathan.
\newblock Trapdoors for hard lattices and new cryptographic constructions.
\newblock In Cynthia Dwork, editor, {\em Proceedings of the 40th Annual {ACM}
  Symposium on Theory of Computing, Victoria, British Columbia, Canada, May
  17-20, 2008}, pages 197--206. {ACM}, 2008.

\bibitem[GR07]{goldwasser2007best}
Shafi Goldwasser and Guy~N Rothblum.
\newblock On best-possible obfuscation.
\newblock In {\em Theory of Cryptography Conference}, pages 194--213. Springer,
  2007.

\bibitem[GRSY21]{GRSY21}
Shafi Goldwasser, Guy~N. Rothblum, Jonathan Shafer, and Amir Yehudayoff.
\newblock Interactive proofs for verifying machine learning.
\newblock In James~R. Lee, editor, {\em 12th Innovations in Theoretical
  Computer Science Conference, {ITCS} 2021, January 6-8, 2021, Virtual
  Conference}, volume 185 of {\em LIPIcs}, pages 41:1--41:19. Schloss Dagstuhl
  - Leibniz-Zentrum f{\"{u}}r Informatik, 2021.

\bibitem[HCK21]{hong2021handcrafted}
Sanghyun Hong, Nicholas Carlini, and Alexey Kurakin.
\newblock Handcrafted backdoors in deep neural networks.
\newblock {\em arXiv preprint arXiv:2106.04690}, 2021.

\bibitem[HKL19]{hopkins2019power}
Max Hopkins, Daniel~M Kane, and Shachar Lovett.
\newblock The power of comparisons for actively learning linear classifiers.
\newblock {\em arXiv preprint arXiv:1907.03816}, 2019.

\bibitem[HKSO21]{ohSPECTRE2021}
Jonathan Hayase, Weihao Kong, Raghav Somani, and Sewoong Oh.
\newblock Spectre: defending against backdoor attacks using robust statistics.
\newblock In Marina Meila and Tong Zhang, editors, {\em Proceedings of the 38th
  International Conference on Machine Learning}, volume 139 of {\em Proceedings
  of Machine Learning Research}, pages 4129--4139. PMLR, 18--24 Jul 2021.

\bibitem[IST{\etalchar{+}}19]{ilyas2019adversarial}
Andrew Ilyas, Shibani Santurkar, Dimitris Tsipras, Logan Engstrom, Brandon
  Tran, and Aleksander Madry.
\newblock Adversarial examples are not bugs, they are features.
\newblock {\em arXiv preprint arXiv:1905.02175}, 2019.

\bibitem[JLS21]{jain2021indistinguishability}
Aayush Jain, Huijia Lin, and Amit Sahai.
\newblock Indistinguishability obfuscation from well-founded assumptions.
\newblock In {\em Proceedings of the 53rd Annual ACM SIGACT Symposium on Theory
  of Computing}, pages 60--73, 2021.

\bibitem[Kiv90]{kivinen1990reliable}
Jyrki Kivinen.
\newblock Reliable and useful learning with uniform probability distributions.
\newblock In {\em ALT}, pages 209--222, 1990.

\bibitem[KK21]{kalai2021efficient}
Adam~Tauman Kalai and Varun Kanade.
\newblock Efficient learning with arbitrary covariate shift.
\newblock In {\em Algorithmic Learning Theory}, pages 850--864. PMLR, 2021.

\bibitem[KKM12]{kalai2012reliable}
Adam~Tauman Kalai, Varun Kanade, and Yishay Mansour.
\newblock Reliable agnostic learning.
\newblock {\em Journal of Computer and System Sciences}, 78(5):1481--1495,
  2012.

\bibitem[KS06]{KS06}
Adam~R. Klivans and Alexander~A. Sherstov.
\newblock Cryptographic hardness for learning intersections of halfspaces.
\newblock In {\em 47th Annual {IEEE} Symposium on Foundations of Computer
  Science {(FOCS} 2006), 21-24 October 2006, Berkeley, California, USA,
  Proceedings}, pages 553--562. {IEEE} Computer Society, 2006.

\bibitem[MMS21]{MMS21}
Ankur Moitra, Elchanan Mossel, and Colin Sandon.
\newblock Spoofing generalization: When can't you trust proprietary models?
\newblock {\em CoRR}, abs/2106.08393, 2021.

\bibitem[MP69]{minskypapert}
Marvin Minsky and Seymour Papert.
\newblock Perceptrons.
\newblock 1969.

\bibitem[NY89]{NY89}
Moni Naor and Moti Yung.
\newblock Universal one-way hash functions and their cryptographic
  applications.
\newblock In David~S. Johnson, editor, {\em Proceedings of the 21st Annual
  {ACM} Symposium on Theory of Computing, May 14-17, 1989, Seattle, Washigton,
  {USA}}, pages 33--43. {ACM}, 1989.

\bibitem[Pei16]{Peikert16}
Chris Peikert.
\newblock A decade of lattice cryptography.
\newblock {\em Found. Trends Theor. Comput. Sci.}, 10(4):283--424, 2016.

\bibitem[Reg05]{Regev05}
Oded Regev.
\newblock On lattices, learning with errors, random linear codes, and
  cryptography.
\newblock In Harold~N. Gabow and Ronald Fagin, editors, {\em Proceedings of the
  37th Annual {ACM} Symposium on Theory of Computing, Baltimore, MD, USA, May
  22-24, 2005}, pages 84--93. {ACM}, 2005.

\bibitem[Rom90]{Rompel90}
John Rompel.
\newblock One-way functions are necessary and sufficient for secure signatures.
\newblock In Harriet Ortiz, editor, {\em Proceedings of the 22nd Annual {ACM}
  Symposium on Theory of Computing, May 13-17, 1990, Baltimore, Maryland,
  {USA}}, pages 387--394. {ACM}, 1990.

\bibitem[RR07]{rr1}
Ali Rahimi and Benjamin Recht.
\newblock Random features for large-scale kernel machines.
\newblock In {\em Neural Information Processing Systems}, 2007.

\bibitem[RR08a]{rr3}
Ali Rahimi and Benjamin Recht.
\newblock Uniform approximation of functions with random bases.
\newblock In {\em 2008 46th Annual Allerton Conference on Communication,
  Control, and Computing}, pages 555--561. IEEE, 2008.

\bibitem[RR08b]{rr2}
Ali Rahimi and Benjamin Recht.
\newblock Weighted sums of random kitchen sinks: replacing minimization with
  randomization in learning.
\newblock In {\em Neural Information Processing Systems}, 2008.

\bibitem[RRR19]{rrr}
Omer Reingold, Guy~N Rothblum, and Ron~D Rothblum.
\newblock Constant-round interactive proofs for delegating computation.
\newblock {\em SIAM Journal on Computing}, (0):STOC16--255, 2019.

\bibitem[RS88]{rivest1988learning}
Ronald~L Rivest and Robert~H Sloan.
\newblock Learning complicated concepts reliably and usefully.
\newblock In {\em AAAI}, pages 635--640, 1988.

\bibitem[RSL18]{raghunathan2018certified}
Aditi Raghunathan, Jacob Steinhardt, and Percy Liang.
\newblock Certified defenses against adversarial examples.
\newblock In {\em International Conference on Learning Representations}, 2018.

\bibitem[SF07]{ShumowFerguson}
Dan Shumow and Niels Ferguson.
\newblock On the possibility of a back door in the nist sp800-90 dual ec prng,
  2007.

\bibitem[SHS{\etalchar{+}}18]{shafahi2018adversarial}
Ali Shafahi, W~Ronny Huang, Christoph Studer, Soheil Feizi, and Tom Goldstein.
\newblock Are adversarial examples inevitable?
\newblock 2018.

\bibitem[SLR{\etalchar{+}}19]{salman2019provably}
Hadi Salman, Jerry Li, Ilya Razenshteyn, Pengchuan Zhang, Huan Zhang, Sebastien
  Bubeck, and Greg Yang.
\newblock Provably robust deep learning via adversarially trained smoothed
  classifiers.
\newblock {\em Advances in Neural Information Processing Systems}, 32, 2019.

\bibitem[SMB21]{shamir2021dimpled}
Adi Shamir, Odelia Melamed, and Oriel BenShmuel.
\newblock The dimpled manifold model of adversarial examples in machine
  learning.
\newblock {\em arXiv preprint arXiv:2106.10151}, 2021.

\bibitem[SNG{\etalchar{+}}19]{shafahi2019adversarial}
Ali Shafahi, Mahyar Najibi, Amin Ghiasi, Zheng Xu, John Dickerson, Christoph
  Studer, Larry~S Davis, Gavin Taylor, and Tom Goldstein.
\newblock Adversarial training for free!
\newblock {\em arXiv preprint arXiv:1904.12843}, 2019.

\bibitem[SSRD19]{shamir2019simple}
Adi Shamir, Itay Safran, Eyal Ronen, and Orr Dunkelman.
\newblock A simple explanation for the existence of adversarial examples with
  small hamming distance.
\newblock 2019.

\bibitem[SWLK19]{SWLK19}
Masoumeh Shafieinejad, Jiaqi Wang, Nils Lukas, and Florian Kerschbaum.
\newblock On the robustness of the backdoor-based watermarking in deep neural
  networks.
\newblock {\em CoRR}, abs/1906.07745, 2019.

\bibitem[SZS{\etalchar{+}}13]{szegedy2013intriguing}
Christian Szegedy, Wojciech Zaremba, Ilya Sutskever, Joan Bruna, Dumitru Erhan,
  Ian Goodfellow, and Rob Fergus.
\newblock Intriguing properties of neural networks.
\newblock 2013.

\bibitem[Tal95]{talagrand1995concentration}
Michel Talagrand.
\newblock Concentration of measure and isoperimetric inequalities in product
  spaces.
\newblock {\em Publications Math{\'e}matiques de l'Institut des Hautes Etudes
  Scientifiques}, 81(1):73--205, 1995.

\bibitem[TLM18]{TLM18}
Brandon Tran, Jerry Li, and Aleksander Madry.
\newblock Spectral signatures in backdoor attacks.
\newblock In Samy Bengio, Hanna~M. Wallach, Hugo Larochelle, Kristen Grauman,
  Nicol{\`{o}} Cesa{-}Bianchi, and Roman Garnett, editors, {\em Advances in
  Neural Information Processing Systems 31: Annual Conference on Neural
  Information Processing Systems 2018, NeurIPS 2018, December 3-8, 2018,
  Montr{\'{e}}al, Canada}, pages 8011--8021, 2018.

\bibitem[Val84]{Valiant84}
Leslie~G. Valiant.
\newblock A theory of the learnable.
\newblock In Richard~A. DeMillo, editor, {\em Proceedings of the 16th Annual
  {ACM} Symposium on Theory of Computing, April 30 - May 2, 1984, Washington,
  DC, {USA}}, pages 436--445. {ACM}, 1984.

\bibitem[WK18]{wong2018provable}
Eric Wong and Zico Kolter.
\newblock Provable defenses against adversarial examples via the convex outer
  adversarial polytope.
\newblock In {\em International Conference on Machine Learning}, pages
  5286--5295. PMLR, 2018.

\bibitem[WYS{\etalchar{+}}19]{WYSLVZZ19}
Bolun Wang, Yuanshun Yao, Shawn Shan, Huiying Li, Bimal Viswanath, Haitao
  Zheng, and Ben~Y. Zhao.
\newblock Neural cleanse: Identifying and mitigating backdoor attacks in neural
  networks.
\newblock In {\em 2019 {IEEE} Symposium on Security and Privacy, {SP} 2019, San
  Francisco, CA, USA, May 19-23, 2019}, pages 707--723. {IEEE}, 2019.

\bibitem[YY97]{YoungY97}
Adam~L. Young and Moti Yung.
\newblock Kleptography: Using cryptography against cryptography.
\newblock In Walter Fumy, editor, {\em Advances in Cryptology - {EUROCRYPT}
  '97, International Conference on the Theory and Application of Cryptographic
  Techniques, Konstanz, Germany, May 11-15, 1997, Proceeding}, volume 1233 of
  {\em Lecture Notes in Computer Science}, pages 62--74. Springer, 1997.

\end{thebibliography}

\ifnum\ieee=0
\clearpage
\appendix

\section{Undetectable Backdoor for Random ReLU Networks}
\label{app:relu}


\newcommand{\sPCA}{\mathsf{sPCA}}

In Section~\ref{sec:fully}, we demonstrated how to plant a white-box undetectable classification backdoor into the Random Fourier Features learning algorithm.
Here, we give another similar construction, that demonstrates a backdoor for predictors trained over random ReLU features.
This result, which uses the hardness of the \emph{sparse PCA} problem as the underlying indistinguishability assumption, emphasizes the generality of our approach.
We make no effort to optimize this construction, quantitatively or qualitatively, and instead emphasize the intuition behind the construction.

\paragraph{Overview of Construction.}
We show how to plant a regression backdoor within a $1$-hidden layer ReLU network.
For the natural training algorithm, we fix a simple training procedure.
The first layer of weights is chosen at random from the normal distribution, then passed through a layer of ReLUs, such that the $i$th feature is given by $\phi_i(x) = \ReLU(\langle g_i, x \rangle)$ for $g_i \sim \Normal(0,I_d)$.
Then, we determine the output by a threshold of the average of these features $\mathbf{1}\left[\frac{1}{m}\sum_i \phi_i(x) > \tau\right]$, where $\tau$ may be trained in a data-dependent manner.
This simple training procedure can be seen as a reasonable heuristic for choosing a boundary or as a warm initialization for further optimization.

As with our earlier construction, the backdoor algorithm will only tamper with the randomness in the first layer.
Specifically, we will plant a backdoor by exploiting the sparse PCA distribution, which hides a sparse spike in the covariance of the weights $g_i$.
Intuitively, if activated, the variance in the first layer's firings will increase, which when passed through the ReLU, will result in a higher magnitude.
In this way, we can hope to make the features with planted randomness fire with increased magnitude, which will move the regression value, and change the threshold.


\begin{remark}\label{remark:RELU}
    An important note is that the indistinguishability we prove for this construction is weaker than what we define in Definition~\ref{def:undetectability} and what we get in the construction of Section~\ref{sec:fully}. Here, we prove that no efficient algorithm can distinguish between the outputs of the original and backdoored training procedures with advantage larger than~$o(1)$, whereas in Definition~\ref{def:undetectability} we required this advantage to be negligible in a cryptographic sense, i.e. smaller than any inverse-polynomial (and not only vanishing). This means that for a \emph{specific} network it is impossible to determine whether it was backdoored or not with confidence higher than~$\frac{1}{2}\pm o(1)$ --- but on the other hand, if we observe an entity that produces \emph{many} different backdoored networks then we would eventually be able to tell that their training algorithm is behaving incorrectly. We thank Miranda Christ and Sam Gunn~\cite{CG24} for pointing this out to us.
\end{remark}

\subsection{Formal Description and Analysis}

First, we introduce the natural training algorithm.
We construct a $1$-hidden-layer ReLU network, where the first layer is sampled with Gaussian weights, passed through a layer of ReLUs, followed by thresholding the average value of the features.
Note that we analyze the simple construction where the average is taken to be uniform over each feature $\frac{1}{m} \sum_i \phi_i(x)$.
With little additional complication, the argument will also go through for a nonuniform average of the features $\sum_i w_i \cdot \phi_i(x)$, provided $w \in \Delta_m$ is sufficiently high-entropy.

\begin{algorithm}[h]
\caption{\textbf{Train-Random-ReLU}$(D,m)$\label{train-relu}}
\textbf{Input:}  data $D = \set{(x_1,y_1),\hdots,(x_n,y_n)}$, hidden width $m \in \bbN$\\
\textbf{Output:}  $1$-hidden-layer ReLU network $h_{w,\psi}:\X \to \set{-1,1}$
\hrule
\begin{algorithmic}
\STATE $\psi(\cdot) \gets \textbf{Sample-Random-ReLU}(d,m)$
\STATE Set $\tau$ based on $\psi(\cdot)$ and $D$
\RETURN $h_{w,\psi}(\cdot) = \sgn\left(- \tau + \frac{1}{m}\sum_{i=1}^m \psi_i(\cdot)\right)$
\end{algorithmic}
\end{algorithm}

\begin{algorithm}[h]
\caption{\textbf{Sample-Random-ReLU}$(d,m)$\label{sample-relu}}
\textbf{Input:}  dimension $d \in \bbN$, number of features $m \in \bbN$\\
\textbf{Output:}  feature map $\psi:\X \to \R^m$
\hrule
\begin{algorithmic}
\FOR{$i=1,\hdots,m$}
\STATE sample $g_i \sim \Normal(0,I_d)$
\STATE $\psi_i(\cdot) \gets \ReLU\left(\langle g_i, \cdot \rangle\right)$
\ENDFOR
\RETURN $\psi(\cdot) \gets \left[ \psi_1(\cdot),\hdots,\psi_m(\cdot) \right]$
\end{algorithmic}
\end{algorithm}

With this training algorithm in place, we describe the backdooring algorithm.
We plant the backdoor in the sampling of the random features, based on the sparse PCA distribution.
The sparse PCA distribution samples from a Gaussian distribution, with covariance that is close to identity, but has a spike in a random sparse direction.
Formally, we define the ensemble as follows.
\begin{definition}[Sparse PCA]
For fixed $\theta \in [0,1]$, the spiked covariance Spare PCA ensemble $\set{\sPCA_{d,\theta}(\cdot)}_{d \in \Nbb}$ is supported on $\R^d$ where:
\begin{itemize}
    \item $\sPCA_{d,\theta}(\cdot)$ is parameterized by a $d^{\alpha}$-sparse vector $\nu \in \R^d$ of unit norm $\norm{\nu}_2 = 1$.
    \item Fixing $\nu$, $g \sim \sPCA_{d,\theta}(\nu)$ samples the normal distribution with spiked covariance.
    \begin{gather*}
    g \sim \Normal\left(0,I_d + \theta \nu\nu^T\right)
\end{gather*}
\end{itemize}
\end{definition}
For the right setting of parameters, the sparse PCA distribution is indistinguishable from Gaussian, under popular conjectures.
\begin{lemma}[\cite{brennan2019optimal}]
For $m = \Theta(d)$, $\alpha \le 1/2$, and $\theta = O(1)$,
the ensembles $\set{\sPCA_{d,\theta}}_{d \in \Nbb}$ and $\set{\Normal(0,I_d)}_{d \in \Nbb}$ are computationally-indistinguishable, under Assumption~\ref{assump:pc}.
\end{lemma}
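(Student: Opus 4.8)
The plan is to observe that this lemma is simply a restatement, in the ensemble/indistinguishability language of Section~\ref{sec:prelims}, of the conjectured computational hardness of \emph{sparse PCA detection} in the spiked covariance model, which is exactly the content of~\cite{BerthetR13,brennan2019optimal}. So the ``proof'' is really a reduction: I would argue by contrapositive, showing that any probabilistic polynomial-time algorithm $A$ that, given $m = \Theta(d)$ i.i.d.\ samples, distinguishes the ensembles $\set{\sPCA_{d,\theta}}_{d \in \Nbb}$ and $\set{\Normal(0,I_d)}_{d \in \Nbb}$ with non-negligible advantage can be converted into a polynomial-time algorithm for the planted clique problem, contradicting Assumption~\ref{assump:pc}.

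Concretely, I would recall the reduction of Berthet and Rigollet, as sharpened by Brennan and Rigollet. Given a graph on $N$ vertices that is either $G(N,1/2)$ or contains a planted clique of size $k$, the reduction produces, in polynomial time, a data matrix whose $m$ rows are i.i.d.\ vectors in $\R^d$ (with $d = \Theta(N)$) that lie within total variation $o(1)$ of $\Normal(0,I_d)$ in the null case, and within total variation $o(1)$ of $\Normal(0, I_d + \theta\nu\nu^\top)$ in the planted case, where $\nu$ is (a normalization of) the indicator of the planted clique --- hence a $\Theta(k)$-sparse unit vector --- and $\theta$ is a constant fixed by the clique density. Taking $k = \Theta(d^\alpha)$ with $\alpha \le 1/2$ places the instance in the regime where the information-theoretic detection threshold is crossed but the conjectured computational barrier is not, which is exactly the regime in which~\cite{brennan2019optimal} establish clique-based hardness. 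Composing $A$ with this reduction yields a planted-clique distinguisher: run the reduction, pass its $m$ output samples to $A$, and echo $A$'s bit; the total-variation slack costs only $o(1)$ in advantage, so a non-negligible advantage for $A$ survives the composition, and the reduction succeeds for all sufficiently large $d$, matching the ensemble formulation of computational indistinguishability.

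The only place any real work is needed --- and hence the main obstacle --- is the parameter bookkeeping: I must verify that $m = \Theta(d)$, sparsity $d^\alpha$ with $\alpha \le 1/2$, and a suitable constant $\theta$ simultaneously (i)~satisfy the hypotheses of the Brennan--Rigollet reduction so that its total-variation approximation guarantees apply, (ii)~produce a planted ensemble of exactly the form in the definition of $\sPCA_{d,\theta}$, in particular with $\norm{\nu}_2 = 1$ and the prescribed sparsity, and (iii)~fall in the conjecturally hard window rather than in an efficiently solvable or information-theoretically hard regime. I would also note the standard point that the Gaussianization step introduces the $o(1)$ total-variation error because it only approximately matches the target moments, and that this is harmless here since computational indistinguishability already tolerates negligible (indeed any $o(1)$) additive loss. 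No obstacle beyond this bookkeeping is anticipated, as the substantive content is supplied entirely by the cited works.
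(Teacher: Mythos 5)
The paper does not prove this lemma at all: it is imported verbatim as a black box from \cite{brennan2019optimal} (building on \cite{BerthetR13}), with the surrounding prose only remarking that ``the indistinguishability of sparse PCA and isotropic Gaussian samples is based on the popular Planted Clique conjecture.'' So there is no in-paper proof to compare against. Your reduction sketch is the standard and correct route --- compose the hypothetical distinguisher with the Berthet--Rigollet/Brennan--Rigollet map from a planted-clique instance to $m$ approximately-Gaussian samples, absorb the $o(1)$ total-variation slack into the advantage, and read off a planted-clique solver --- and your three bookkeeping items (reduction hypotheses, exact form of $\sPCA_{d,\theta}$ with $\|\nu\|_2 = 1$ and $d^\alpha$-sparsity, and landing in the conjecturally hard window $\alpha \le 1/2$, $\theta = O(1)$) are precisely what the citation is delegating. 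In short: your proposal is a faithful reconstruction of the cited works' argument, not a new route, and there is no gap; the only caveat is that the paper treats this entirely as an external fact, so any ``comparison'' is really against \cite{brennan2019optimal} rather than against anything the paper itself establishes.
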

The indistinguishability of sparse PCA and isotropic Gaussian samples is based on the popular Planted Clique conjecture.
The planted clique problem $\mathbf{PC}_{n,k}$, which asks to distinguish the distributions of Erd\"os-Reyni graphs on $n$ nodes (with parameter $p = 1/2$) and random graphs with a planted $k$-clique, is widely considered to be infeasible for polynomial-time algorithms.
\begin{assumption}
\label{assump:pc}
$\mathbf{PC}_{n,k} \not \in \mathsf{BPP}$ for $k = o(n^{1/2})$.
\end{assumption}


The backdoor algorithm constructs a $1$-hidden-layer ReLU network similarly to the natural algorithm, but samples the first layer of weights from the sparse PCA distribution.
For simplicity, we plant only plant the backdoor in the positive direction, which we refer to as a \emph{one-way backdoor}.
The algorithm $\textbf{Backdoor-Random-ReLU}$ follows the same format as $\textbf{Train-Random-ReLU}$, but calls $\textbf{Sample-Backdoor-ReLU}$ in place of $\textbf{Sample-Random-ReLU}$.
\begin{algorithm}[h]
\caption{\textbf{Sample-Backdoor-ReLU}$(d,m,s)$\label{backdoor-relu}}
\textbf{Input:}  dimension $d \in \bbN$, number of features $m \in \bbN$, $k$-sparse secret key $s \in \R^d$\\
\textbf{Output:}  feature map $\phi:\X \to \R^m$
\hrule
\begin{algorithmic}
\FOR{$i=1,\hdots,m$}
\STATE sample $g_i \sim \sPCA_{d,\theta}(s)$
\STATE $\phi_i(\cdot) \gets \ReLU\left(\langle g_i, \cdot \rangle\right)$
\ENDFOR
\RETURN $\phi(\cdot) \gets \left[ \phi_1(\cdot),\hdots,\phi_m(\cdot) \right]$
\end{algorithmic}
\end{algorithm}
To activate the backdoor, we add in some of the $k$-sparse secret key.
This addition produces backdoored inputs that are bounded by $k$ in Hamming norm from the original input, and additionally bounded by $1$ in the $\ell_2$ norm.
\begin{algorithm}[h]
\caption{\textbf{Activate-Random-ReLU}$(x,s)$\label{activate-backdoor-relu}}
\textbf{Input:}  individual $x \in \X$, $k$-sparse secret key $s \in \R^d$, weight $\lambda > 0$\\
\textbf{Output:}  individual $x' \in \X$
\hrule
\begin{algorithmic}
\RETURN $x' \gets x + \lambda s$
\end{algorithmic}
\end{algorithm}

\begin{claim}
Suppose $d \in \Nbb$ and $k = d^{\alpha}$ for some $0 < \alpha < 1/2$, $\theta = O(1)$, and $\lambda > 1/\theta$.
For data distributions $\D$ over $\norm{x}_2 = 1$, {\rm $(\textbf{Backdoor-Random-ReLU},\textbf{Activate-Random-ReLU})$} produces a $\gamma$-one-way-backdoor with respect to {\rm $\textbf{Train-Random-ReLU}$} with high probability, where $\gamma = k$ for $\norm{\cdot}_0$ and $\gamma = \lambda$ for $\norm{\cdot}_2$.
The backdoor is white-box undetectable under Assumption~\ref{assump:pc}.
\end{claim}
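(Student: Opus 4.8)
I would follow the template of the Random Fourier Features construction in Section~\ref{sec:fully}: first obtain white-box undetectability from the indistinguishability of the spiked-covariance distribution from the isotropic Gaussian, and then verify the one-way backdoor property by a direct second-moment calculation showing that the shift $x\mapsto x+\lambda s$ inflates the variance of every pre-activation $\langle g_i,\cdot\rangle$, hence the average ReLU feature, past the learned threshold $\tau$.

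\textbf{Undetectability.} The only place where $\textbf{Backdoor-Random-ReLU}$ deviates from $\textbf{Train-Random-ReLU}$ is that it draws the hidden weights $g_i$ from $\sPCA_{d,\theta}(s)$ (for a uniformly random $d^\alpha$-sparse unit vector $s$) rather than from $\Normal(0,I_d)$; everything after that --- forming $\psi_i(\cdot)=\ReLU(\langle g_i,\cdot\rangle)$, drawing the sample $D$, computing $\tau$, and assembling the network --- is an efficient, publicly known post-processing. So a p.p.t.\ white-box distinguisher $A$ for the two hypothesis ensembles immediately yields a p.p.t.\ algorithm $B$ distinguishing $m=\Theta(d)$ i.i.d.\ draws from $\set{\sPCA_{d,\theta}}_{d}$ versus $\set{\Normal(0,I_d)}_{d}$: $B$ plugs its samples in as the $g_i$, samples $D$ from $\D$ itself, runs the remaining training code, and calls $A$ on the resulting network (the extra output $\bk=s$ of $\backdoor$ is irrelevant, since undetectability inspects only the hypothesis). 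By the lemma of \cite{brennan2019optimal} quoted above, for $\alpha\le 1/2$ and $\theta=O(1)$ this contradicts Assumption~\ref{assump:pc} unless $A$'s advantage is negligible.

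\textbf{Backdoor property.} The perturbation bound is immediate: $x'-x=\lambda s$ with $s$ being $d^\alpha$-sparse of unit $\ell_2$ norm, so $\norm{x'-x}_0\le d^\alpha=k$ and $\norm{x'-x}_2=\lambda$. For the functional guarantee, write $\Sigma=I_d+\theta s s^{\top}$, so the $g_i$ are i.i.d.\ $\Normal(0,\Sigma)$ and, for fixed $x'$, $\langle g_i,x'\rangle\sim\Normal(0,v)$ with $v=x'^{\top}\Sigma x'=\norm{x'}_2^2+\theta\langle s,x'\rangle^2$. On inputs $x$ with $\langle x,s\rangle\ge 0$ we have $\langle s,x'\rangle=\langle x,s\rangle+\lambda\ge\lambda$ and $\norm{x'}_2^2=1+2\lambda\langle x,s\rangle+\lambda^2\ge 1+\lambda^2$, hence $v\ge 1+(1+\theta)\lambda^2$. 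By positive homogeneity of $\ReLU$, $\phi_i(x')=\ReLU(\langle g_i,x'\rangle)=\sqrt v\,\ReLU(Z_i)$ with $Z_i\sim\Normal(0,1)$ i.i.d., so $\tfrac1m\sum_i\phi_i(x')=\sqrt v\cdot\tfrac1m\sum_i\ReLU(Z_i)$; since $\ReLU(Z_i)$ is nonnegative and sub-Gaussian with mean $1/\sqrt{2\pi}$, a Bernstein bound over the $m=\Theta(d)$ features gives $\tfrac1m\sum_i\phi_i(x')\ge\sqrt{v/(2\pi)}-o(1)$ with probability $1-d^{-\omega(1)}$ over the feature randomness, and an $\epsilon$-net over the inputs (using that each $\ReLU(\langle g_i,\cdot\rangle)$ is $\norm{g_i}_2$-Lipschitz and $\max_i\norm{g_i}_2=\poly(d)$ w.h.p.) upgrades this to hold simultaneously over the whole backdoor set. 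Finally, since $\tau$ is fit on clean inputs with $\norm{x}_2=1$, where the feature average concentrates at $\sqrt{\norm{x}_\Sigma^2/(2\pi)}\le\sqrt{(1+\theta)/(2\pi)}$, it is natural to assume (an analogue of Assumption~\ref{assumption:halfspace}) that $\tau\le\sqrt{(1+\theta)/(2\pi)}+o(1)$; taking $\lambda$ above the threshold stated in the claim makes $v\ge 1+(1+\theta)\lambda^2$ a constant factor larger than $1+\theta$, so $\sqrt{v/(2\pi)}-o(1)>\tau$ and therefore $h_{w,\psi}(x')=\sgn\!\big(-\tau+\tfrac1m\sum_i\phi_i(x')\big)=+1$, which is exactly the one-way backdoor guarantee.

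\textbf{Main obstacle.} The delicate point is the role of $\tau$: the training algorithm only says ``set $\tau$ based on $\psi$ and $D$'', so a clean statement needs either an explicit rule for $\tau$ (e.g.\ the empirical median of $\tfrac1m\sum_i\psi_i(x)$ over $D$) or a boundedness assumption on $\tau$ in the spirit of Assumption~\ref{assumption:halfspace}; this is also where the quantitative condition on $\lambda$ (the claim uses $\lambda>1/\theta$) enters, trading the spike strength $\theta$ against the perturbation size $\lambda$. A secondary issue is the choice of backdoor set: the pre-activation variance shrinks when $x$ is strongly anti-aligned with $s$ (worst case $x=-s$), so one restricts $S$ to, say, $\set{x:\langle x,s\rangle\ge 0}$, or --- as in the RFF construction --- to the $1-\negl$ fraction of $\D$ on which $\abs{\langle x,s\rangle}=O(d^{-1/2})$, where the bound $v\ge 1+(1+\theta)\lambda^2-o(1)$ holds cleanly.
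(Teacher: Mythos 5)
Your proof follows essentially the same route as the paper's sketch: undetectability is reduced to sparse PCA hardness by observing that all post-feature-sampling steps are efficient public post-processing, and the backdoor property is argued by computing the variance inflation of the pre-activations $\langle g_i,\cdot\rangle$ under the shift $x\mapsto x+\lambda s$ and invoking sub-Gaussian concentration of the averaged ReLU features. You are somewhat more careful than the paper's sketch --- you compute $v=x'^{\top}\Sigma x'$ exactly instead of loosely dropping the cross-term, you add a uniform-convergence step via an $\epsilon$-net, and you correctly flag that the argument needs either an explicit rule for $\tau$ or a boundedness assumption in the spirit of Assumption~\ref{assumption:halfspace}, and a restriction of the backdoor set to inputs not strongly anti-aligned with $s$; the paper's ``(Sketch)'' handles these points informally (asserting $\tau\le 1$ ``by concentration'' and treating $x$ as ``random'') but the underlying idea is identical.
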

\begin{proof} \emph{(Sketch)}~~
We aim to show that with high probability (over the randomness in the construction) the activation algorithm changes any input $x$ into a positively-classified $x' = x + s$.
We consider $\tilde{h}$ that is produced from $\textbf{Backdoor-Random-ReLU}$, which takes a threshold of the average the output of $\phi_i(\cdot) = \ReLU(\langle g_i, \cdot \rangle )$ for $g_i$ drawn from the sparse PCA distribution.


We analyze the expected value of the features $\phi_i(x')$.
For a Gaussian random variable $Z \sim \Normal(0,\sigma^2)$, we use the fact that the expectation of $\ReLU(Z)$ is given as
\begin{gather*}
\E\lr{\max\set{0,Z}} = \frac{\sigma}{\sqrt{2\pi}}.
\end{gather*}
Further, $\ReLU(Z)$ is also $\sigma$-subgaussian.
Formally, we have the following tail inequality for any $t > 0$.
\begin{align*}
\Pr\lr{\max\set{0,Z} \ge t} &\le \Pr\lr{\card{Z} \ge t}\\
&\le 2 \exp\left(\frac{-t^2}{2\sigma^2}\right)
\end{align*}
By this subgaussian concentration, the average of $m$ draws deviates from its expectation by at most $O\left(\sigma m^{-1/2} \right)$ for any constant probability of success.

Thus, it suffices to understand the variance of the features under normal inputs to determine a reasonable setting of $\tau$, and the variance of the features under backdoored inputs to show that the threshold is exceeded.
On most normal inputs, where $\norm{x}_2 = 1$, the variance of $\langle g_i, x \rangle$ is very close to $1$.
The threshold $\tau$ is only meaningful if it lower bounds the average feature value of a significant fraction of natural inputs.
Thus, by concentration, we can reasonably assume that $\tau$ is upper bounded by some constant, say, $\tau \le 1$.
This choice has minimal dependence on $\theta$.
(Any considerable dependence on $\theta$ would suffice to build an efficient distinguisher for sparse PCA.)

Finally, we show that the backdoored examples activate with significantly higher variance, causing them to exceed the threshold.
For a fixed $x$, consider $\langle g_i, x' \rangle$.
\begin{align*}
\langle g_i, x' \rangle &= \langle g_i, x + \lambda s \rangle\\
&= \langle g_i, x \rangle + \lambda \langle g_i, s \rangle
\end{align*}
Again, for random $x$, the first term is approximately drawn from $\Normal(0,1)$.
By the draw of $g_i$ from the sparse PCA distribution, the second term is drawn from $\lambda \cdot \Normal(0,1+\theta)$.
Thus, the overall random input to the ReLU comes from a Gaussian random variable with $\mu = 0$ and $\sigma \ge \lambda \theta$.
Taking $\theta \ge c/\lambda$ for some constant $c > \tau$, the expected value of the backdoored evaluations will be greater than the threshold $\tau$.
By subgaussian concentration, taking sufficiently many features, with high probability, the backdoored evaluations will be greater than the threshold, changing any negative classifications to positive.
\end{proof}


\section{Universality of Neural Networks}
\label{app:NNs}

A useful and seemingly essential property of good families of activation functions is their universality, i.e., the ability to represent every function using a neural network with activation functions from the family.
For example, it is well-known that neural networks with perceptrons as their activation function (also called multi-layer perceptrons  or MLPs) can realize any Boolean function.

\begin{lemma}\label{lem:percuniversal}
A single layer perceptron can realize boolean \emph{AND}, \emph{OR}, \emph{NOT}, and \emph{Repeat} gates.
\end{lemma}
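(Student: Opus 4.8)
The plan is entirely constructive: for each of the four gate types I would exhibit an explicit weight vector $w$ and threshold $b$ so that the perceptron $f(x) = \mathbf{1}[\langle w, x\rangle - b \ge 0]$ computes the desired Boolean function on inputs in $\{0,1\}^k$, and then verify correctness by a one-line case check. Since the statement is about a \emph{single} layer (in fact a single perceptron), there is no composition to worry about; the only mild subtlety is getting the strict-versus-nonstrict inequality in the definition of the perceptron to line up with the Boolean semantics, which I would handle by using half-integer thresholds so that the argument $\langle w,x\rangle - b$ is never exactly $0$ on Boolean inputs.

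First I would do the two ``fan-in $k$'' gates. For \emph{OR} on $k$ bits, take $w = (1,1,\dots,1)$ and $b = \tfrac12$: then $\langle w,x\rangle = \sum_{i} x_i \ge \tfrac12$ exactly when at least one $x_i = 1$. For \emph{AND} on $k$ bits, take $w = (1,1,\dots,1)$ and $b = k - \tfrac12$: then $\sum_i x_i \ge k - \tfrac12$ exactly when every $x_i = 1$, since on Boolean inputs the sum is an integer in $\{0,\dots,k\}$. Next, for the single-input gates: \emph{NOT} of $x$ is realized by $w = -1$, $b = -\tfrac12$, since $-x \ge -\tfrac12$ iff $x = 0$; and \emph{Repeat} (the identity gate copying $x$ to the next layer) is realized by $w = 1$, $b = \tfrac12$, since $x \ge \tfrac12$ iff $x = 1$. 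In each case the perceptron output matches the gate's truth table on all of $\{0,1\}^k$, which completes the proof.

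I do not expect any real obstacle here; the lemma is a standard folklore fact, and the ``hard part'' is merely bookkeeping—choosing thresholds bounded away from the attainable values of $\langle w,x\rangle$ on Boolean inputs so the result is robust and the inequality direction is unambiguous. If one wanted to be pedantic, one could additionally remark that these gates have bounded weights and thresholds (all in $\{-1,0,1\}$ except the AND threshold, which is $k-\tfrac12$), so the construction is efficient and the resulting network in Lemma~\ref{lem:boolcircuits} has polynomially bounded description size; but that is not needed for the statement as phrased.
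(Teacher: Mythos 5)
Your proposal is correct and takes essentially the same approach as the paper: an explicit, gate-by-gate choice of weight vector $w$ and threshold $b$ followed by a direct truth-table check. The only cosmetic difference is that you use half-integer thresholds (e.g.\ $b = k - \tfrac12$ for AND) to sidestep any strict/non-strict inequality ambiguity, whereas the paper uses integer thresholds (e.g.\ $w=(1,1)$, $b=2$ for two-input AND), which also works because the perceptron as defined fires on $\langle w,x\rangle - b \ge 0$.
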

\begin{proof}
\begin{figure}[h]
\centering
\includegraphics[page=4, height=60pt, trim = 0pt 420pt 20pt 0pt]{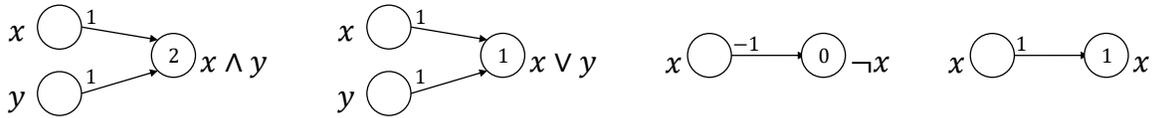}
\caption{Implementation of Boolean gates using perceptrons}
\label{fig:bool}
\end{figure}
Figure~\ref{fig:bool} contains the construction of the appropriate gates.
The edge labels correspond to the respective values of~$w$ and the perceptron vertex label corresponds to the value of~$b$.
For example,~$x\wedge y$ is realized by~$w=(1,1)$ and~$b=2$ as~$x\wedge y = 1$ if and only if~$1\cdot x + 1\cdot y \geq 2$.
\end{proof}

Since the family of Boolean gates described in Lemma~\ref{lem:percuniversal} is universal (that is, any Boolean function can be written as a combination of them), it implies Lemma~\ref{lem:boolcircuits}.

\section{Non-Replicable Backdoors from Lattice Problems}
\label{app:SIS}

We instantiate the construction in Theorem~\ref{thm:sig}  with particularly ``nice'' digital signatures based on lattices that give us circuits that ``look like'' naturally trained neural networks. However, we are not able to formalize this and prove undetectability (in the sense of Definition~\ref{def:undetectability}).

We use the lattice-based signature scheme of Cash, Hofheinz, Kiltz and Peikert~\cite{CHKP10} that is strongly unforgeable assuming that finding short vectors in worst-case lattices is hard. The signature scheme of \cite{CHKP10} has the following structure: the signature of a message $\mathbf{m} \in \{0,1\}^{k}$ is a vector $\boldsymbol{\sigma} \in \{0,1\}^{\ell}$ (for some $k,\ell$ related to the security parameter) such that 
\begin{align}\label{eqn:SIS} \mathbf{m} \cdot \mathbf{A}_i \cdot \boldsymbol{\sigma} = y_i \pmod{q}  
\end{align}
for $i\in [n]$,
where the matrices $\mathbf{A}_i$ and numbers ${y}_i$ are part of the verification key, and the equations  are over $\mathbb{Z}_q$ where $q$ is a power of two. The secret key, a ``trapdoor basis'' related to the matrices $\mathbf{A}_i$, helps us compute signatures for any message $\mathbf{m}$. 
Hardness of finding signatures  arises from the hardness of finding {\em small} (here, $0$-$1$) solutions to systems of modular linear equations.  While an honest signer can find exact solutions to equation~(\ref{eqn:SIS}), it turns out that a forger, even one that is given signatures of polynomially many messages $\mathbf{m}_j$, cannot produce even an {\em approximate solution} to equation~(\ref{eqn:SIS}). That is, there is an $\alpha \in [0,1/2)$ such that it is hard to produce $\mathbf{m}^* \in \{0,1\}^k$, $\sigma^* \in \{0,1\}^{\ell}$ such that 
\begin{align} \label{eqn:approxSIS}
\mathsf{dist}\big( \mathbf{m}^* \cdot \mathbf{A}_i \cdot \boldsymbol{\sigma}^* - {y}_i, q\mathbb{Z} \big) \leq \alpha q
\end{align}
for all $i$. Roughly speaking, this says that expression $\mathbf{m}^* \cdot \mathbf{A}_{i} \cdot \boldsymbol{\sigma}^*$ evaluates to a number that is very close to $y_i$ modulo $q$.


We are able to write down a depth-$4$ network with perceptron and sine activations (see Figure~\ref{fig:cosine}) that implements signature verification using two observations: 
\begin{itemize}
    \item  signature verification can be written down as an AND of several modular {\em linear} equations in the tensor product $\mathbf{m} \otimes \boldsymbol{\sigma}$;  and 
    \item modular computations are captured cleanly by sine nonlinearities, namely $$f(\mathbf{x}) = \mathsf{sin}\bigg(\frac{\pi (\langle \mathbf{x},\mathbf{w}\rangle - w_0)}{q}\bigg)~.$$
\end{itemize} 

\noindent
See Figure~\ref{fig:cosine} for the network. The input space of the network is $\{0,1\}^{k+\ell}$.
\begin{itemize}
    \item The first layer uses perceptrons, as in Lemma~\ref{lem:boolcircuits}, to implement the product of every pair of coordinates, one from $\mathbf{m}$ and the other from $\boldsymbol{\sigma}$. The output of this layer is the tensor product $\mathbf{m} \otimes \boldsymbol{\sigma}$. Now, the check is 
    \[ \mathsf{dist}\big( (\mathbf{B} \cdot (\mathbf{m}^* \otimes \boldsymbol{\sigma}^*))_i - {y}_i, q\mathbb{Z} \big) \leq \alpha q \]
    for some matrix $\mathbf{B} \in \mathbb{Z}_q^{n\times k\ell}$ that can be computed from $(\mathbf{A}_1,\ldots,\mathbf{A}_n)$.
    \item The second layer first computes a linear combination of the outputs of the first layer and subtracts it from $\mathbf{y}$. For example, the second layer in Figure~\ref{fig:cosine} corresponds to 
    \[ \mathbf{B} = \left[ \begin{array}{cccc} 
    2 & 1 & 0 & 4 \\ 
    0 & 5 & -1 & -1 \\ 
    2 & 0 & 0 & 3 
    \end{array} \right] \hspace{.2in} \mbox{and} \hspace{.2in} \mathbf{y} = 
    \left[ \begin{array}{cccc} 
    1 \\ 
    -2 \\ 
    -3
    \end{array} \right] \]
    It then computes the sine function $g(z) = \mathsf{sin}(\pi z/q)$ on each coordinate of the result. Note that the absolute value of the output of the sine function is at most $\alpha' := \mathsf{sin}(\pi\alpha)$ if and only if 
     \[ \mathsf{dist}\big( (\mathbf{B} \cdot (\mathbf{m}^* \otimes \boldsymbol{\sigma}^*))_i - {y}_i, q\mathbb{Z} \big) \leq \alpha q \]
    \item The third layer is a perceptron layer and has the function of thresholding the input. It turns numbers with absolute value at most $\alpha'$ into $1$ and others into $0$.
    \item Finally, the fourth layer is also a perceptron layer that computes the AND of all the bits coming out of the third layer.
\end{itemize}

\begin{figure}
\centering
\includegraphics[page=1, height=250pt]{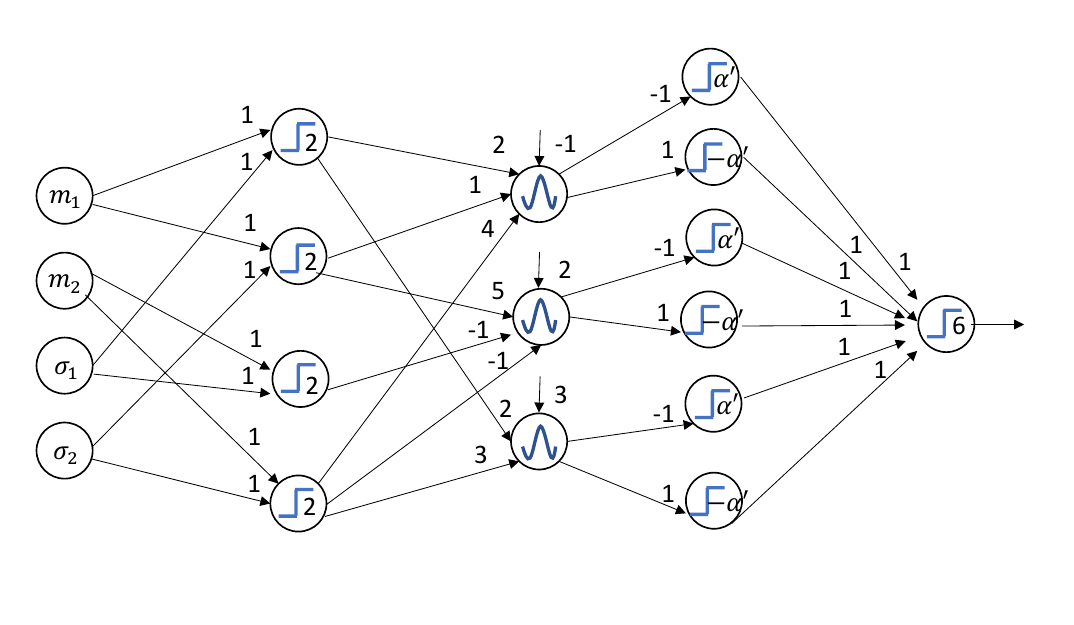}
\caption{Depth-$4$ perceptron-sine network implementing signature verification. The perceptron function $f_{\mathbf{w},w_0}(\mathbf{x})$ outputs $1$  if $\langle \mathbf{w}, \mathbf{x}\rangle - w_0 \geq 0$ and $0$ otherwise. The sine function $g_{\mathbf{w},w_0}(\mathbf{x})$ outputs $\mathsf{sin}\big(\pi (\langle \mathbf{w}, \mathbf{x}\rangle-w_0)/q)$.}
\label{fig:cosine}
\end{figure}

\fi 

\end{document}